\newtheorem{theorem}{Theorem}
\newtheorem{definition}{Definition}
\newtheorem{hypothesis}{Hypothesis}
\newtheorem{remark}{Remark}
\newtheorem{lemma}{Lemma}
\begin{document}
    %

    \title{Less is More: Efficient Black-box Attribution via Minimal Interpretable Subset Selection}

    \author{Ruoyu Chen$^{\orcidlink{0000-0001-7630-7154}}$,
        Siyuan Liang$^{\orcidlink{0000-0002-6154-0233}}$,
        Jingzhi Li$^{\orcidlink{0000-0001-7054-9267}}$,
        Shiming Liu,
        Li Liu$^{\orcidlink{0000-0002-2011-2873}}$,~\IEEEmembership{Senior Member,~IEEE},
        Hua Zhang$^{\orcidlink{0000-0002-7627-4142},\textrm{\Letter}}$,\\ and Xiaochun Cao$^{\orcidlink{0000-0001-7141-708X},\textrm{\Letter}}$,~\IEEEmembership{Senior Member,~IEEE}
        \thanks{
            Ruoyu Chen and Hua Zhang (Corresponding Author) are with the Institute of Information Engineering, Chinese Academy of Sciences, Beijing 100093, China, and also with the  University of Chinese Academy of Sciences, Beijing 100049, China (Email: \href{mailto:chenruoyu@iie.ac.cn}{chenruoyu@iie.ac.cn},  \href{mailto:zhanghua@iie.ac.cn}{zhanghua@iie.ac.cn}).\\
            Siyuan Liang is with the College of Computing and Data Science, Nanyang Technological University, 639798, Singapore (Email: \href{mailto:pandaliang521@gmail.com}{pandaliang521@gmail.com}).\\
            Jingzhi Li is with the School of Artificial Intelligence, University of Science and Technology Beijing, Beijing 100083, China (Email: \href{mailto:lijingzhi@ustb.edu.cn}{lijingzhi@ustb.edu.cn}).\\
            Shiming Liu is with the Department of Mechanical Engineering, Imperial College London, UK (Email: \href{mailto:shiming.liu17@imperial.ac.uk}{shiming.liu17@imperial.ac.uk}). \\
            Li Liu is with the Center for Machine Vision and Signal Analysis (CMVS), University of Oulu, Finland (Email: \href{mailto:li.liu@oulu.fi}{li.liu@oulu.fi}).\\
            Xiaochun Cao (Corresponding Author) is with the School of Cyber Science and Technology, Shenzhen Campus of Sun Yat-sen University, Shenzhen 518107, China (Email: \href{mailto:caoxiaochun@mail.sysu.edu.cn}{caoxiaochun@mail.sysu.edu.cn}).}
    }


    \IEEEtitleabstractindextext{%
        \begin{abstract}
            \justifying
            To develop a trustworthy AI system, it is essential to understand its behavior using attribution methods, which aim to identify the input regions that most influence the model’s decisions. The primary task of existing attribution methods lies in efficiently and accurately identifying the relationships among input-prediction interactions. Particularly when the input data is discrete, such as images, analyzing the relationship between inputs and outputs poses a significant challenge due to the combinatorial explosion.
            To solve this issue, we identified a diminishing marginal effect between inputs and outputs, whereby the effectiveness of attribution does not proportionally increase as more inputs are added. In this paper, we propose a novel and efficient black-box attribution mechanism, \textsc{\textbf{LiMA}} (\textbf{L}ess input \textbf{i}s \textbf{M}ore faithful for \textbf{A}ttribution), which reformulates the attribution of important regions as an optimization problem for submodular subset selection.
            First, to accurately assess interactions, we design a submodular function that quantifies subset importance and effectively captures their impact on decision outcomes. Then, efficiently ranking input sub-regions by their importance for attribution, we improve optimization efficiency through a novel bidirectional greedy search algorithm. \textsc{\textbf{LiMA}} identifies both the most and least important samples while ensuring an optimal attribution boundary that minimizes errors. Extensive experiments on eight foundation models (CLIP, ImageBind, LanguageBind, QuiltNet, \textit{etc.}) and six datasets (ImageNet, VGG-Sound, CUB-200-2011, \textit{etc.}) demonstrate that our method provides faithful interpretations with fewer regions and exhibits strong generalization, shows an average improvement of 36.3\% in Insertion and 39.6\% in Deletion.
            Additionally, it achieves state-of-the-art attribution faithfulness evaluation metrics (Insertion, Deletion, and average highest confidence) on six datasets. Our method also outperforms the naive greedy search in attribution efficiency, being 1.6 times faster.
            Furthermore, when explaining the reasons behind model prediction errors, the average highest confidence achieved by our method is, on average, 86.1\% higher than that of state-of-the-art attribution algorithms.
            The code is available at \url{https://github.com/RuoyuChen10/LIMA}.
        \end{abstract}

        \begin{IEEEkeywords}
            Interpretable AI, black-box attribution mechanism, multimodal interpretation, submodular subset selection
    \end{IEEEkeywords}}

    \maketitle

    \IEEEdisplaynontitleabstractindextext

    %
    \IEEEpeerreviewmaketitle

    \section{Introduction}\label{sec:introduction}

\IEEEPARstart{E}{xplainable AI} (XAI) techniques have gained significant attention for enabling the transparent and reliable deployment of trustworthy AI systems in real-world applications~\cite{troncoso2025new,rong2024towards,patricio2023explainable,arrieta2020explainable,chen2023sim2word,gao2024going}, particularly in high-stakes domains such as healthcare~\cite{patricio2023explainable} and autonomous driving~\cite{chen2024end}. The primary objective of XAI is to enhance our understanding of intelligent models, particularly by uncovering the relationships between predictions and input data~\cite{chen2024less,novello2022making,deng2024unifying,chen2024interpreting}. To elucidate these associations, attribution-based methods~\cite{nam2025illuminating,luo2024local, mandler2024review,dwivedi2023explainable} have been developed to interpret the black-box deep neural networks by identifying the contribution of each input feature to the model's predictions. However, attribution methods for black-box AI systems are still an open problem.

Several advanced attribution methods have been developed, including propagation-based~\cite{bach2015pixel}, gradient-based~\cite{sundararajan2017axiomatic, selvaraju2020grad, khorram2021igos++, zhao2024gradient}, Shapley value-based~\cite{lundberg2017unified, sun2023explain}, and perturbation-based~\cite{petsiuk2018rise, novello2022making} mechanisms. The primary challenge in achieving perfect fidelity in attribution methods lies in accurately capturing the interactive relationships between inputs and predictions~\cite{chen2024defining, liu2023towards}. While existing attribution methods have yielded notable results, they still face several limitations that restrict their applicability in specific scenarios: 1) Computational efficiency, evaluating input-prediction interactions is a combinatorial explosion problem, making it challenging to achieve both efficient and faithful attribution. 2) Inaccurate contribution estimation, many attribution methods use sampling-based approximation strategies to estimate input contributions, which can lead to noisy attribution maps. 3) Region redundancy, when certain subregions are misattributed, either contributing less to, or even negatively affecting, the model’s response to the correct category. This issue becomes more pronounced in the case of erroneous predictions. 


\begin{figure*}[!t]
    \vspace{-10 pt}
    \centering
    \includegraphics[width = \textwidth]{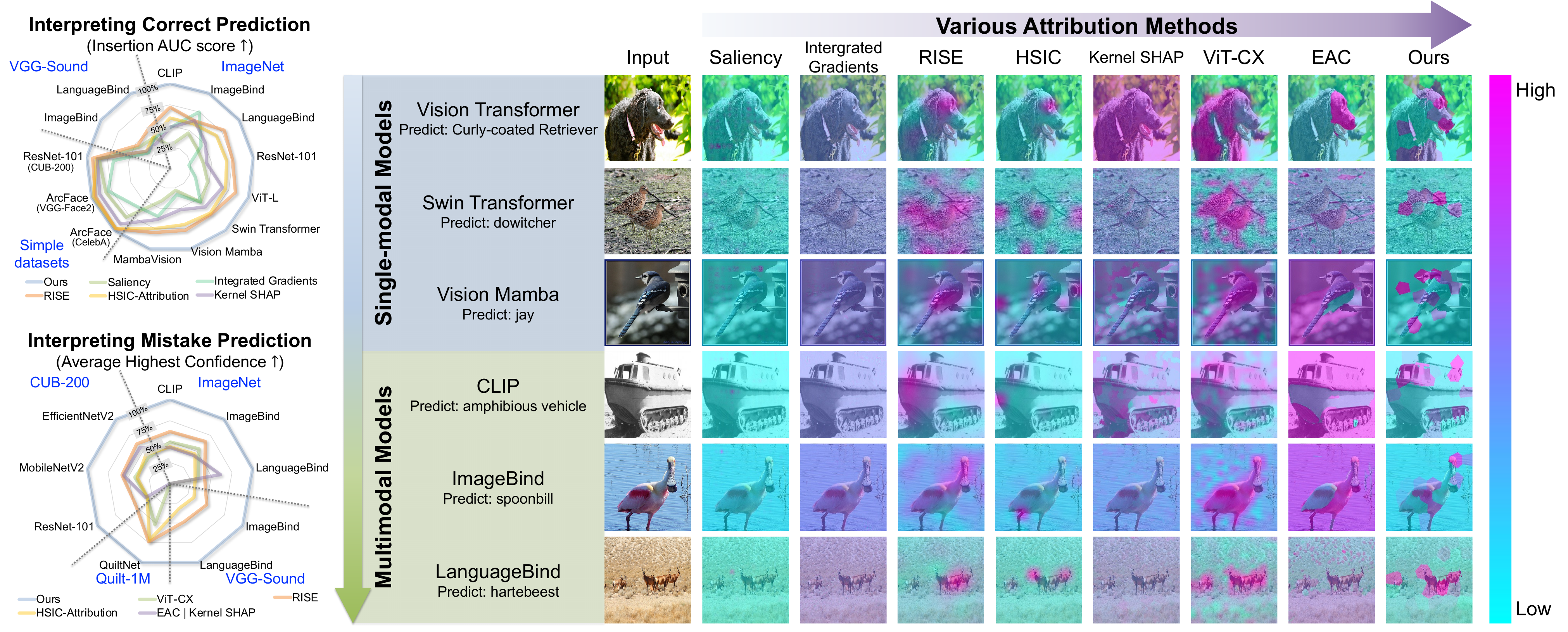}
    \caption{
    The left panel illustrates the Insertion and Average Highest Confidence metrics for various attribution mechanisms when attributing the model’s correct and incorrect predictions. Our method shows significant improvements across different datasets and models. The right panel shows the attribution maps of different methods, where our approach avoids noise and unnecessary region redundancy.
    }
    \label{motivation}
    \vspace{-10pt}
\end{figure*}

The core of the attribution problem is to reveal the importance of different regions to the decision outcome and to identify the key regions that have the most significant impact on the final decision. Inspired by submodular subset selection~\cite{fujishige2005submodular}, which aims to maximize value by selecting a limited subset, we seek to enhance model interpretability by selecting fewer sub-regions. We identified a diminishing marginal effect between inputs and outputs, whereby the effectiveness of attribution does not
proportionally increase as more inputs are added. Motivated by this approach, the attribution problem can be redefined as a sub-region sorting problem. Specifically, we aim to gradually expand the set of sub-regions, generating an ordered sequence of subsets, thereby systematically addressing the attribution problem. Fig.~\ref{motivation} shows the attribution results of several popular methods across different models and images, highlighting issues such as background noise and region redundancy.

In this paper, we propose \textsc{\textbf{LiMA}} (\textbf{L}ess input \textbf{i}s \textbf{M}ore faithful for \textbf{A}ttribution), a novel and efficient black-box attribution mechanism based on submodular subset selection. We reformulate the attribution problem as a submodular subset selection task to identify the most important regions that influence model decisions, aiming to achieve higher interpretability with fewer, yet more precise, regions. 
\textsc{\textbf{LiMA}} first sparsifies the input into a limited set of fine-grained elements using a semantic or patch-based division method, achieving faithful attribution by iteratively selecting fewer elements. To accurately assess input interactions, a novel submodular function is introduced to quantify the importance of subsets, and we consider four key aspects: semantic consistency, collective effect, the model’s prediction confidence, and the effectiveness of regional semantics. Furthermore, to enhance attribution efficiency, we propose a novel bidirectional greedy search algorithm that simultaneously identifies the most and least important elements.

Extensive experiments on eight foundation models (CLIP~\cite{radford2021learning}, ImageBind~\cite{girdhar2023imagebind}, LanguageBind~\cite{zhu2024languagebind}, QuiltNet~\cite{ikezogwo2024quilt}, \textit{etc.}) and six datasets (ImageNet~\cite{deng2009imagenet}, VGG-Sound~\cite{chen2020vggsound}, CUB-200-2011~\cite{welindercaltech}, Celeb-A~\cite{liu2015deep}, VGG-Face2~\cite{cao2018vggface2}, and LC25000~\cite{borkowski2019lung}) demonstrate that our method achieves state-of-the-art attribution faithfulness evaluation metrics (Insertion, Deletion, and average highest confidence) with fewer regions and exhibits strong generalization, shows an average improvement of 36.3\% in Insertion and 39.6\% in Deletion. As shown in the radar chart in Fig.~\ref{motivation}, compared to the most advanced attribution methods, our approach shows an average improvement of 36.3\% in Insertion. When explaining the reasons behind model prediction errors, the average highest confidence achieved by our method is, on average, 86.1\% higher than that of state-of-the-art attribution algorithms. Furthermore, our attribution efficiency is 1.6 times greater than that of the naive greedy search. 

Our contributions can be summarized as follows:
\begin{itemize}
    \item We reformulate the attribution problem as a submodular subset selection problem, aiming to enhance interpretability by identifying a set of smaller, more precise, and fine-grained regions.
    \item A novel submodular mechanism is introduced to evaluate the importance of subsets, enhancing the fine-grained attribution regions generated by existing algorithms and improving the identification of causes behind image prediction errors. Furthermore, we propose a bidirectional greedy search algorithm that balances attribution efficiency and faithfulness.
    \item Our analysis shows that as the pre-training scale and model parameters increase, or when models make incorrect predictions, the interaction between input elements becomes more complex, which makes accurate attribution more challenging.
    \item The proposed method demonstrates strong versatility, enhancing interpretability across various models and datasets. Experimental results indicate that it not only provides high-quality attribution for correctly predicted samples but also effectively identifies the causes of errors in incorrectly predicted samples, particularly for multimodal foundation models.
\end{itemize}

A preliminary version of this work~\cite{chen2024less} has been accepted for oral presentation at ICLR 2024. The current version presents several significant advancements over the previous one:
\begin{itemize}
    \item \textbf{Method optimization:} (1) We eliminate the reliance on a prior saliency map by adopting super-pixel segmentation or the Segment Anything approach. This produces semantically richer sub-regions and enhances attribution quality. (2) We introduce a new strategy for assessing the importance scores of ranked sub-regions, which enhances visualization and enables more accurate quantification of their importance. (3) We propose a bidirectional greedy search method to optimize attribution efficiency, which is particularly beneficial for models with a large number of parameters. 
    \item \textbf{Analytically:} (4) We observe that as the model’s pre-training scale and the number of parameters increase—or when the model produces incorrect predictions—the interaction between input elements becomes more complex, reducing the effectiveness of traditional attribution methods. (5) We analyze this problem from an interaction perspective and conclude that our proposed mechanism is better suited for such challenging scenarios. A series of experiments are conducted to validate the effectiveness of our approach. 
    \item \textbf{Exhaustive experiments:} (6) We evaluate our attribution approach on models with multiple modalities and architectures, including several multimodal foundation models, as well as ViT and Mamba architectures. The results demonstrate that our approach generalizes well across both multimodal models and various model architectures. (7) We further validate the effectiveness of our approach on multiple datasets, including the large-scale image dataset ImageNet, the medical image dataset LC25000, and the audio dataset VGG-Sound. These results illustrate the versatility of our interpretable attribution method across different data modalities.
\end{itemize}


The rest of the paper is organized as follows. In Section~\ref{related_work}, we review related works. Section~\ref{preliminaries} provides the preliminaries of submodular theory. In Section~\ref{method}, we elaborate on our proposed \textsc{\textbf{LiMA}} method, supported by theoretical analysis. Section~\ref{method_analysis} offers a methodological analysis to highlight the advantages of our approach compared to other mechanisms. The experimental results, along with a visualization analysis, are presented in Section~\ref{experiment}, demonstrating the effectiveness of the proposed method. Finally, we conclude the paper in Section~\ref{conclusion}.

    \section{Related Work}\label{related_work}

\subsection{Attribution Methods}


\textbf{Inner propagation, activation, and gradient-based methods} analyze the internal responses of the network to identify the most important regions of the input. These methods are often referred to as white-box attribution methods because they require access to the model’s internals. Some methods attribute importance by propagating scores back through the layers until reaching the input layer~\cite{bach2015pixel}. 
Other methods rely solely on decision gradients for attribution~\cite{simonyan2014deep}, but these approaches evaluate only the individual effects of pixels or features, without accounting for their collective impact~\cite{chendefining}. 
Some methods combine network activations and gradients for attribution, including Grad-CAM~\cite{selvaraju2020grad}, Grad-CAM++~\cite{chattopadhay2018grad}, Score-CAM~\cite{wang2020score}, ViT-CX~\cite{xie2023vit}, and Grad-ECLIP~\cite{zhao2024gradient}. However, the effectiveness of these methods depends heavily on the selection of network layers, which significantly influences the quality of interpretation~\cite{novello2022making}. Moreover, these methods are largely heuristic and lack theoretical guarantees. Path-based methods~\cite{sundararajan2017axiomatic,khorram2021igos++} achieve attribution by selecting a specific integral path. However, this approach is highly sensitive to parameters such as baseline and path choice. Additionally, since the gradient must be integrated, applying this method to large models may be limited by computational resources.

\textbf{Shapley value estimation-based methods} attribute model predictions by estimating the Shapley value~\cite{shapley1953value} of each input region or feature. This is accomplished by calculating the marginal contribution of each region or feature and combining these contributions linearly. However, calculating the Shapley value is an exponential complexity problem. To address this, some model-agnostic methods estimate SHAP values by sampling subsets of features. For example, Kernel SHAP~\cite{lundberg2017unified} determines feature importance through sampling and the application of specially weighted linear regression. EAC~\cite{sun2023explain} estimates Shapley values by sampling sub-regions defined by the Segment Anything model and using a linear surrogate model. In addition, model-specific methods have been developed to reduce redundant subsets by leveraging unique model structures. HarsanyiNet~\cite{chen2023harsanyinet}, for instance, employs a specialized network based on a CNN architecture to accurately estimate Shapley values in a single propagation, though its scalability is limited. 
Despite these innovations, the exact computation of Shapley values remains generally impractical due to the exponential complexity associated with increasing data dimensions
, making it nearly impossible to apply these methods to large models. Additionally, Kumar \textit{et al.}~\cite{kumar2020problems} observed that feature importance evaluation methods based on the Shapley value estimation may risk underestimating the importance of relevant features, resulting in all such features being assigned lower importance values.

\textbf{Peturbation-based methods} operate under the assumption that the internal parameters of the model are unknowable. They assess the importance of input regions by perturbing the input and observing the resulting changes in the output. LIME~\cite{ribeiro2016should} locally approximates the predictions of a black-box model with a linear model, by only slightly perturbing the input. RISE~\cite{petsiuk2018rise} perturbs the model by inputting multiple random masks and weighting the masks to get the final saliency map. HSIC-Attribution method~\cite{novello2022making} measures the dependence between the input image and the model output through Hilbert Schmidt Independence Criterion (HSIC) based on RISE. 
Although these methods are highly portable and applicable to various network architectures, their attribution performance is affected by the grid size used for perturbation~\cite{novello2022making}. If the grid size is too large, the attribution performance may decline, resulting in insufficient granularity in the attribution regions~\cite{chen2024less}.

\textbf{Search-based methods} attribute the importance of different regions by ranking the sub-regions in order. Shitole \textit{et al.}~\cite{shitole2021one} defined minimal sufficient explanation, which involves identifying a limited region whose confidence response to the model is at least 90\% of the response of the entire image. They utilized a heuristic beam search to find these regions. However, this method is only effective when the model exhibits a high prediction confidence, and searching for subregions based solely on changes in confidence is often unstable~\cite{chen2024less}. Chen \textit{et al.}~\cite{chen2024less} addressed the challenge of searching for high-confidence regions by modeling the image attribution problem as a submodular subset selection problem. However, the performance of this method is influenced by the quality of a prior saliency map.

\textbf{Multimodal Attribution:} Traditional explanation methods mainly focus on single-modal models like DNNs and CNNs, with emerging approaches for multi-modal models. Some studies have investigated attention mechanisms in multimodal ViT models for attributing input or observation influence on outputs~\cite{chefer2021transformer}, but these methods often lack transferability as they rely on access to internal parameters. Darcet \textit{et al.}~\cite{darcet2024vision} leveraged attention maps in ViT as guidance and addressed artifacts by incorporating additional tokens. However, attention maps merely reflect intermediate response strengths without explicitly capturing input-output relationships. Zhao \textit{et al.}~\cite{zhao2024gradient} attributed the CLIP model using gradients and feature maps, achieving better explanation results than self-attention. Gandelsman \textit{et al.}~\cite{gandelsman2024interpreting} approximated CLIP token attribution via MLP and multi-head attention decomposition, but the results were limited by omitting layer normalization.


In this paper, we propose a novel solution leveraging submodular subset selection. By sparsifying the input, we refine it into more fine-grained regions and identify the most important sub-regions by maximizing the marginal contribution score. This approach yields more robust interpretability results compared to other attribution methods and strong generation across various modality tasks (natural images, medical images, and audio), especially for samples with complex feature interactions.

\subsection{Error Explanation} Identifying and explaining a model’s errors allows us to understand its flaws better and address them more efficiently. 
Wu \textit{et al.}~\cite{wu2023discover} identified unstable concepts as false attributes and intervened in these concepts to correct model decisions. However, this process requires precise concept annotations, which is cumbersome. Abid \textit{et al.}~\cite{abid2022meaningfully} evaluated the impact of specific concepts on misclassified images using concept activation vectors, thereby explaining model prediction errors from a conceptual perspective. However, concerns remain about whether traditional unimodal models can truly learn explicit concepts and accurately assess their importance~\cite{ramaswamy2023overlooked}. In this paper, our method explains an individual misclassified sample at the input level by directly identifying the specific input region that led to the incorrect prediction. While this may be challenging for humans to understand, it effectively and accurately pinpoints the region where the model was misclassified, offering guidance for future model corrections.

\subsection{Submodular Optimization} Submodular optimization~\cite{fujishige2005submodular} has been successfully studied in multiple application scenarios~\cite{kothawade2022talisman}, for example, He \textit{et al.}~\cite{he2024efficient} combined submodular subset selection with a loss function and Shapley value to evaluate and select modality importance in multimodal learning. A small number of studies have also applied this theory to explore model interpretability. Catav \textit{et al.}~\cite{catav2021marginal} utilized the maximum marginal contribution from all possible subsets to explain the data’s contribution. Elenberg \textit{et al.}~\cite{elenberg2017streaming} frame the interpretability of black-box classifiers as a combinatorial maximization problem, it achieves similar results to LIME and is more efficient. Chen \textit{et al.}~\cite{chen2018learning} propose a learnable network for instance-wise feature selection to explain deep models, but its scalability is limited. Pervez \textit{et al.}~\cite{pervez2022scalable} proposed a simple subset sampling alternative based on conditional Poisson sampling, which they applied to interpret both image and text recognition tasks. 
However, these methods only retained the selected important pixels and observed the recognition accuracy~\cite{chen2018learning}. Chhabra \textit{et al.}~\cite{chhabra2024data} interpreted the feature space through submodular subset selection to determine which data is helpful to improve model performance. 
In this paper, we propose an attribution method based on submodular subset selection theory, achieving SOTA performance according to standard attribution metrics. Our method not only identifies the causes behind incorrect model decisions but also demonstrates strong interpretability and error detection across diverse datasets and networks, highlighting its broad applicability.


\subsection{Greedy Search}

The greedy search algorithm~\cite{nemhauser1978analysis}, the most commonly used approach, optimizes submodular functions by iterating through unselected examples and calculating the gain in function value from adding each to the selected set. Some accelerated greedy search algorithms have been proposed. Mirzasoleiman \textit{et al.}~\cite{mirzasoleiman2015lazier} proposed a stochastic-greedy algorithm, which randomly samples a portion of the unselected examples and performs greedy selection on that subset. Joseph \textit{et al.}~\cite{joseph2019submodular} partition the samples into equal-sized, non-overlapping sets, perform greedy selection within each, then merge and further select elements. 
Note that most of the above acceleration algorithms focus on subset selection rather than the order of samples. Consequently, they are unsuitable for attribution tasks requiring ordered subset search~\cite{chen2024less}. 
In this paper, we propose a bidirectional greedy search strategy to reduce the inference costs of attribution large models, which simultaneously identifies both the most and least important samples. During the greedy search for critical samples, it generates candidates for the least important ones using a submodular function, then selects the least important sample from these candidates. By combining the ordered sets of both critical and non-critical samples, our method produces a more accurate ordered subset with minimal computational overhead.
    \section{Preliminaries}\label{preliminaries}

In this section, we first establish some definitions. Considering a finite set $V$, given a set function $\mathcal{F}:2^{V} \to \mathbb{R}$ that maps any subset $S \subseteq V$ to a real value. When $\mathcal{F}$ is a submodular function, its definition is as follows:

\begin{definition}[Submodular function~\cite{edmonds1970submodular}]
    For any set $S_a \subseteq S_b \subseteq V$. Given an element $\alpha$, where $\alpha \in V \setminus S_{b}$. The set function $\mathcal{F}$ is a submodular function when it satisfies monotonically non-decreasing ($\mathcal{F}\left( S_{b} \cup \{\alpha\} \right) - \mathcal{F}\left( S_{b} \right) \ge 0$) and:
    \begin{equation}\label{eq:sub}
        \mathcal{F}\left( S_{a} \cup \{\alpha\} \right) - \mathcal{F}\left( S_{a} \right) \ge \mathcal{F}\left(S_{b} \cup \{ \alpha \}\right) - \mathcal{F}\left(S_{b} \right).
    \end{equation}
\end{definition}


\begin{figure*} 
    \centering
    \begin{overpic}[width=\textwidth,tics=8]{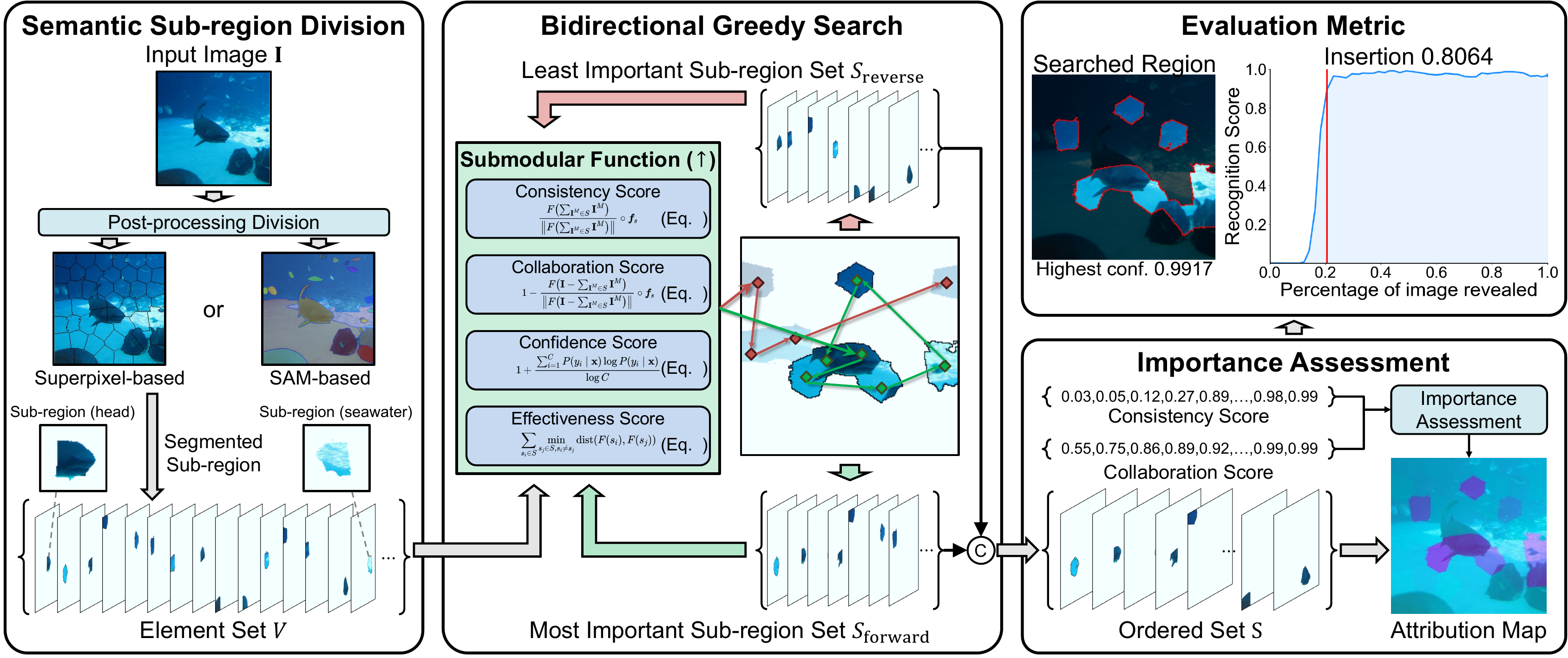}
        \put (44.2,27.55) {\scriptsize \ref{consistency_function}}
        \put (44.2,22.8) {\scriptsize \ref{collaboration_function}}
        \put (44.2,18.0) {\scriptsize \ref{confidence_function}}
        \put (44.2,13.1) {\scriptsize \ref{r_function}}
    \end{overpic}  
    \caption{The framework of the proposed \textsc{\textbf{LiMA}} method. We begin by performing semantic sub-region division on the image, either using superpixel-based methods or the Segment Anything algorithm. Next, we apply a bidirectional greedy search algorithm along with a designed submodular function to simultaneously identify the most and least important samples, ranking these sub-regions accordingly. Finally, based on sub-region rankings, we concatenate the most important sample set with the least important sample set and evaluate the importance of each sub-region using consistency and collaboration scores, resulting in enhanced regional visualization. Through the faithfulness metric, our method identifies few regional representations sufficient to activate the model response.}  
    \label{framework}
    \vspace{-10pt}
\end{figure*}


\noindent\textbf{Problem formulation:} We divide an input data $\mathbf{I}$ into a finite number of sub-regions with a division algorithm $\texttt{Div}(\cdot)$, denoted as $V = \texttt{Div}(\mathbf{I}) = \left\{ \mathbf{I}^{M}_{1}, \mathbf{I}^{M}_{2}, \cdots, \mathbf{I}^{M}_{m} \right\}$, where $M$ indicates a sub-region $\mathbf{I}^{M}$ formed by masking part of $\mathbf{I}$. Giving a monotonically non-decreasing submodular function $\mathcal{F}:2^V \to \mathbb{R}$, the attribution problem can be viewed as maximizing the value $\mathcal{F}\left(S \right)$ with limited regions. Mathematically, the goal is to select an ordered set $S$ consisting of a limited number $k$ of sub-regions in the set $V$ that maximize the submodular function $\mathcal{F}$:
\begin{equation}\label{eq:object}
    \max_{S \subseteq V, \left | S \right |  \le k}{\mathcal{F}(S)},
\end{equation}
we can transform the attribution problem into a subset selection problem, where the submodular function $\mathcal{F}$ relates design to interpretability. However, solving Eq.~\ref{eq:object} is typically an $\mathcal{NP}$-hard problem. Nemhauser \textit{et al.}~\cite{nemhauser1978analysis} proved that a greedy algorithm can produce a subset of values with optimality guarantees in polynomial time. 
Thus, a greedy search algorithm can be used to maximize the submodular function, thereby explaining the importance ranking of input subregions in the attribution task, and it runs in time $\mathcal{O}(k|V|)$.
The total number of samples required for inference is given by $k|V|-\frac{1}{2}k(k-1)$. Typically, we need to sort all subregions, which results in a total number of inferences given by $\frac{1}{2}|V|^2+\frac{1}{2}|V|$ and the algorithm runs in time $\mathcal{O}(|V|^2)$. The optimality bound of the solution $S$ is given by $\mathcal{F}(S) \ge \left(1 - 1/e\right) \mathcal{F}(S^{\ast})$, where $S^{\ast}$ denotes the optimal solution and $e$ is the base of the natural logarithm.

    \section{Proposed Method}\label{method}

In this section, we introduce a novel method for image attribution based on submodular subset selection theory. Section \ref{subsec_subregion_division} provides a detailed explanation of the sub-region division process. In Section \ref{submodular_function_design}, we present our custom-designed submodular function. Section \ref{alg_fast} outlines the attribution algorithm, which utilizes a bidirectional greedy search approach, while Section \ref{importance_assess} describes the procedure for assigning importance scores to the ordered set. Fig. \ref{framework} illustrates the overall framework of our approach.

\subsection{Sub-region Division}\label{subsec_subregion_division}

To obtain the interpretable region in an image, we partition the image $\mathbf{I} \in \mathbb{R}^{w \times h \times 3}$ into $m$ sub-regions $\mathbf{I}^M$ with a division algorithm $\texttt{Div}(\cdot)$, where $M$ indicates a sub-region $\mathbf{I}^{M}$ formed by masking part of image $\mathbf{I}$. The division algorithm determines the quality of the search space, thereby influencing the effectiveness of the attribution~\cite{chen2024less}. Our recommended division strategies $\texttt{Div}(\cdot)$ are as follows:
\begin{itemize}
    \item \textbf{Superpixel-based division} involves clustering pixels with similar characteristics. For image modality, methods like SLICO~\cite{achanta2012slic} can be employed to perform this division, thereby improving attribution efficiency.
    \item \textbf{Semantic-based division} can more effectively distinguish between different concepts and enhance human understanding. Segment Anything (SAM)~\cite{kirillov2023segment} can be used for zero-shot division. However, since the regions segmented by SAM may overlap, we propose a SAM-based division method. Details can be found in Section~\ref{SAM_based_division}, Algorithm~\ref{alg:sam_division} in the supplementary material.
    \item \textbf{Patch-based division} has been studied in traditional methods~\cite{noroozi2016unsupervised,petsiuk2018rise}, which can divide the input into regular patch regions.
\end{itemize}

We can select the appropriate division method for attribution based on the specific input modality.

\subsection{Submodular Function Design}\label{submodular_function_design}

In this section, we construct a submodular function to evaluate the order of importance of interpretable regions. To enhance the attribution effect across all samples, we impose four distinct constraints on the selection of sub-regions: consistency, collaboration scores, confidence, and effectiveness. These constraints are used to assess the importance of various subsets.

\textbf{Consistency Score:} 
We aim to make the representation of the identified image region consistent with the original semantics. Given a target semantic feature or mapping function, $\boldsymbol{f}_{s}$, we make the semantic features of the searched image region close to the target semantic features. We introduce the consistency score:
\begin{equation}\label{consistency_function}
    s_{\mathrm{cons.}} \left ( S, \boldsymbol{f}_{s} \right ) = \frac{ F\left ( \sum_{\mathbf{I}^{M} \in S}\mathbf{I}^{M} \right )} {\left \| F \left ( \sum_{\mathbf{I}^{M} \in S}\mathbf{I}^{M} \right ) \right \| } \circ \boldsymbol{f}_{s},
\end{equation}
where $F(\cdot)$ denotes a pre-trained feature extractor. The target semantic feature $\boldsymbol{f}_{s}$, can either adopt the features computed from the original image using the pre-trained feature extractor, expressed as $\boldsymbol{f}_{s} = F \left ( \mathbf{I} \right )$, or directly implement the fully connected layer of the classifier for a specified class.
By incorporating the $s_{\mathrm{cons.}}$, our method targets regions that reinforce the desired semantic response. This approach ensures a precise selection that aligns closely with our specific semantic goals.

\textbf{Collaboration Score:} Some individual elements may lack significant individual effects in isolation, but when placed within the context of a group or system, they exhibit an indispensable collective effect. Therefore, we introduce the collaboration score, which is defined as:
\begin{equation}\label{collaboration_function}
    s_{\mathrm{colla.}} \left ( S, \mathbf{I}, \boldsymbol{f}_{s} \right ) = 1 - \frac{ F \left (\mathbf{I} - \sum_{\mathbf{I}^{M} \in S}\mathbf{I}^{M} \right )}{\left \| F \left (\mathbf{I} - \sum_{\mathbf{I}^{M} \in S}\mathbf{I}^{M} \right ) \right \|} \circ \boldsymbol{f}_{s},
\end{equation}
where $F(\cdot)$ denotes a pre-trained feature extractor, $\boldsymbol{f}_{s}$ is the target semantic feature. By introducing the collaboration score, we can judge the collective effect of the element. By incorporating the $s_{\mathrm{colla.}}$, our method pinpoints regions whose exclusion markedly affects the model's predictive confidence. This effect underscores the pivotal role of these regions, indicative of a significant collective impact. Such a metric is particularly valuable in the initial stages of the search, highlighting regions essential for sustaining the model's accuracy and reliability.

\textbf{Confidence Score:} The model’s confidence in its own predictions is also closely related to interpretability~\cite{guo2020learn} and the entropy of the model output is a key indicator for measuring the uncertainty of the model’s predictions~\cite{shu2022test,guo2023handling}. Therefore, we use the entropy of the network output to calculate the confidence score. 
In the inference, the predictive uncertainty can be calculated as $u = -\frac{\sum_{i=1}^{C}P(y_i \mid \mathbf{x})\log{P(y_i \mid \mathbf{x})}}{\log{C}}$, where $u \in [0,1]$, $P(y_i \mid \mathbf{x})$ is the probability of a specific class output, $C$ is the total number of classes. Thus, the confidence score of the sample $\mathbf{x}$ predicted by the network can be expressed as:
\begin{equation}\label{confidence_function}
    s_{\mathrm{conf.}} \left ( \mathbf{x} \right ) = 1 - u = 1 + \frac{\sum_{i=1}^{C}P(y_i \mid \mathbf{x})\log{P(y_i \mid \mathbf{x})}}{\log{C}}.
\end{equation}

By incorporating the $s_{\mathrm{conf.}}$, we can ensure that the selected regions align closely with the In-Distribution (InD). This score acts as a reliable metric to distinguish regions from out-of-distribution, ensuring alignment with the InD.

\textbf{Effectiveness Score:} Adding different sub-regions may exhibit OR interactions, meaning they have similar marginal contribution scores~\cite{chen2024defining}. We expect to maximize the response of valuable information with fewer regions since some image regions have the same semantic representation. Given an element $\alpha$, and a sub-set $S$, we measure the distance between the element $\alpha$ and all elements in the set, and calculate the smallest distance, as the effectiveness score of the judgment element $\alpha$ for the sub-set $S$:
\begin{equation}\label{r_function_single}
    s_{e} \left ( \alpha \mid S \right ) = \min_{s_i \in S}{\mathrm{dist}\left( 
 F\left(\alpha\right),   F\left(s_i\right)\right)},
\end{equation}
where $\mathrm{dist}(\cdot, \cdot)$ denotes the equation to calculate the distance between two elements. 
Traditional distance measurement methods~\cite{wang2018cosface, deng2019arcface} are tailored to maximize the decision margins between classes during model training, involving operations like feature scaling and increasing angle margins. In contrast, our method focuses solely on calculating the relative distance between features, for which we utilize the general cosine distance.
$F(\cdot)$ denotes a pre-trained feature extractor. To calculate the element effectiveness score of a set, we can compute the sum of the effectiveness scores for each element:
\begin{equation}\label{r_function}
    s_{\mathrm{eff.}} \left ( S \right ) = \sum_{s_{i} \in S } \min_{s_j \in S, s_i \ne s_j}{\mathrm{dist}\left(  F\left(s_i\right), F\left(s_j\right)\right)}.
\end{equation}

By maximizing the $s_{\mathrm{eff.}}$, we aim to limit the selection of regions with similar semantic representations, thereby increasing the diversity and improving the overall quality of region selection.

\textbf{Submodular Function:} We construct our objective function for selecting elements through a combination of the above scores, $\mathcal{F}(S)$, as follows:
\begin{equation}\label{submodular_function}
    \begin{aligned}
        \mathcal{F}(S) = & \; \lambda_{1} s_{\mathrm{cons.}}\left ( S, \boldsymbol{f}_{s} \right ) + \lambda_{2} s_{\mathrm{colla.}}\left ( S, \mathbf{I}, \boldsymbol{f}_{s} \right ) \\
        &  + \lambda_{3} s_{\mathrm{conf.}} \left ( \sum_{\mathbf{I}^{M} \in S}\mathbf{I}^{M} \right ) + \lambda_{4} s_{\mathrm{eff.}} \left ( S \right ),
    \end{aligned}
\end{equation}
where $\lambda_{1}, \lambda_{2}, \lambda_{3}$, and $\lambda_{4}$ represent the weighting factors used to balance each score. According to the importance and experience of the scores, we set these parameters to $\lambda_{1}=20$, $\lambda_{2}=5$, $\lambda_{3}=0.05$, and $\lambda_{4}=0.01$ respectively.

\begin{lemma}[Diminishing returns]\label{lemma_submodular}
Consider two sub-sets $S_{a}$ and $S_{b}$ in set $V$, where $S_{a} \subseteq S_{b} \subseteq V$. Given an element $\alpha$, where $\alpha \in V \setminus S_{b}$. Assuming that $\alpha$ is contributing to model interpretation,
then, the function $\mathcal{F}(\cdot)$ in Eq.~\ref{submodular_function} is a submodular function and satisfies Eq.~\ref{eq:sub}.
\end{lemma}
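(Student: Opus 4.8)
The plan is to establish submodularity of $\mathcal{F}$ termwise: since a nonnegative linear combination of submodular functions is submodular (and likewise for monotonicity), it suffices to show each of the four component scores $s_{\mathrm{cons.}}$, $s_{\mathrm{colla.}}$, $s_{\mathrm{conf.}}$, $s_{\mathrm{eff.}}$ is monotonically non-decreasing and satisfies the diminishing-returns inequality~\eqref{eq:sub}. I would treat the summed-region notation $\sum_{\mathbf{I}^M \in S}\mathbf{I}^M$ as the ``composite image'' formed by unmasking exactly the sub-regions in $S$, so that $S_a \subseteq S_b$ corresponds to nested unmasked supports, and the marginal of adding $\alpha$ corresponds to further unmasking the patch $\alpha$.

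First I would handle $s_{\mathrm{cons.}}$ and $s_{\mathrm{colla.}}$ together, since both are (up to a sign and an additive constant) of the form ``normalized feature of an image composed from a region set, dotted with a fixed target $\boldsymbol{f}_s$.'' For $s_{\mathrm{cons.}}$ the relevant image support grows with $S$; for $s_{\mathrm{colla.}}$ it is the complement $\mathbf{I} - \sum_{\mathbf{I}^M\in S}\mathbf{I}^M$ that shrinks, and the $1 - (\cdots)$ flips the direction, so the two end up with the same structural monotonicity/diminishing behavior. The key analytic fact I would invoke is the empirically-observed diminishing marginal effect of inputs on the model response (the paper's central motivating assumption, here encoded in the hypothesis ``$\alpha$ is contributing to model interpretation''): unmasking a new patch $\alpha$ increases the alignment $\langle \text{normalized } F(\cdot),\ \boldsymbol{f}_s\rangle$ by a non-negative amount, and that increment is no larger when starting from the richer set $S_b$ than from the poorer set $S_a$. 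This gives both monotonicity and~\eqref{eq:sub} for these two terms.

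Next, $s_{\mathrm{conf.}}(\cdot) = 1-u$ depends on $S$ only through the composite image $\sum_{\mathbf{I}^M\in S}\mathbf{I}^M$, and the argument is the same shape as the consistency term: more unmasked evidence for the predicted class drives the output distribution toward a one-hot vector, lowering normalized entropy $u$, hence raising $s_{\mathrm{conf.}}$ monotonically, with marginal confidence gains that taper off as the image becomes more complete — again the diminishing-returns property. For the last term $s_{\mathrm{eff.}}$, I would argue directly and combinatorially rather than via the model: $s_{\mathrm{eff.}}(S) = \sum_{s_i\in S}\min_{s_j\in S,\,s_j\ne s_i}\mathrm{dist}(F(s_i),F(s_j))$. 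Adding an element can only decrease each existing per-element nearest-neighbor distance (a $\min$ over a larger index set) and adds one new non-negative term; monotonicity of the diversity functional and the concavity/diminishing structure of such ``sum of nearest-neighbor distances'' facility-location-type functions are standard, and I would cite/recall that this class is submodular.

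Finally I would assemble: with $\lambda_1,\dots,\lambda_4 \ge 0$, the closure of monotone submodular functions under nonnegative linear combinations yields that $\mathcal{F}$ is monotone non-decreasing and satisfies~\eqref{eq:sub}, i.e.\ $\mathcal{F}$ is submodular. \textbf{The main obstacle} is that the first three terms are not submodular for an arbitrary black-box $F$ — their submodularity genuinely rests on the diminishing-marginal-effect behavior of the network, so the crux of the proof is to state this dependence cleanly as the working hypothesis (``$\alpha$ contributes to interpretation'' $\Rightarrow$ non-negative, order-reversing marginals of the normalized-feature/confidence maps) and verify that under it the termwise inequalities hold; everything after that is the routine linear-combination bookkeeping, and only $s_{\mathrm{eff.}}$ is submodular unconditionally.
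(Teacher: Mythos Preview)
Your high-level strategy---prove the diminishing-returns inequality termwise and then invoke closure of submodularity under nonnegative linear combinations---is exactly what the paper does. For the first three scores the paper is more concrete than your proposal: rather than packaging everything as a single ``diminishing marginal effect'' hypothesis, it linearizes the network via a first-order Taylor expansion, writes each marginal explicitly (e.g.\ $\nabla G(S)\cdot\alpha / \|F(S)\|$ for $s_{\mathrm{cons.}}$), and then argues that the gradient saturates as the set grows. For $s_{\mathrm{conf.}}$ the paper in fact shows the marginal \emph{difference} is $\approx 0$ (so the term is essentially modular), not that it exhibits strict diminishing gains as you suggest. Your version is looser but in the same spirit, so this part is acceptable.

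There is, however, a genuine gap in your treatment of $s_{\mathrm{eff.}}$. The function $s_{\mathrm{eff.}}(S)=\sum_{s_i\in S}\min_{s_j\in S,\,j\ne i}\mathrm{dist}(F(s_i),F(s_j))$ is \emph{not} a facility-location function (facility location sums a max-similarity over the full ground set $V$, not a min-distance over the selected set $S$), and it is \emph{not} unconditionally monotone: with three collinear points at $0,1,100$, adding the middle point to $\{0,100\}$ drops $s_{\mathrm{eff.}}$ from $200$ to $101$. So you cannot simply ``cite/recall'' monotone submodularity here. The paper instead computes the marginal directly as $\min_{s_i\in S}\mathrm{dist}(F(\alpha),F(s_i)) - \varepsilon_S$, where $\varepsilon_S$ is the total shrinkage of the existing nearest-neighbor distances caused by inserting $\alpha$, and then compares the two pieces across $S_a\subseteq S_b$ (the $\min$ over a larger index set is no larger, and the paper argues $\varepsilon_{S_b}\ge\varepsilon_{S_a}$). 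You would need to reproduce this explicit computation rather than appeal to a standard class that does not actually cover this functional.
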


\begin{proof}
Please see \textit{suppl. material}~\ref{proof_of_submodular_lemma} for the proof.
\end{proof}

\begin{lemma}[Monotonically non-decreasing]\label{lemma_monotonically}
Consider a subset $S$, given an element $\alpha$, assuming that $\alpha$ is contributing to model interpretation. The function $\mathcal{F}(\cdot)$ of Eq.~\ref{submodular_function} is monotonically non-decreasing.
\end{lemma}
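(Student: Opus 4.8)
The plan is to prove the claim directly from the additive structure of Eq.~\ref{submodular_function}. Fix an arbitrary subset $S \subseteq V$ and an element $\alpha \in V \setminus S$, and split the marginal gain as $\mathcal{F}(S \cup \{\alpha\}) - \mathcal{F}(S) = \sum_{i=1}^{4} \lambda_i \Delta_i$, where $\Delta_i$ is the marginal gain of the $i$-th score. Since every $\lambda_i \ge 0$, it suffices to control the signs of the $\Delta_i$. The first move is to state precisely what the hypothesis ``$\alpha$ is contributing to model interpretation'' buys us, as three inequalities about the feature extractor $F$: (i) unmasking $\alpha$ does not decrease the cosine projection of the aggregated region feature onto the target $\boldsymbol{f}_s$; (ii) removing $\alpha$ from the masked-out complement does not increase the cosine projection of the complement feature onto $\boldsymbol{f}_s$; and (iii) unmasking $\alpha$ does not increase the normalized output entropy $u$. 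Each is the natural formalization of ``$\alpha$ carries evidence for the target semantics,'' and I would record as a standing regularity assumption that $\sum_{\mathbf{I}^{M} \in S}\mathbf{I}^{M}$ and its complement are valid inputs to $F$ with nonzero feature norm.

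Given (i)--(iii), three of the four terms are immediate: by (i) and Eq.~\ref{consistency_function}, $\Delta_1 = s_{\mathrm{cons.}}(S \cup \{\alpha\}, \boldsymbol{f}_s) - s_{\mathrm{cons.}}(S, \boldsymbol{f}_s) \ge 0$; by (ii), and because $s_{\mathrm{colla.}}$ in Eq.~\ref{collaboration_function} is one minus the complement projection, $\Delta_2 \ge 0$; and by (iii) with Eq.~\ref{confidence_function}, $\Delta_3 = (1 - u_{S \cup \{\alpha\}}) - (1 - u_{S}) \ge 0$. So $\lambda_1 \Delta_1 + \lambda_2 \Delta_2 + \lambda_3 \Delta_3 \ge 0$ with no further work.

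The remaining term $\lambda_4 \Delta_4$, with $\Delta_4 = s_{\mathrm{eff.}}(S \cup \{\alpha\}) - s_{\mathrm{eff.}}(S)$, is the crux and the step I expect to be the main obstacle. Writing out Eq.~\ref{r_function}, $\Delta_4$ equals the new element's nearest-neighbour distance $\min_{s_j \in S}\mathrm{dist}(F(\alpha), F(s_j)) \ge 0$ plus, for each existing $s_i \in S$, the quantity $\min_{s_j \in (S \cup \{\alpha\}) \setminus \{s_i\}}\mathrm{dist}(F(s_i), F(s_j)) - \min_{s_j \in S \setminus \{s_i\}}\mathrm{dist}(F(s_i), F(s_j)) \le 0$. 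Hence $\Delta_4$ is not sign-definite in general: a region $\alpha$ that is a near-duplicate of one already-selected region but far from the others can lower the summed nearest-neighbour distance. To close the gap I would take one of two routes. The clean route folds a diversity clause into the hypothesis: ``$\alpha$ contributing to interpretation'' also means $\alpha$ is not a near-duplicate of any region in $S$, so that $\min_{s_j \in S}\mathrm{dist}(F(\alpha), F(s_j))$ dominates the total possible decrease, giving $\Delta_4 \ge 0$ and hence the lemma term by term. The robust route keeps the weaker hypothesis: cosine distances lie in $[0,2]$, so the decrease is at most $2|S|$, while $\lambda_4 = 0.01$ is far below $\lambda_1 = 20$, so $\lambda_1 \Delta_1$ absorbs $\lambda_4 |\Delta_4|$ whenever $\Delta_1 \ge \lambda_4 |S| / \lambda_1$; this is exactly the regime the bidirectional greedy search operates in, where an element is appended only when it strictly increases $\mathcal{F}$, so monotonicity holds along every sequence the algorithm actually produces.

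Finally, there is a shortcut that avoids re-deriving (i)--(iii) from scratch: by the diminishing-returns inequality (Eq.~\ref{eq:sub}) already established in Lemma~\ref{lemma_submodular}, taking $S_b = V \setminus \{\alpha\}$ gives $\mathcal{F}(S \cup \{\alpha\}) - \mathcal{F}(S) \ge \mathcal{F}(V) - \mathcal{F}(V \setminus \{\alpha\})$, so the whole lemma reduces to the single base case $\mathcal{F}(V) \ge \mathcal{F}(V \setminus \{\alpha\})$ --- completing the image never hurts the objective. At $S = V$ the aggregated region is the full image $\mathbf{I}$ and the complement is empty, so the consistency term is maximal when $\boldsymbol{f}_s = F(\mathbf{I})$, the collaboration term subtracts the projection of the blank-image feature, and the confidence term is evaluated in-distribution; only the effectiveness comparison at the top of the lattice remains, handled by whichever of the two routes above one adopts.
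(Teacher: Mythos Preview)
Your proposal is correct and shares the paper's term-by-term decomposition $\mathcal{F}(S\cup\{\alpha\})-\mathcal{F}(S)=\sum_i \lambda_i\Delta_i$, but the handling of the individual terms differs. For $\Delta_1,\Delta_2,\Delta_3$ the paper does \emph{not} encode the hypothesis directly as your inequalities (i)--(iii); instead it linearises each score via a first-order Taylor expansion (e.g.\ $s_{\mathrm{cons.}}(S+\alpha)-s_{\mathrm{cons.}}(S)\simeq \nabla G(S)\cdot\alpha/\|F(S)\|$) and then reads ``$\alpha$ is contributing'' as a positive-gradient condition $\nabla G(\cdot)>0$, so the argument is heuristic/approximate rather than exact. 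Your formulation is cleaner and avoids the $\simeq$ steps. For $\Delta_4$ the paper takes exactly your ``clean route'': it writes $\Delta_4=\min_{s_i\in S}\mathrm{dist}(F(\alpha),F(s_i))-\varepsilon$ and asserts that because effective elements are selected, $\varepsilon$ is small, hence $\Delta_4>0$. Your ``robust route'' (absorbing $\lambda_4|\Delta_4|$ into $\lambda_1\Delta_1$ via the weight ratio) and your shortcut via Lemma~\ref{lemma_submodular} (reducing everything to the single comparison $\mathcal{F}(V)\ge\mathcal{F}(V\setminus\{\alpha\})$) are both absent from the paper and are genuine additions --- the first makes the scale-separation role of $\lambda_4=0.01$ explicit, and the second is a nice structural observation, though note it presupposes Lemma~\ref{lemma_submodular}, whose proof in the paper already invokes the same Taylor heuristics.
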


\begin{proof}
Please see \textit{suppl. material}~\ref{proof_of_monotonically_lemma} for the proof.
\end{proof}

Based on Lemma~\ref{lemma_submodular} and Lemma~\ref{lemma_monotonically}, we can prove that the function $\mathcal{F}(\cdot)$ is submodular.
\begin{remark}
The quality of the $\alpha$ in the sub-region division influences the search space, thereby affecting the optimization of the submodular function. Higher-quality sub-region division leads to improved attribution performance.
\end{remark}

\subsection{Bidirectional Greedy Search Algorithm}\label{alg_fast}

Given a set $V = \left\{ \mathbf{I}^{M}_{1}, \mathbf{I}^{M}_{2}, \cdots, \mathbf{I}^{M}_{m} \right\}$, we can follow Eq.~\ref{eq:object} to search the interpretable region by selecting $k$ elements that maximize the value of the submodular function $\mathcal{F}(\cdot)$. The above problem can be effectively addressed by implementing a greedy search algorithm. Referring to related works~\cite{mirzasoleiman2015lazier, wei2015submodularity}, we can use a greedy search to optimize the value of the submodular function. 
However, greedy search requires traversing all elements in  $V \backslash S$, and the computational cost for large models remains high, which reduces the attribution speed. Since attribution typically requires ranking the importance of all features, acceleration methods based on random sampling~\cite{mirzasoleiman2015lazier} are not suitable. Sampling-based methods risk missing the most important features, which is critical in the context of attribution. 

We propose a bidirectional greedy search algorithm that simultaneously identifies the most important and least important samples, as illustrated in Algorithm~\ref{alg:bidirectional_greedy}. Specifically, we utilize two sets: $S_{\mathrm{forward}}$ is used to incrementally add the most important elements, while $S_{\mathrm{reverse}}$ gradually accumulates the least important elements. As $S_{\mathrm{forward}}$ gradually adds the most important elements, it must traverse and evaluate all samples in $V \backslash (S_{\mathrm{forward}} \cup S_{\mathrm{reverse}})$, selecting the one that yields the greatest marginal gain. We simultaneously select $n_p$ candidates for the least important elements based on their marginal gains, denoted as $S_p$, as we believe that minimal or insignificant gains may indicate the least important elements. $S_{\mathrm{reverse}}$ only selects an element from $S_p$ to minimize the submodular value:
\begin{equation}
\min_{\alpha \in S_p}{\mathcal{F} \left( \{ \alpha \} \cup S_{\mathrm{reverse}} \right) },
\end{equation}
note that minimizing the value of $\mathcal{F}$ also satisfies submodularity~\cite{fujishige2005submodular}, and its optimal boundary is related to the value of $n_p$. Although the Algorithm~\ref{alg:bidirectional_greedy} still runs in time $\mathcal{O}(|N|^2)$, compared to the naive greedy search, the total number of elements required for inference is $\frac{1}{4}|N|^2+\frac{1}{2}|N|n_p-\frac{1}{2}n_p^2+\frac{1}{2}n_p$, and when $n_p$ is 1, the number of inferences can be reduced by at most $\frac{1}{4}|N|^2$ times. The following theorem demonstrates the
optimality bound of the output generated by Algorithm~\ref{alg:bidirectional_greedy}.

\begin{algorithm}[]
    \scriptsize
    \caption{The proposed bidirectional greedy search algorithm for interpretable region discovery}\label{alg:bidirectional_greedy}
    \KwIn{Image $\mathbf{I} \in \mathbb{R}^{w \times h \times 3}$, a division algorithm $\texttt{Div}(\cdot)$, number of pending negative samples $n_p$.}
    \KwOut{An ordered set $S$, where $\left | S \right | = \left | V \right |$.}
    $V \gets \texttt{Div}(\mathbf{I})$ \Comment*[r]{Sub-region division}
    $k = \frac{1}{2}|V|+\frac{1}{2}n_p$ \;
    $S \gets \varnothing$ \Comment*[r]{Initiate the operation of submodular subset selection}
    $S_{\mathrm{forward}} \gets \varnothing$ \;
    $S_{\mathrm{reverse}} \gets \varnothing$ \;
    \For{$i=1$ \KwTo $k$}{
        $S_d \gets V \backslash (S_{\mathrm{forward}} \cup S_{\mathrm{reverse}})$\;
        \uIf {$S_d == \varnothing$}{
            \textbf{break}}
        $\alpha \gets  \arg{\max_{\alpha \in S_d}{\mathcal{F} \left( S_{\mathrm{forward}} \cup \{ \alpha \} \right) }}$ \Comment*[r]{Optimize the submodular value}
        $S_{\mathrm{forward}} \gets S_{\mathrm{forward}} \cup \{ \alpha \}$ \Comment*[r]{Ascending}
        $S_d \gets S_d \backslash \alpha$\;
        \uIf {$|S_d| > n_p$}{
            $S_p \gets \texttt{TOPK}^{\mathrm{min}}_{\alpha \in S_d}(\mathcal{F} \left( S_{\mathrm{forward}} \cup \{ \alpha \} \right), n_p) \backslash S_{\mathrm{forward}}$\;
            $\alpha \gets  \arg{\min_{\alpha \in S_p}{\mathcal{F} \left( \{ \alpha \} \cup S_{\mathrm{reverse}} \right) }}$\;
            $S_{\mathrm{reverse}} \gets \{ \alpha \} \cup S_{\mathrm{reverse}}$ \Comment*[r]{Descending}
        }
    }
    $S \gets S_{\mathrm{forward}} \cup S_{\mathrm{reverse}}$ \;
    \Return $S$
\end{algorithm}

\begin{theorem}[Bidirectional greedy search optimality bound]\label{theorem:bi}
Let $S$ denote the solution obtained by the proposed bidirectional greedy search approach from the set $V$, and let $S^{\ast}$ represent the optimal solution. When $\mathcal{F}(\cdot)$ is a submodular function, the solution $S$ satisfies the following approximation guarantee:
\begin{equation}
    \mathcal{F}(S) \ge \left(1 - \frac{1}{e} - \epsilon \right) \mathcal{F}(S^{\ast}),
\end{equation}
where $e$ represents the base of the natural logarithm, and $1-\epsilon$ is the probability that $S_d$ overlaps with $S^{\ast}$. As $n_p$ increases, $\epsilon$ decreases.
\end{theorem}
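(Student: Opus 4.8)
The plan is to reduce the analysis of Algorithm~\ref{alg:bidirectional_greedy} to the classical greedy guarantee of Nemhauser \textit{et al.}~\cite{nemhauser1978analysis} by isolating the single place where our procedure departs from the naive greedy search: at each iteration the forward set $S_{\mathrm{forward}}$ is extended by the true marginal maximizer over $S_d = V \backslash (S_{\mathrm{forward}} \cup S_{\mathrm{reverse}})$, \emph{not} over $V \backslash S_{\mathrm{forward}}$. Since $S_{\mathrm{reverse}}$ accumulates elements of purportedly least marginal value, the concern is that some element of the optimal solution $S^{\ast}$ has been prematurely diverted into $S_{\mathrm{reverse}}$ and is therefore invisible to the forward search. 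So the first step is to define, for each iteration $i$, the ``good event'' $E_i$ that $S_d^{(i)} \cap S^{\ast} \supseteq S^{\ast}\backslash S_{\mathrm{forward}}^{(i)}$, i.e. that every not-yet-selected optimal element is still available to the forward search; by hypothesis $\Pr[E_i] \ge 1-\epsilon$, where $1-\epsilon$ is the stated probability that $S_d$ overlaps $S^{\ast}$, and $\epsilon$ is non-increasing in $n_p$ because a larger candidate pool $S_p$ makes it combinatorially less likely that a high-value optimal element is the one sent to $S_{\mathrm{reverse}}$.

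Next I would run the standard greedy potential argument conditionally on these events. On iteration $i$, conditioned on $E_i$, submodularity and monotonicity give the usual inequality
\begin{equation}\label{eq:greedy-step}
    \mathcal{F}\!\left(S_{\mathrm{forward}}^{(i)}\right) - \mathcal{F}\!\left(S_{\mathrm{forward}}^{(i-1)}\right) \ge \tfrac{1}{k}\left(\mathcal{F}(S^{\ast}) - \mathcal{F}\!\left(S_{\mathrm{forward}}^{(i-1)}\right)\right),
\end{equation}
since the marginal gain of the chosen element is at least the average marginal gain of the elements of $S^{\ast}$, all of which lie in $S_d^{(i)}$ under $E_i$. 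When $E_i$ fails we use only monotonicity: $\mathcal{F}(S_{\mathrm{forward}}^{(i)}) \ge \mathcal{F}(S_{\mathrm{forward}}^{(i-1)})$, so no progress is lost even if no progress is made. Taking expectations and unrolling the recursion over the $k$ forward steps, the $(1-\epsilon)$ fraction of ``effective'' iterations yields, after the standard $\bigl(1-(1-1/k)^{k}\bigr)$-type telescoping, a bound of the form $\mathbb{E}[\mathcal{F}(S_{\mathrm{forward}})] \ge \bigl(1 - e^{-(1-\epsilon)}\bigr)\mathcal{F}(S^{\ast}) \ge \bigl(1 - \tfrac{1}{e} - \epsilon\bigr)\mathcal{F}(S^{\ast})$, using $e^{-(1-\epsilon)} \le e^{-1} + \epsilon$. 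Finally, since $S = S_{\mathrm{forward}} \cup S_{\mathrm{reverse}} \supseteq S_{\mathrm{forward}}$ and $\mathcal{F}$ is monotonically non-decreasing (Lemma~\ref{lemma_monotonically}), $\mathcal{F}(S) \ge \mathcal{F}(S_{\mathrm{forward}})$, which transfers the bound to the returned set and completes the argument.

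The main obstacle I anticipate is making the probabilistic step rigorous: the paper treats $1-\epsilon$ as ``the probability that $S_d$ overlaps with $S^{\ast}$,'' but to chain \eqref{eq:greedy-step} across iterations one really needs a per-iteration guarantee that \emph{all} surviving optimal elements remain in $S_d$, and one needs these events to interact cleanly with the expectation (e.g. via a union bound over the $k$ iterations, or by absorbing the failure probability into a single $\epsilon$ as the theorem statement implicitly does). A careful treatment would either (i) define $\epsilon$ as the union-bound failure probability over all forward steps and argue monotonicity of this aggregate quantity in $n_p$, or (ii) invoke a Lazier-than-lazy-greedy style analysis~\cite{mirzasoleiman2015lazier}, where sampling a subset of candidates yields exactly a $(1 - 1/e - \epsilon)$ bound, and identify our $S_d$ with their sampled set. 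I would also need to confirm that diverting an element to $S_{\mathrm{reverse}}$ never removes it from consideration in a way that breaks the averaging argument in \eqref{eq:greedy-step} — this is precisely what the event $E_i$ controls, so the crux is quantifying $\Pr[E_i]$ in terms of $n_p$ and the submodular function's value gaps, and everything else is the routine greedy recursion.
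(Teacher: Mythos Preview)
Your proposal is considerably more detailed than the paper's own proof, which consists of a single sentence invoking Mirzasoleiman \textit{et al.}~\cite{mirzasoleiman2015lazier} as a black box, together with the informal remark that larger $n_p$ makes it likelier that $S_p$ captures only genuinely unimportant samples (so that important samples are not removed from $S_d$), hence $\epsilon$ shrinks. In other words, the paper's entire argument is exactly your option~(ii): identify $S_d$ with the sampled candidate set in the stochastic-greedy analysis and inherit the $(1-1/e-\epsilon)$ guarantee directly.

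Your primary route---defining the per-iteration good events $E_i$, running the conditional greedy recursion~\eqref{eq:greedy-step}, and then telescoping to $1-e^{-(1-\epsilon)}\ge 1-1/e-\epsilon$---is essentially a white-box unpacking of that same stochastic-greedy argument, so the two approaches are not genuinely different in spirit. What you gain is explicitness: you isolate precisely where the $\epsilon$ loss enters (failure of $E_i$), you make the role of monotonicity in the ``no-progress'' iterations explicit, and you note the step $\mathcal{F}(S)\ge\mathcal{F}(S_{\mathrm{forward}})$ that the paper leaves implicit. You also flag, correctly, that the informal phrase ``the probability that $S_d$ overlaps with $S^{\ast}$'' needs to be interpreted as a per-iteration (or union-bounded) event for the recursion to chain; the paper does not address this and simply absorbs it into the meaning of $\epsilon$. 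So your proposal is sound and in fact more careful than the paper's proof, while arriving at the same bound by the same underlying mechanism.
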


\begin{proof}
    Please see \textit{suppl. material}~\ref{theorem:bi-proof} for the proof.
\end{proof}

According to Theorem~\ref{theorem:bi}, our method can maintain a good optimality bound even when the model reduces a significant number of sample inferences. This is particularly beneficial for lowering the computational cost of interpreting large models.


\subsection{Sub-region Importance Assessment}\label{importance_assess}

Although the sub-regions are ranked by importance, the ranking does not consider the degree of difference between them. The importance differences between adjacent sub-regions are unlikely to be uniform. To address this, we propose a simple first-order assignment method based on marginal contribution scores. Equations~\ref{consistency_function} and~\ref{collaboration_function} are closely related to the model’s decision-making process, allowing us to quantify the marginal effects by analyzing their variations to capture the differences between elements. The attribution score $a_i$ for each element $s_i$ in $S$ is given by:
\begin{equation}\label{attribution_score}
    a_i=\begin{cases}
     b_{\text{base}} & \text{if } i = 1, \\
     a_{i-1} - \left| \left( s_{\mathrm{cons.}}(S_{[i]}) + s_{\mathrm{colla.}}(S_{[i]}) \right) \right. & \\
     \quad \left. - \left( s_{\mathrm{cons.}}(S_{[i-1]}) + s_{\mathrm{colla.}}(S_{[i-1]}) \right) \right| & \text{if } i>1,
    \end{cases}
\end{equation}
where $b_{\text{base}}$ denotes a baseline value, and $S_{[i]}$ represents the set containing the top $i$ elements in the set $S$. When a new element is added, if the marginal effect does not increase significantly, it suggests that the importance of this element is comparable to that of the previous one. If the marginal effect is negative—indicating that the remaining elements have a counterproductive effect on the model’s decision—the negative impact of the element can be evaluated based on the absolute value of the marginal effect. By assessing the importance of all elements in this manner, we can better visualize which regions are most important and distinguish the differences in importance across sub-regions, enhancing human understandability.
    \section{Method Analysis}\label{method_analysis}

In this section, we evaluate the effectiveness of our approach, identify the scenarios in which it is most applicable, and compare it against other baseline attribution algorithms. 

\textbf{Interaction Effect Analysis:} Deng \textit{et al.}~\cite{deng2024unifying} decompose attribution methods into a weighted combination of independent and interaction effects, which means an individual effect implies that the attribution score for a given pixel is independent of other pixels, whereas an interaction effect indicates that the model predicted score is influenced by multiple other pixels. Drawing inspiration from them, we further analyze our method in terms of the independent and interactive effects of sub-regions. Therefore, let $G \left(S\right) = F\left(S\right) / \|F\left(S\right)\| \circ \boldsymbol{f}_{s}$, Eq.~\ref{attribution_score} for attribution can be restated as follows:
\begin{equation}\label{attribution_expansion}
    \begin{aligned}
    a_{i+1} =& a_{i} - \Big| G(S_{[i]}) + \nabla{G\left( S_{[i]} \right)} \cdot \alpha - G(S_{[i]}) \\
    & + G(S_{\overline{[i]}}) - \left(G(S_{\overline{[i]}}) - \nabla{G\left( S_{\overline{[i]}} \right)} \cdot \alpha \right) \Big| \\
    =& a_{i} - \Big|  \nabla{G\left( S_{[i]} \right)} \cdot \alpha + \nabla{G\left( S_{\overline{[i]}} \right)} \cdot \alpha  \Big|, 
    \end{aligned}
\end{equation}
where $S_{[i]}$ represents the set containing the top $i$ elements in the set $S$, and its complementary subset is $S_{\overline{[i]}}$, and $\alpha = S_{[i+1]} \backslash S_{[i]}$. 
From Eq.~\ref{attribution_expansion}, it is evident that our method represents a simple first-order attribution. By dividing the sub-regions to create a sparse input, a more efficient explanation is obtained. In addition, we observe that, apart from the most important sub-region using the baseline value $b_{\text{base}}$, each sub-region attribution score $a_i$ represents the interaction of all other sub-regions, excluding itself. 
This indicates that our attribution method accounts for the interaction effects with all sub-regions, rather than merely capturing individual effects. Since our method is based on submodular subset selection, the identified subset $S_{[i]}$ is optimally ensured by searching for the highest marginal contribution within the current subset, thereby identifying the next most important sub-region $\alpha$ along with its actual contribution.

\textbf{Mainstream Attribution Analysis:} Gradients primarily capture individual effects~\cite{deng2024unifying}, meaning that both propagation-based and gradient-based attribution methods are influenced by these effects to varying extents. The sampling-based Shapley value estimation method and the perturbation-based method also compute attribution scores that account for interaction effects. Kumar \textit{et al.}~\cite{kumar2020problems} observed that Shapley value estimation may risk underestimating the importance of relevant features, furthermore, if the interaction of subsets is complex and irregular, it will also affect the accurate estimation of Shapley value. The perturbation-based method assumes that all elements within each sampling subset contribute equally, which may lead to an overestimation of the contribution of certain sub-regions. Chen \textit{et al.}~\cite{chen2024defining} proves that there exists the feature interaction response in the network output, rather than a simple individual response. Based on the above analysis, we posit that our method is particularly advantageous in scenarios with complex feature interactions in model decision-making. Next, we will examine the scenarios under which such complex interactive responses are most likely to arise.


\begin{figure}[!t]
    \centering  
    \includegraphics[width=0.48\textwidth]{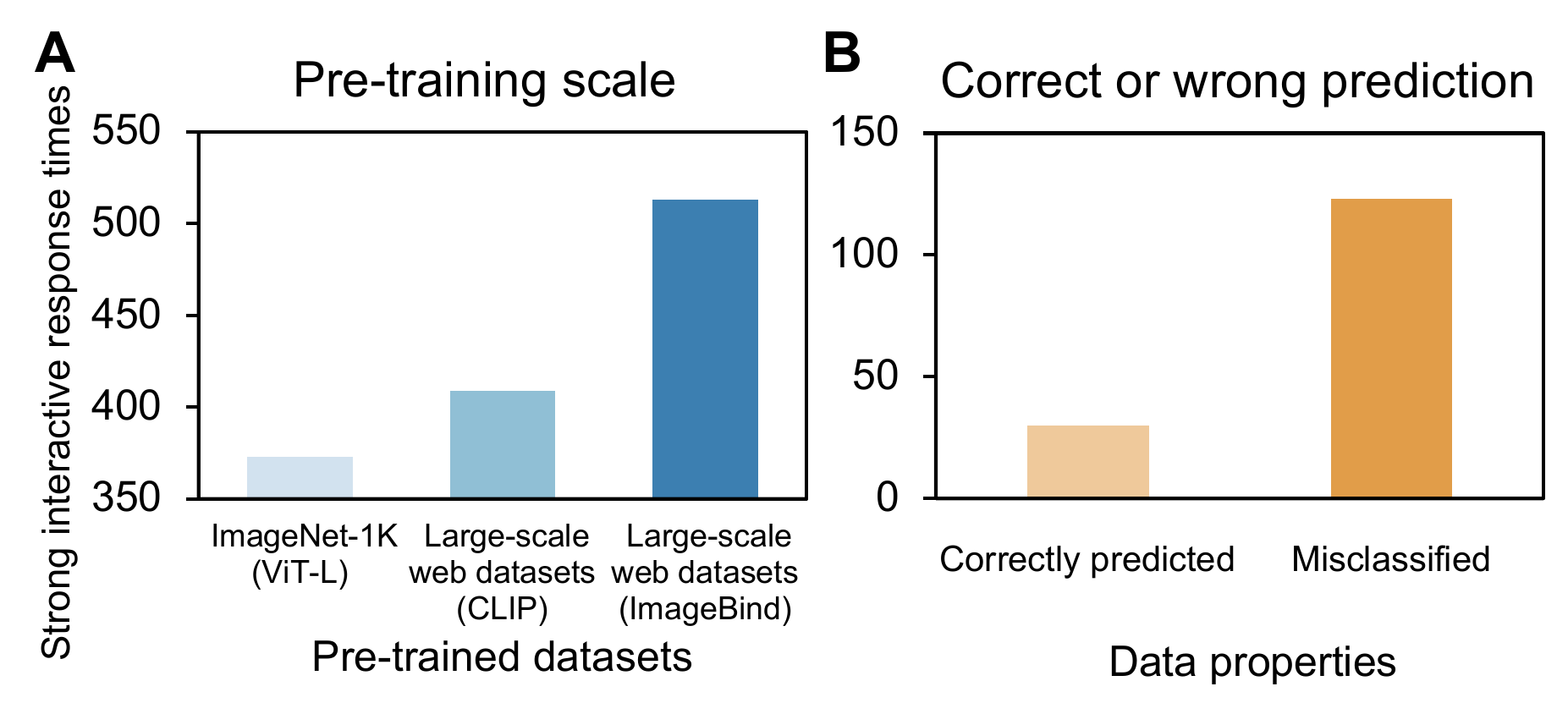} 
    \caption{Statistics of strong interactive response times. \textbf{A.} Impact of pre-training scale and model size. \textbf{B.} Impact of whether the model correctly predicts.}  \vspace{-15pt}
    \label{theory_ver}  
\end{figure}

\textbf{Complex Interaction Situation Analysis:} We aim to explore the conditions under which input interactions become more complex. An intuitive approach is to consider the minimum number of input elements required to trigger the activation of the target category response. We define a strong interactive response as a change in the model’s response of more than 0.5 when a new element is added to a subset. To investigate when these strong interactive responses occur, we conducted two experiments: (i) We applied our method to attribute 1000 samples correctly predicted by three different models. We analyzed the insertion curve of each sample and recorded the number of responses where the absolute value of the change exceeded 0.5. As shown in Fig.~\ref{theory_ver}A, models with ViT-L architectures pre-trained on large-scale web datasets exhibit more complex interaction effects than those pre-trained on ImageNet-1K. For larger models, such as ImageBind, the number of complex interactive responses increases significantly. (ii) We also applied our method to 1000 correctly predicted samples and 1000 misclassified samples by ImageBind, analyzing their insertion curves and counting the number of responses with a decrease greater than 0.5. As shown in Fig.~\ref{theory_ver}B, strong interactive responses are more likely to occur in misclassified samples, contributing to prediction errors. Blocking these elements can correct the model’s prediction results. Based on these observations, we have the following hypothesis:

\begin{hypothesis}
When the model is pre-trained on a larger dataset, resulting in a larger model size and more complex feature interaction responses during decision-making, our method demonstrates stronger attribution performance compared to baseline attribution methods, especially in cases of sample prediction errors.
\end{hypothesis}

In order to verify the proposed hypothesis, we will conduct a series of experiments in the following section to observe the impact of different model sizes and feature interaction complexities on attribution performance.
    \section{Experiments}\label{experiment}

In this section, we provide a validation of our \textsc{\textbf{LiMA}} mechanism. We begin by describing the datasets and experimental setup in Section~\ref{experimental_setup}, followed by a detailed explanation of the evaluation metrics in Section~\ref{evaluation_metric}. In Section~\ref{faithfulness_analysis}, we present the interpretation results of \textsc{LiMA} across multimodal and unimodal models, as well as various network architectures. Section~\ref{error_debug} demonstrates how \textsc{LiMA} explains the factors influencing model predictions across different modalities and network architectures. Finally, Section~\ref{ablation_study} verifies the effectiveness of each module in the proposed method through a series of ablation experiments.

\subsection{Experimental Setup}\label{experimental_setup}

\textbf{Datasets.} We evaluate the proposed \textsc{\textbf{LiMA}} on six datasets: a large-scale natural image dataset, ImageNet~\cite{deng2009imagenet}; a fine-grained bird dataset, CUB-200-2011~\cite{welindercaltech}; two face datasets, Celeb-A~\cite{liu2015deep} and VGG-Face2~\cite{cao2018vggface2}; a medical image dataset, LC25000 (Lung)~\cite{borkowski2019lung}; and an audio dataset, VGG-Sound~\cite{chen2020vggsound}. 

For the ImageNet dataset, which comprises 1,000 categories, we select five correctly predicted samples from each class in the validation set, resulting in 5,000 images per model to assess image attribution performance. Additionally, two incorrectly predicted samples per class are chosen, yielding a total of 2,000 images per model, to evaluate our method’s ability to identify the causes of prediction errors.

For the CUB-200-2011 dataset, which consists of 200 bird species, we select three correctly predicted samples from each class in the validation set, totaling 600 images, to evaluate image attribution performance. Additionally, two incorrectly predicted samples per class are chosen, resulting in 400 images, to assess our method’s ability to identify the causes of model prediction errors.

The Celeb-A dataset includes 10,177 identities, from which we randomly select 2,000 identities from the validation set, using one test face image per identity to evaluate our method. Similarly, for the VGG-Face2 dataset, which contains 8,631 identities, we randomly select 2,000 identities from the validation set, with one test face image per identity used for evaluation.

For the LC25000 (Lung) dataset, which includes three lung cell classes, we selected 1,000 misclassified images to assess our method’s effectiveness in attributing errors in medical images.

For the VGG-Sound dataset, containing 309 audio classes, we randomly select two correctly predicted samples from each class in the validation set to assess attribution effectiveness, and one misclassified sample per class to evaluate error attribution.

\textbf{Implementation Details.} 
We primarily apply \textsc{\textbf{LiMA}} to interpret architectures based on CNN, ViT, and Mamba. For the ImageNet dataset, we evaluated our method on four multimodal foundation models: CLIP (ViT-L)~\cite{radford2021learning}, CLIP (ResNet-101), ImageBind (Huge)~\cite{girdhar2023imagebind}, and LanguageBind (Large)~\cite{zhu2024languagebind}, as well as five single-modal pre-trained models: ResNet-101~\cite{he2016deep}, ViT-Large~\cite{dosovitskiy2021image}, Swin Transformer (Large)~\cite{liu2021swin}, Vision Mamba (Base)~\cite{zhu2024vision}, and MambaVision (L2)~\cite{hatamizadeh2024mambavision}. All models utilized their official pre-trained parameters. For image classification, we used the video encoder of LanguageBind, while for the other models, we employed their respective image encoders.
For the face datasets, we evaluated recognition models trained with the ResNet-101 architecture~\cite{he2016deep} and the ArcFace loss function~\cite{deng2019arcface}, with an input size of $112 \times 112$ pixels. For the CUB-200-2011 dataset, we tested three recognition models built on the ResNet-101, MobileNetV2~\cite{sandler2018mobilenetv2}, and EfficientV2-M~\cite{tan2021efficientnetv2} architectures. In the case of the LC25000 (Lung) dataset, we employed QuiltNet (ViT/B-32)~\cite{ikezogwo2024quilt}, a vision-language medical foundation model trained on the Quilt-1M dataset. Finally, for the VGG-Sound dataset, we evaluated our method using two foundation models: ImageBind (Huge) and LanguageBind (Large).

For the target semantic feature $\boldsymbol{f}_{s}$, in the multimodal foundation model, we utilize the semantic vector encoded by the text encoder. In contrast, for the single-modal model, we directly use the fully connected layer following the backbone. Unless otherwise specified, the number of sub-regions segmented using the SLICO~\cite{achanta2012slic} algorithm is fixed at $|V|=49$, whereas the number of sub-regions generated by the SAM~\cite{kirillov2023segment} algorithm is not controllable. Our experiments were conducted using the Xplique repository\footnote{Xplique:~\url{https://github.com/deel-ai/xplique}}, which provides baseline methods and evaluation tools. All experiments were performed on an NVIDIA 4090 GPU.

\subsection{Evaluation Metrics}\label{evaluation_metric}

Since attribution methods aim to identify which inputs influenced a model’s decision, the resulting explanations should be evaluated not by humans but by objective faithfulness metrics. We employ four fidelity metrics to evaluate our attribution methods.

The first is the \textbf{Deletion AUC score}~\cite{petsiuk2018rise}, which quantifies the reduction in the model’s output when important regions are replaced with a baseline value. A sharp decline in performance indicates that the explanation method effectively identifies the key variables influencing the decision. Let $\mathbf{x}_{[x_{T}=x_{0}]}$ denote the input where the $T$ most important variables, according to the attribution map, are set to the baseline value $x_{0}=0$. Given a set $\mathcal{T} = \{T_0,T_1,\cdots, T_n\}$, where $T_0=0$ and $T_n$ is the input size of $\mathbf{x}$, this set represents the selected numbers of the most important regions. Then, the Deletion AUC score is given by:
\begin{equation}
    \text{Del.} = \sum_{i=1}^{n} \frac{\left (
    f(\mathbf{x}_{[\mathbf{x}_{T_i}=x_{0}]})+f(\mathbf{x}_{[\mathbf{x}_{T_{i-1}}=x_{0}]}) \right )\cdot \left ( T_{i}-T_{i-1}\right )}{2T_n},
\end{equation}
the lower this metric, the better the attribution performance.

The second metric is the \textbf{Insertion AUC score}~\cite{petsiuk2018rise}, which quantifies the increase in the model’s output as important regions are progressively revealed. This metric is defined as follows:
\begin{equation}
    \text{Ins.} = \sum_{i=1}^{n} \frac{\left (
    f(\mathbf{x}_{[\mathbf{x}_{\bar{T}_i}=x_{0}]})+f(\mathbf{x}_{[\mathbf{x}_{\bar{T}_{i-1}}=x_{0}]}) \right )\cdot \left ( T_{i}-T_{i-1}\right )}{2T_n},
\end{equation}
where $\mathbf{x}_{[x_{\bar{T}}=x_0]}$ denotes the input where elements not belonging to the set $T$ are set to the baseline value $x_0=0$. The higher this metric, the better the attribution performance.

\begin{table*}[!t]
    \caption{Deletion, Insertion AUC scores and $\mu$Fidelity for multimodal ViT backbone foundation models on the ImageNet validation sets.}
    \label{faithfulness_on_correct_imagenet}
    \begin{center}
        \resizebox{\textwidth}{!}{
            \begin{tabular}{l|ccc|ccc|ccc}
                \toprule
                \multirow{2}{*}{Methods} & \multicolumn{3}{c}{CLIP (ViT-L)~\cite{radford2021learning}} & \multicolumn{3}{c}{ImageBind (Huge)~\cite{girdhar2023imagebind}} & \multicolumn{3}{c}{LanguageBind (Large)~\cite{zhu2024languagebind}} \\ \cmidrule(r){2-4} \cmidrule(lr){5-7} \cmidrule(l){8-10}
                  & Deletion ($\downarrow$) & Insertion ($\uparrow$) & $\mu$Fidelity ($\uparrow$) & Deletion ($\downarrow$) & Insertion ($\uparrow$) & $\mu$Fidelity ($\uparrow$) & Deletion ($\downarrow$) & Insertion ($\uparrow$) & $\mu$Fidelity ($\uparrow$) \\ \midrule
                Saliency~\cite{simonyan2014deep} & 0.3040 & 0.1938 & 0.1010 & 0.3857 & 0.3473 & 0.1807 & 0.2237 & 0.2379 & 0.2395 \\
                Integrated Gradients~\cite{sundararajan2017axiomatic} & 0.1600 & 0.3560 & 0.0881 & 0.1405 & 0.5577 & 0.1919 & 0.0699 & 0.2733 & 0.1422 \\
                iGOS++~\cite{khorram2021igos++} & 0.2934 & 0.4467 & 0.2313 & 0.2658 & 0.4417 & 0.2371 & - & - & - \\
                RISE~\cite{petsiuk2018rise} & 0.1615 & 0.5345 & 0.3244 & 0.1528 & 0.4699 & 0.3459 & 0.1145  &  0.4750 & 0.2764 \\
                HSIC-Attribution~\cite{novello2022making} & 0.1565 & 0.4397 & 0.3232 & 0.1875 &   0.4407 & 0.3413 & 0.1034 & 0.4070 & \textbf{0.4132} \\
                Kernel SHAP~\cite{lundberg2017unified} & 0.2808 & 0.3588 & 0.0699 & 0.3736 & 0.4714 & 0.0931 & 0.2104 & 0.3109 & 0.0998 \\
                ViT-CX~\cite{xie2023vit} & 0.1779 & 0.4413 & 0.2076 & 0.1987 & 0.4277 &  0.3377 & 0.1303 & 0.3486 & 0.1015 \\
                EAC~\cite{sun2023explain} (w/ Segment Anything) &  0.3792 & 0.4349 & 0.1566 & 0.3888 & 0.4564 & 0.2740 & 0.1563 & 0.4340 & 0.2848 \\
                Grad-ECLIP~\cite{zhao2024gradient} & 0.1374 & 0.4783 & 0.3205 & - & - & - & - & - & - \\
                Previous work~\cite{chen2024less} & 0.1351 & 0.6752 & - & 0.1378 & 0.6652 & - & 0.0885 & 0.5082 & - \\
                \rowcolor{gray!20}
                \textsc{LiMA} (w/ Segment Anything) & 0.1905 & 0.7151 & 0.2953 & 0.2210 & 0.7341 & 0.3492  & 0.2228 & \textbf{0.6690} & 0.3648  \\ \rowcolor{gray!20}
                \textsc{LiMA} (w/ SLICO) & \textbf{0.0926} & \textbf{0.7441} & \textbf{0.3334} & \textbf{0.1280} & \textbf{0.7463} & \textbf{0.3551} & \textbf{0.0642} & 0.6058 & 0.4096 \\
                \bottomrule
                \end{tabular}
        }
    \end{center}
\end{table*}

\begin{figure*}[!t]
    \centering
    \begin{overpic}[width=\textwidth,tics=8]{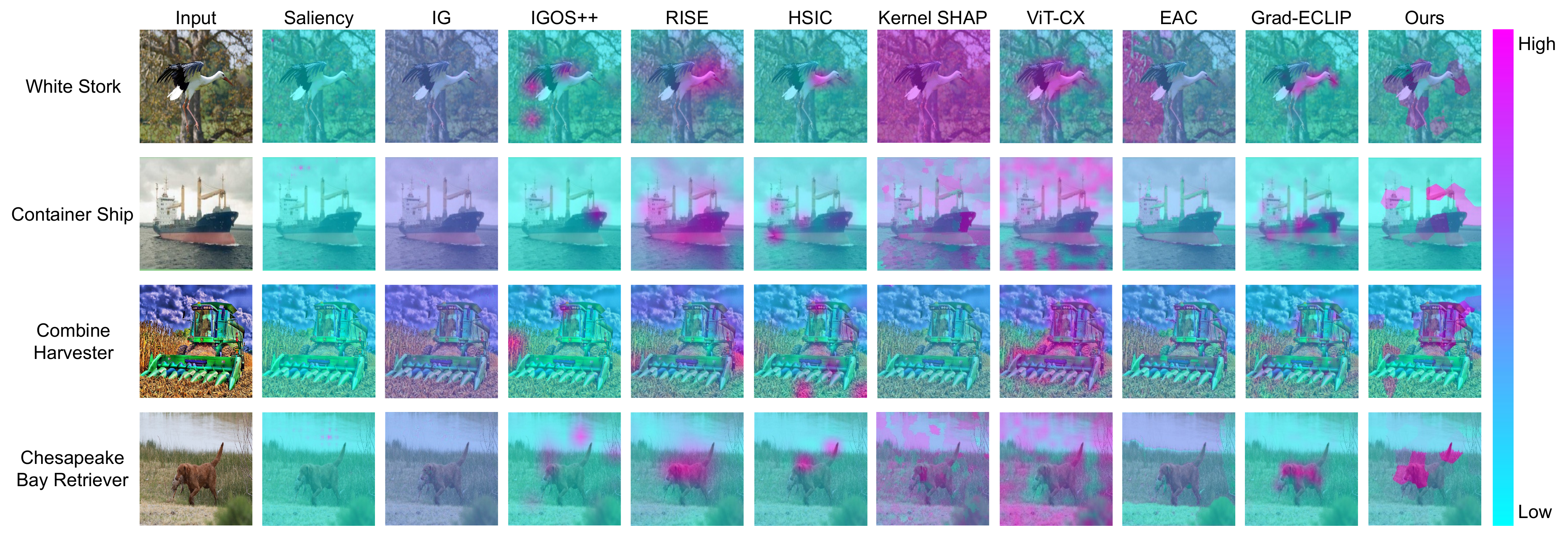} 
        \put (22.7,32.65) {\tiny \cite{simonyan2014deep}}
        \put (28.7,32.65) {\tiny \cite{sundararajan2017axiomatic}}
        \put (38.2,32.65) {\tiny \cite{khorram2021igos++}}
        \put (45.4,32.65) {\tiny \cite{petsiuk2018rise}}
        \put (53.0,32.65) {\tiny \cite{novello2022making}}
        \put (62.9,32.65) {\tiny \cite{lundberg2017unified}}
        \put (69.2,32.65) {\tiny \cite{xie2023vit}}
        \put (76.4,32.65) {\tiny \cite{sun2023explain}}
        \put (86.3,32.65) {\tiny \cite{zhao2024gradient}}
    \end{overpic}  
    \caption{Visual explanations of the CLIP model using various attribution mechanisms, with our approach effectively reducing noise and eliminating redundant regions, leading to more interpretable attribution results.}  
    \label{vis:multimodal_qualitative}  
\end{figure*}

\begin{figure*}[!t]
    \centering
    \begin{overpic}[width=\textwidth,tics=8]{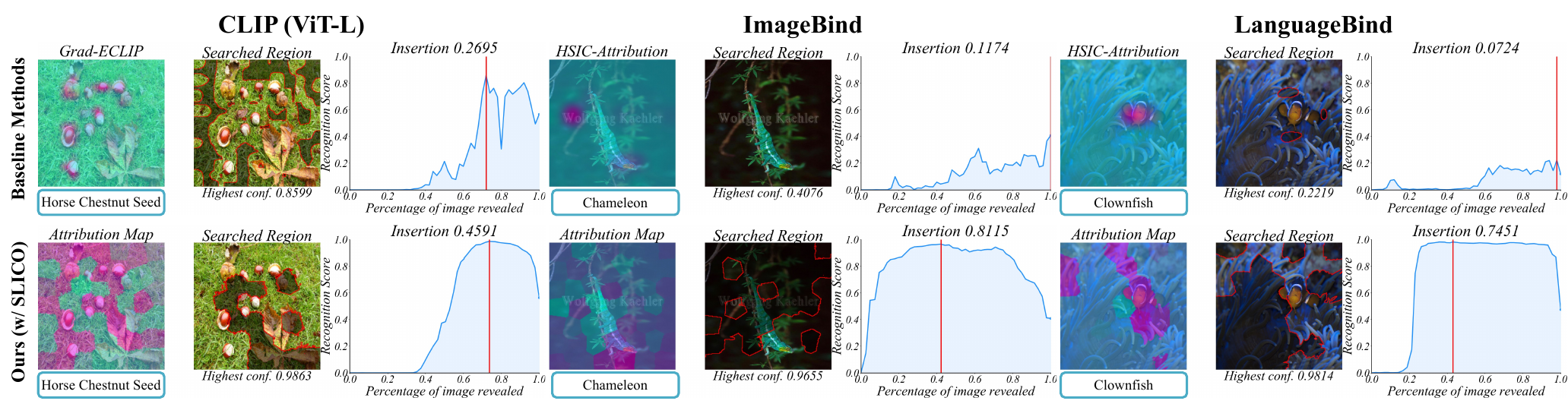} 
        \put (9.3,22.2) {\tiny \cite{zhao2024gradient}}
        \put (42.8,22.2) {\tiny \cite{novello2022making}}
        \put (75.4,22.2) {\tiny \cite{novello2022making}}
    \end{overpic}  
    \caption{Attribution visualizations of decision results for different multimodal foundation models on the ImageNet dataset. The first row shows the interpretation results from the state-of-the-art baseline attribution methods, while the second row displays the interpretation results from our method. Each interpretation result includes the saliency map, the highest confidence score, and its corresponding region, as well as the Insertion AUC curve. The red line in the curve represents the highest confidence of the model’s response during the search.}  
    \label{vis:multimodal}  
\end{figure*}

The third metric is the \textbf{average highest confidence}~\cite{chen2024less}, which measures the model’s highest response within a constrained search region. This metric is used to evaluate attribution results for samples that the model incorrectly predicts, assessing whether the method can identify the factors contributing to the model’s incorrect predictions. It is defined as follows:
\begin{equation}
    \text{highest conf.} = \max_{T\in\mathcal{T}_{[k]}}{(f(\mathbf{x}_{[\mathbf{x}_{\bar{T}}=x_{0}]}))},
\end{equation}
where $\mathcal{T}_{[k]}$ denotes the set of most important regions that can be searched, constrained by a search region that does not exceed $k$. The higher this metric, the better the attribution performance.

The final metric is \textbf{$\mu$Fidelity}~\cite{bhatt2021evaluating}, which measures the correlation between the reduction in the model’s score when variables are set to a baseline state and the importance of those variables. However, since the network’s output is not necessarily a linear combination of its inputs, this metric may not fully capture the quality of the attribution. Nonetheless, it is primarily used to evaluate how effectively our method identifies the importance of sub-regions.

\subsection{Faithfulness Analysis}\label{faithfulness_analysis}

To demonstrate the superiority of our method, we assess its faithfulness, which measures the alignment between the generated explanations and the deep model’s decision-making process~\cite{li2021instance}. We employ Deletion and Insertion AUC scores, along with $\mu$Fidelity, as evaluation metrics. Our method is evaluated on correctly predicted samples across various modalities and architectures.

\subsubsection{Faithfulness on Multimodal Foundation Models}

We first evaluate the interpretative performance of our attribution method on multimodal foundation models, including CLIP~\cite{radford2021learning}, ImageBind~\cite{girdhar2023imagebind}, and LanguageBind~\cite{zhu2024languagebind}. We select 5,000 correctly predicted ImageNet images for each model for validation. Table~\ref{faithfulness_on_correct_imagenet} presents the faithfulness results across various attribution methods, with our approach demonstrating strong attribution effectiveness. The attribution results on CLIP (ViT-L) show that, compared to the state-of-the-art Grad-ECLIP
~\cite{zhao2024gradient}, which is based on gradient and internal attention activation, our method achieves a 32.6\% improvement in Deletion AUC score, a 61.8\% improvement in Insertion AUC score, and a 4.0\% improvement in $\mu$Fidelity when using SLICO as the sub-region division algorithm. Fig.~\ref{vis:multimodal_qualitative} qualitatively shows the saliency maps generated by different attribution methods on the CLIP model, where our approach effectively reduces noise and eliminates redundant regions, resulting in more interpretable and clearer attribution maps. On the ImageBind model, the Integrated Gradients~\cite{sundararajan2017axiomatic} method produces better attribution results. In comparison, our method improves the Deletion AUC score by 9.8\%, the Insertion AUC score by 33.8\%, and the $\mu$Fidelity by 85.0\% when using the SLICO sub-region division algorithm. For LanguageBind, our approach outperforms the perturbation-based HSIC-Attribution~\cite{novello2022making} method by 43.9\% in the Deletion AUC score and 48.8\% in the Insertion AUC score when using SLICO as the sub-region division algorithm. Although our method is slightly lower by 0.9\% in the $\mu$Fidelity metric, it still demonstrates strong attribution capability. Compared to the previous version, our method improves the Insertion AUC scores by 10.2\%, 12.2\%, and 19.2\% and the Deletion AUC scores by 31.5\%, 7.1\%, and 27.5\% across the three models, respectively, demonstrating that attribution results based on semantic division are more effective. 

We also validated our attribution method on the CLIP model with ResNet-101 as the backbone, as shown in Table~\ref{faithfulness_on_correct_imagenet_clip_rn101}. Our approach outperforms the state-of-the-art perturbation-based HSIC-Attribution method by 31.9\% in the Deletion AUC score and 51.8\% in the Insertion AUC score when using SLICO as the sub-region division algorithm. Although it does not achieve the best results in the $\mu$Fidelity metric, the performance on the first two metrics demonstrates that our method more effectively identifies the most critical combination of decision regions. Compared with the previous version of the method, the Insertion and Deletion AUC scores increased by 21.3\% and 8.4\%, respectively.

\begin{table}[h]
    \caption{Deletion, Insertion AUC scores and $\mu$fidelity for CLIP model with ResNet-101 backbone on the ImageNet validation sets.}
    \label{faithfulness_on_correct_imagenet_clip_rn101}
    \begin{center}
        \resizebox{0.4\textwidth}{!}{
            \begin{tabular}{l|ccc}
                \toprule
                \multirow{2}{*}{Methods} & \multicolumn{3}{c}{CLIP (ResNet-101)~\cite{radford2021learning}} \\ \cmidrule(r){2-4}
                  & Deletion ($\downarrow$) & Insertion ($\uparrow$) & $\mu$Fidelity ($\uparrow$) \\ \midrule
                Saliency~\cite{simonyan2014deep} & 0.0801 & 0.1540 & 0.3648 \\
                Integrated Gradients~\cite{sundararajan2017axiomatic} & 0.0482 & 0.0804 & 0.1141 \\
                Kernel SHAP~\cite{lundberg2017unified} & 0.1198 & 0.1766 & 0.0923 \\
                RISE~\cite{petsiuk2018rise} & 0.0719 & 0.4034 & 0.3355 \\
                HSIC-Attribution~\cite{novello2022making} & 0.0659 & 0.3574 & \textbf{0.4373} \\
                EAC~\cite{sun2023explain} (w/ Segment Anything)   & 0.2332 & 0.2756 & 0.2463\\
                Previous work~\cite{chen2024less} & 0.0490 & 0.4474 & - \\
                \rowcolor{gray!20}
                \textsc{LiMA} (w/ SLICO) & \textbf{0.0449} & \textbf{0.5427} & 0.3827 \\ 
                \bottomrule
                \end{tabular}
        }
    \end{center}
    \vspace{-10pt}
\end{table}

From Tables~\ref{faithfulness_on_correct_imagenet} and~\ref{faithfulness_on_correct_imagenet_clip_rn101}, we observe that comparable performance can be achieved when using SLICO as the division method. However, when using Segment Anything (SAM) for sub-region division, although it remains competitive compared to other attribution methods, there are noticeable performance differences relative to superpixel-based methods. This is mainly because SAM occasionally segments sub-regions that are too large, lacking fine-grained detail, and may include areas that negatively impact the results. Optimizing SAM for more fine-grained segmentation will be a focus of future work, which could lead to improved performance.

Fig.~\ref{vis:multimodal} shows the interpretation results for different multimodal foundation models on the ImageNet dataset. In the CLIP model attribution results, both Grad-ECLIP and our method accurately located the horse chestnut seed. However, Grad-ECLIP’s response jittered as key regions were revealed, while ours remained stable. Additionally, the region searched by our method achieved higher confidence, demonstrating its superiority. In the ImageBind model, the watermark affected the HSIC-Attribution method, causing it to misattribute the location and miss the chameleon, leading to poor faithfulness. In contrast, our method successfully identifies the area where ImageBind accurately recognizes the chameleon category. An interesting phenomenon occurs in LanguageBind: while the HSIC-Attribution method seems to more effectively locate the clownfish, the model shows no positive response as key regions are revealed. Our method, however, identifies both the clownfish and surrounding background regions, enabling the model to recognize the category more effectively. This highlights the gap between human cognition and machine learning, illustrating why humans alone cannot evaluate attribution effectiveness. Please refer to the supplementary material~\ref{supp:vis-resnet} for more visualization results.

By reformulating the image attribution problem as a subset selection problem, we effectively address the issue of insufficient granularity in the attribution region, making our approach better suited for explaining models with complex interactive responses. As a result, our method significantly improves faithfulness compared to existing attribution algorithms.

\subsubsection{Faithfulness on Single-Modal Models}

We also validate the effectiveness of our explanation method on single-modal models, including CNN, Vision Transformer, and Vision Mamba architectures. For each model, we select 5,000 correctly predicted ImageNet images.

\textbf{Faithfulness on CNN:} We select the ResNet-101~\cite{he2016deep} pre-trained on ImageNet to verify the effectiveness of our method. Table~\ref{faithfulness_on_correct_imagenet_rn101} presents the attribution performance of various methods, where our approach outperforms the state-of-the-art perturbation-based HSIC-Attribution method by 34.5\% in the Deletion AUC score and 47.0\% in the Insertion AUC score when using SLICO as the sub-region division algorithm. Compared with the previous version of the method, the Insertion and Deletion AUC scores increased by 29.1\% and 11.8\%, respectively. This demonstrates that our method is highly competitive on the CNN models. Please refer to the supplementary material~\ref{supp:vis-resnet} for the visualization results.

\begin{table}[h]
    \caption{Deletion and Insertion AUC scores for single modal ResNet-101 backbone on the ImageNet validation sets.}
    \label{faithfulness_on_correct_imagenet_rn101}
    \begin{center}
        \resizebox{0.4\textwidth}{!}{
            \begin{tabular}{l|ccc}
                \toprule
                \multirow{2}{*}{Methods} & \multicolumn{3}{c}{ResNet-101~\cite{he2016deep}} \\ \cmidrule(r){2-4}
                  & Deletion ($\downarrow$) & Insertion ($\uparrow$) & $\mu$Fidelity ($\uparrow$) \\ \midrule
                Saliency~\cite{simonyan2014deep} & 0.1148 & 0.2613 & 0.4556 \\
                Integrated Gradients~\cite{sundararajan2017axiomatic} & \textbf{0.0521} & 0.1482 & 0.1822 \\
                iGOS++~\cite{khorram2021igos++} & 0.1934 & 0.3779 & 0.3377 \\
                Kernel SHAP~\cite{lundberg2017unified} & 0.1944 & 0.2878 & 0.1399 \\
                RISE~\cite{petsiuk2018rise} & 0.1101 & 0.4920 & 0.3827 \\
                HSIC-Attribution~\cite{novello2022making} & 0.0866 & 0.4448 & \textbf{0.5805} \\
                Previous work~\cite{chen2024less} & 0.0643 & 0.5063 & - \\
                \rowcolor{gray!20}
                \textsc{LiMA} (w/ SLICO) & 0.0567 & \textbf{0.6538} & 0.4203 \\ 
                \bottomrule
                \end{tabular}
        }
    \end{center}
    \vspace{-10pt}
\end{table}

\textbf{Faithfulness on Vision Transformer:} To validate the effectiveness of our method on transformer backbones, we employed the Vision Transformer-Large (ViT-L)~\cite{dosovitskiy2021image} and Swin Transformer-Large (Swin-L)~\cite{liu2021swin}, both pre-trained on ImageNet. Table~\ref{faithfulness_on_correct_imagenet_vit} presents the attribution performance, where RISE~\cite{petsiuk2018rise} proves to be a strong baseline method for ViT backbones. Compared to RISE, our method outperforms both the ViT-L and Swin-L models, achieving a 48.9\% and 56.4\% improvement in the Deletion AUC score and a 19.9\% and 34.9\% improvement in the Insertion AUC score, respectively. Compared with the previous version of the method, the Insertion and Deletion AUC scores increased by 11.1\% and 33.2\% on average, respectively. Please refer to the supplementary material~\ref{supp:vis-vit} for the visualization results.

\begin{table}[h]
    \caption{Deletion and Insertion AUC scores for single modal vision transformer backbones on the ImageNet validation sets.}
    \label{faithfulness_on_correct_imagenet_vit}
    \begin{center}
        \resizebox{0.48\textwidth}{!}{
            \begin{tabular}{l|cc|cc}
                \toprule
                \multirow{2}{*}{Methods} & \multicolumn{2}{c}{ViT-L~\cite{dosovitskiy2021image}} & \multicolumn{2}{c}{Swin-L~\cite{liu2021swin}} \\ \cmidrule(r){2-3} \cmidrule(r){4-5}
                  & Deletion ($\downarrow$) & Insertion ($\uparrow$) & Deletion ($\downarrow$) & Insertion ($\uparrow$) \\ \midrule
                Saliency~\cite{simonyan2014deep} & 0.3499 & 0.3544 & 0.2888 & 0.4178 \\
                Integrated Gradients~\cite{sundararajan2017axiomatic} & 0.1530 & 0.2402 & 0.3074 & 0.4198 \\
                Kernel SHAP~\cite{lundberg2017unified} & 0.4155 & 0.4942 & 0.4300 & 0.4794 \\
                RISE~\cite{petsiuk2018rise} & 0.1947 & 0.6177 & 0.3338 & 0.5655 \\
                HSIC-Attribution~\cite{novello2022making} & 0.1898 & 0.5253 & 0.3589 & 0.5531  \\
                Previous work~\cite{chen2024less} & 0.1268 & 0.6715 & 0.2490 & 0.6811 \\
                \rowcolor{gray!20}
                \textsc{LiMA} (w/ SLICO) & \textbf{0.0994} & \textbf{0.7405} & \textbf{0.1371} & \textbf{0.7627} \\ 
                \bottomrule
                \end{tabular}
        }
    \end{center}
\end{table}

\textbf{Faithfulness on Vision Mamba:} To evaluate the effectiveness of our attribution method on the latest Mamba architecture, we tested it on Vision Mamba (base)~\cite{zhu2024vision} and MambaVision (L2)~\cite{hatamizadeh2024mambavision}, both pre-trained on ImageNet. As shown in Table~\ref{faithfulness_on_correct_imagenet_vim}, RISE remains a robust baseline method among the comparisons. However, our method demonstrated superior performance, surpassing RISE on both the Vision Mamba (base) and MambaVision (L2) models with a 58.9\% and 59.8\% increase in Deletion AUC score, and a 30.0\% and 25.3\% improvement in Insertion AUC score, respectively. Compared with the previous version of the method, the Insertion and Deletion AUC scores increased by 14.3\% and 37.6\%, respectively. Please see the supplementary material~\ref{supp:vis-vim} for the visualization results.

\begin{table}[h]
    \vspace{-6pt}
    \caption{Deletion and Insertion AUC scores for single modal mamba backbones on the ImageNet validation sets.}
    \label{faithfulness_on_correct_imagenet_vim}
    \begin{center}
        \resizebox{0.48\textwidth}{!}{
            \begin{tabular}{l|cc|cc}
                \toprule
                \multirow{2}{*}{Methods} & \multicolumn{2}{c}{Vision Mamba (base)~\cite{zhu2024vision}} & \multicolumn{2}{c}{MambaVision (L2)~\cite{hatamizadeh2024mambavision}} \\ \cmidrule(r){2-3} \cmidrule(r){4-5}
                  & Deletion ($\downarrow$) & Insertion ($\uparrow$) & Deletion ($\downarrow$) & Insertion ($\uparrow$) \\ \midrule
                Saliency~\cite{simonyan2014deep} & 0.2793 & 0.1938 & 0.1681 & 0.1494 \\
                Integrated Gradients~\cite{sundararajan2017axiomatic} & 0.0914 & 0.2323 & 0.1067 & 0.2202 \\
                Kernel SHAP~\cite{lundberg2017unified} & 0.2947 & 0.3603 & 0.2030 & 0.2345 \\
                RISE~\cite{petsiuk2018rise} & 0.2008 & 0.5352 & 0.1419 & 0.2937 \\
                HSIC-Attribution~\cite{novello2022making} & 0.2067 & 0.4872 & 0.1541 & 0.2753 \\
                Previous work~\cite{chen2024less} & 0.1387 & 0.6281 & 0.0872 & 0.3121 \\
                \rowcolor{gray!20}
                \textsc{LiMA} (w/ SLICO) & \textbf{0.0825} & \textbf{0.6957} & \textbf{0.0570} & \textbf{0.3680} \\ 
                \bottomrule
                \end{tabular}
        }
    \end{center}
    \vspace{-10pt}
\end{table}

In summary, our interpretable approach demonstrates strong attribution performance and generalizes well across models with different architectures.

\subsubsection{Faithfulness on Face Recognition Models}

Table~\ref{faithfulness_on_correct_face} shows the results on the Celeb-A and VGG-Face2 validation sets. 
On the Celeb-A dataset, our method surpasses the state-of-the-art HSIC-Attribution~\cite{novello2022making} approach, achieving a 42.0\% improvement in Deletion AUC, a 5.9\% increase in Insertion AUC, and a 9.0\% enhancement in $\mu$Fidelity. Similarly, on the VGG-Face2 dataset, our method outperforms HSIC-Attribution with gains of 46.3\% in Deletion AUC, 2.2\% in Insertion AUC, and 1.2\% in $\mu$Fidelity. In addition, compared to the previous version~\cite{chen2024less}, our method improves the Deletion AUC score by 36.4\% and the Insertion AUC score by 4.8\% on the Celeb-A dataset. On the VGG-Face2 dataset, it achieves improvements of 45.8\% in the Deletion AUC score and 2.0\% in the Insertion AUC score. Compared to the previous large-scale pre-trained models validated on the ImageNet dataset, the improvement is less pronounced. This is primarily due to the fact that face recognition images are aligned, resulting in fewer complex interaction effects than those encountered in large-scale pre-trained models. Nevertheless, our method shows consistent improvement over the baseline and achieves state-of-the-art performance.

\begin{table}[h]
    \caption{Deletion, Insertion AUC scores, and $\mu$Fidelity on the Celeb-A and VGG-Face2 validation sets.}
    \label{faithfulness_on_correct_face}
    \begin{center}
        \resizebox{0.48 \textwidth}{!}{
            \begin{tabular}{l|ccc|ccc}
            \toprule
             & \multicolumn{3}{c}{Celeb-A (ArcFace ResNet-101)}       & \multicolumn{3}{c}{VGGFace2 (ArcFace ResNet-101)} \\ \cmidrule(lr){2-4} \cmidrule(lr){5-7} 
            \multirow{-2}{*}{Methods} & Deletion ($\downarrow$)    & Insertion ($\uparrow$) & $\mu$Fidelity ($\uparrow$)  & Deletion ($\downarrow$) & Insertion ($\uparrow$) & $\mu$Fidelity ($\uparrow$)  \\ \midrule
            Saliency~\cite{simonyan2014deep} & 0.1453 & 0.4632 & 0.3258 & 0.1907 & 0.5612 & 0.5034 \\ 
            Grad-CAM~\cite{selvaraju2020grad} & 0.2865 & 0.3721 & 0.0672 & 0.3103 & 0.4733 & 0.2773 \\ 
            Integrated Gradients~\cite{sundararajan2017axiomatic} & 0.0680 & 0.3578 & 0.3352 & 0.0749 & 0.5399 & 0.3853 \\
            LIME~\cite{ribeiro2016should} & 0.1484 & 0.5246 & 0.1985 & 0.2034 & 0.6185 & 0.4856 \\ 
            Kernel SHAP~\cite{lundberg2017unified} & 0.1409 & 0.5246 & 0.2722 & 0.2119 & 0.6132 & 0.4905 \\ 
            RISE~\cite{petsiuk2018rise} & 0.1444 & 0.5703 &  0.4147 & 0.1375 & 0.6530 & 0.3822 \\ 
            HSIC-Attribution~\cite{novello2022making} & 0.1151 & 0.5692 & 0.5530  & 0.1317 &  0.6694 & 0.5295 \\ 
            Previous work~\cite{chen2024less}  & 0.1054 & 0.5752 & -  & 0.1304 & 0.6705 & -   \\ \rowcolor{gray!20}
            \textsc{LiMA} (w/ SLICO)   & \textbf{0.0668} & \textbf{0.5849} & \textbf{0.5743} & \textbf{0.0707} & \textbf{0.6841} & \textbf{0.5361} \\ 
            \bottomrule
            \end{tabular}
        }
    \end{center}
    \vspace{-10pt}
\end{table}

\subsubsection{Faithfulness on A Fine-grained Model}

Table~\ref{faithfulness_on_correct_cub} shows the results on the CUB-200-2011 validation sets. Compared to the state-of-the-art HSIC-Attribution~\cite{novello2022making} method, our approach surpasses it by 13.44\% in Deletion AUC score, 13.22\% in Insertion AUC score, and achieves a 3.3\% improvement in $\mu$Fidelity. Additionally, compared to the previous version~\cite{chen2024less}, our method improves the Deletion AUC score by 8.6\% and the Insertion AUC score by 6.7\%. The CUB-200-2011 dataset is a single-object dataset, meaning the model correctly predicts with fewer sub-region complex interaction effects, but our method still delivers highly competitive results.

\begin{table}[!t]
    \caption{Deletion, Insertion AUC scores, and $\mu$Fidelity on the CUB-200-2011 validation set.}
    \label{faithfulness_on_correct_cub}
    \begin{center}
        \resizebox{0.4 \textwidth}{!}{
            \begin{tabular}{l|ccc}
            \toprule
            & \multicolumn{3}{c}{CUB-200-2011}                                     \\ \cmidrule(lr){2-4} 
            \multirow{-2}{*}{Methods} & Deletion ($\downarrow$)    & Insertion ($\uparrow$) & $\mu$Fidelity ($\uparrow$)  \\ \midrule
            Saliency~\cite{simonyan2014deep} & 0.0682 & 0.6585 & 0.1409 \\ 
            Grad-CAM~\cite{selvaraju2020grad} & 0.0810 & 0.7224 & 0.1486 \\ 
            Integrated Gradients~\cite{sundararajan2017axiomatic} & 0.1693 & 0.5263 & 0.2832\\
            LIME~\cite{ribeiro2016should} & 0.1070  & 0.6812 & 0.0948 \\ 
            Kernel SHAP~\cite{lundberg2017unified} & 0.1016  & 0.6763 & 0.1452 \\ 
            RISE~\cite{petsiuk2018rise} & 0.0665 & 0.7193 & 0.2370 \\ 
            HSIC-Attribution~\cite{novello2022making} & 0.0647 & 0.6843 & 0.3435 \\ 
            Previous work~\cite{chen2024less}  & 0.0613 & 0.7262 & -  \\ 
            \rowcolor{gray!20}
            \textsc{LiMA} (w/ Segment Anything) & 0.0829 & 0.7106 & 0.3420 \\
            \rowcolor{gray!20}
            \textsc{LiMA} (w/ SLICO)   & \textbf{0.0560} & \textbf{0.7748} & \textbf{0.3547} \\ 
            \bottomrule
            \end{tabular}
        }
    \end{center}
    \vspace{-20pt}
\end{table}

\subsubsection{Faithfulness on Audio Recognition Model}

Explanation and transparency are important issues to ethical AI for music and audio~\cite{ma2024foundation}, yet there is limited existing research on explaining audio classification. In this section, we analyze audio classification explanations from the perspective of spectrograms, utilizing two multi-modal foundation models. For sub-region division, we adopt a patch-based approach.
Table~\ref{faithfulness_on_correct_vggsound_vit} presents the attribution results of our method compared to the baseline on the VGG-Sound~\cite{chen2020vggsound} dataset. On ImageBind, the Square-Grad~\cite{hooker2019benchmark} method is notably competitive, but our approach surpasses it by 56.0\% in Deletion AUC and 83.9\% in Insertion AUC when using a $10 \times 10$ patch sub-region division. On LanguageBind, the RISE~\cite{petsiuk2018rise} method performs well, yet our method exceeds it by 42.8\% in Deletion AUC and 79.6\% in Insertion AUC. Although Integrated Gradients~\cite{sundararajan2017axiomatic} achieves the best Deletion AUC score, our method delivers strong performance overall, with Integrated Gradients performing poorly on the Insertion AUC. We demonstrate the robust attribution capability of our method on the Audio modality, highlighting its adaptability across different modalities.
Fig.~\ref{vis:audio} presents the attribution visualization results on the spectrogram, highlighting the regions the model focuses on for classification. Future work will explore ways to enhance human understanding of these interpretation results.

\begin{table}[h]
    \caption{Deletion and Insertion AUC scores for multimodal ViT backbones on the VGG-Sound validation sets.}
    \label{faithfulness_on_correct_vggsound_vit}
    \begin{center}
        \resizebox{0.42\textwidth}{!}{
            \begin{tabular}{l|cc|cc}
                \toprule
                \multirow{2}{*}{Methods} & \multicolumn{2}{c}{ImageBind~\cite{girdhar2023imagebind}} & \multicolumn{2}{c}{LanguageBind~\cite{zhu2024languagebind}} \\ \cmidrule(r){2-3} \cmidrule(r){4-5}
                  & Deletion ($\downarrow$) & Insertion ($\uparrow$) & Deletion ($\downarrow$) & Insertion ($\uparrow$) \\ \midrule
                Saliency~\cite{simonyan2014deep} & 0.0999 & 0.1823 & 0.1024 & 0.2184 \\
                Gradient-Input~\cite{shrikumar2016not} & 0.0326 & 0.1257 & 0.0243 & 0.1388 \\
                SmoothGrad~\cite{smilkov2017smoothgrad} & 0.0734 & 0.0955 & 0.0535 & 0.1664 \\
                Integrated Gradients~\cite{sundararajan2017axiomatic} & \textbf{0.0313} & 0.1292 & \textbf{0.0165} & 0.2153 \\
                VarGrad~\cite{seo2018noise} & 0.0755 & 0.2121 & 0.0955 & 0.2643 \\
                Square-Grad~\cite{hooker2019benchmark} & 0.0748 & 0.2120 & 0.0888 & 0.2697 \\
                LIME~\cite{ribeiro2016should} & 0.1375 & 0.1422 & 0.1092 & 0.2164 \\
                Kernel SHAP~\cite{lundberg2017unified} & 0.1316 & 0.1436 & 0.1456 & 0.1785 \\
                Occlusion~\cite{ancona2018towards} & 0.0707 & 0.1650 & 0.0911 & 0.2052 \\
                RISE~\cite{petsiuk2018rise} & 0.0782 & 0.1952 & 0.0800 & 0.2879 \\
                Sobol-Attribution~\cite{fel2021look} & 0.0881 & 0.1429 & 0.0956 & 0.2235 \\
                HSIC-Attribution~\cite{novello2022making} & 0.0891 & 0.1418 & 0.0904 & 0.2270 \\
                \rowcolor{gray!20}
                \textsc{LiMA} ($8 \times 8$) & 0.0425 & \textbf{0.4164} & 0.0806 &  \textbf{0.5362} \\ 
                \rowcolor{gray!20}
                \textsc{LiMA} ($10 \times 10$) & 0.0329 & 0.3899 & 0.0458 & 0.5172 \\
                \bottomrule
                \end{tabular}
        }
    \end{center}
    \vspace{-15pt}
\end{table}

\begin{figure}[h]
    \centering  
    \begin{overpic}[width=0.48\textwidth,tics=8]{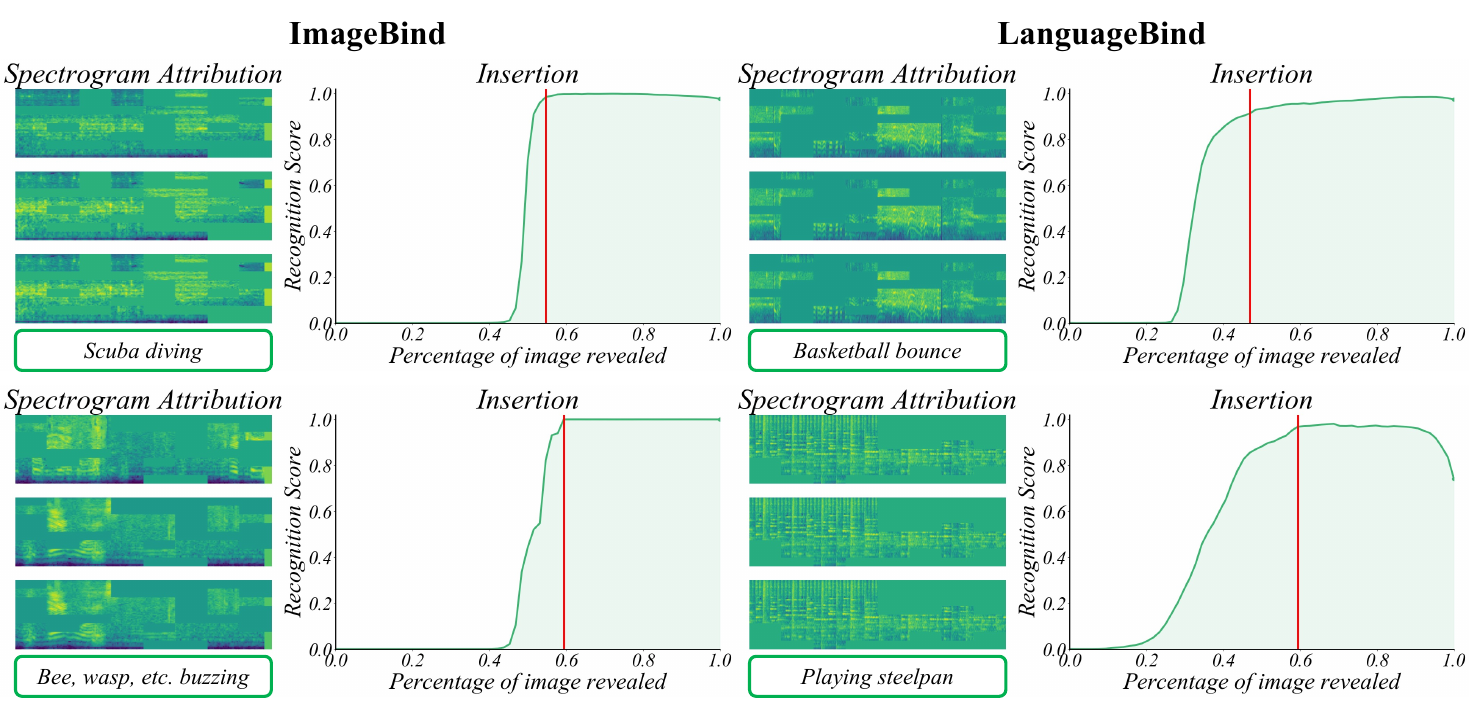} 
        \put (30.7,45.2) {\tiny \cite{girdhar2023imagebind}}
        \put (82.3,45.2) {\tiny \cite{zhu2024languagebind}}
    \end{overpic}  
    \caption{Attribution visualizations for audio classification on ImageBind and LanguageBind, highlighting the most critical spectrogram regions attributed to the model. Additionally, we provide the corresponding Insertion AUC curve.}  
    \label{vis:audio}  
    \vspace{-10pt}
\end{figure}

\begin{figure*}[!t]
    \centering
    \vspace{-5pt}
    \begin{overpic}[width=\textwidth,tics=8]{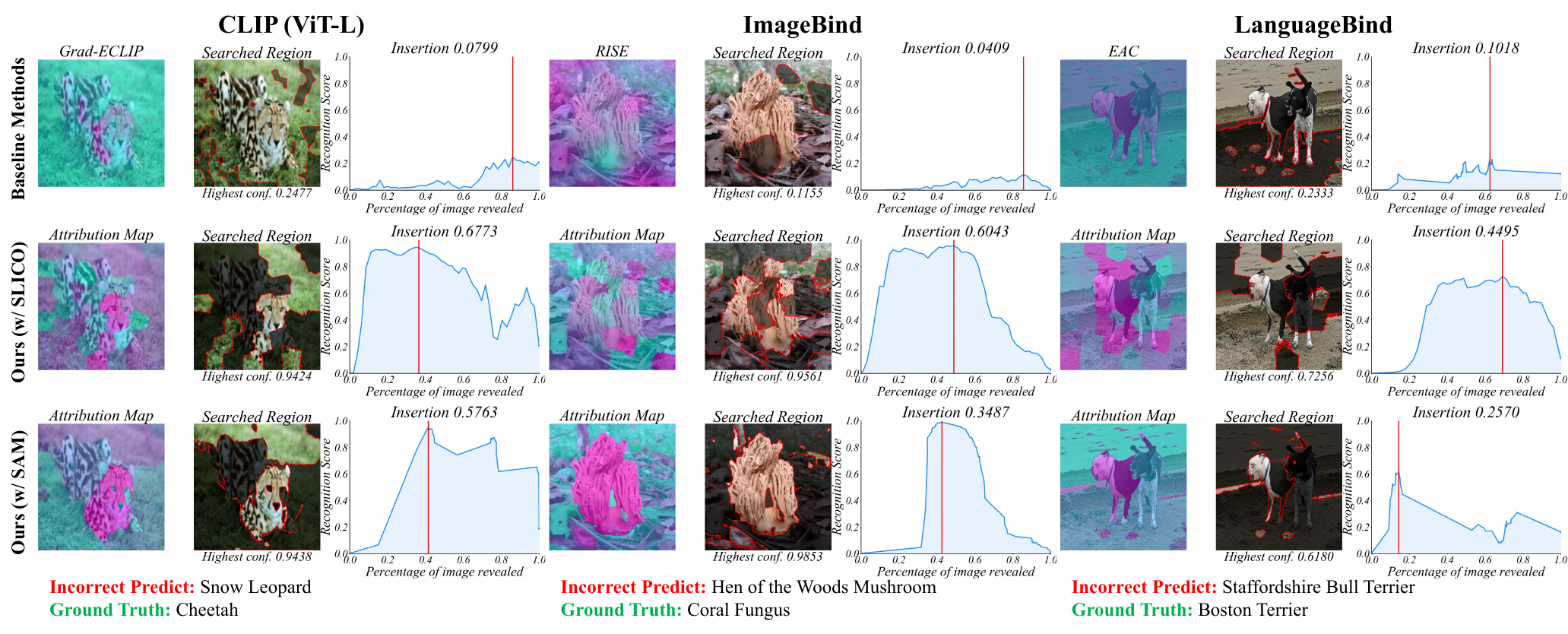} 
        \put (9.3,36.9) {\tiny \cite{zhao2024gradient}}
        \put (40.3,36.9) {\tiny \cite{petsiuk2018rise}}
        \put (72.9,36.9) {\tiny \cite{sun2023explain}}
    \end{overpic}
    \caption{Visualization of the method for discovering what causes foundation model prediction errors. The Insertion curve shows the correlation between the searched region and ground truth class prediction confidence. The highlighted region matches the searched region indicated by the red line in the curve, and the dark region is the error cause identified by the method.}  
    \label{vis:debug_multimodal}  \vspace{-15pt}
\end{figure*}

\subsection{Discover the Causes of Incorrect Predictions}\label{error_debug}

In this section, we aim to analyze the misclassified images and assess whether the attribution algorithm can accurately identify the causes of the model’s prediction errors. We use the Average Highest Confidence within a limited search region (\textit{e.g.}, reveal up to 25\% of the image region) and the Insertion AUC score as our evaluation metrics. The Deletion AUC score is not considered, as the initial prediction scores for the misclassified samples are already very low. We validate our method on the ImageNet~\cite{deng2009imagenet}, CUB-200-2011~\cite{welindercaltech}, VGG-Sound~\cite{chen2020vggsound}, and LC25000 (Lung)~\cite{borkowski2019lung} datasets. The ground truth labels are given.

\subsubsection{Attributing Multimodal Foundation Model Errors}

We first analyzed the samples misclassified by the multimodal foundation models. Table~\ref{discover_mistake_multimodal} presents the attribution results for these misclassified samples, using the ground truth labels for interpretation. Our approach achieved stunning results. When utilizing Segment Anything (SAM)~\cite{kirillov2023segment} as the sub-region division algorithm, our method outperforms RISE~\cite{petsiuk2018rise} by 61.5\% on the CLIP (ViT-L) model and 52.4\% on ImageBind, while surpassing EAC~\cite{sun2023explain} by 63.8\% on LanguageBind in terms of the average highest confidence during the global search. For Insertion metrics, our method exceeds RISE~\cite{petsiuk2018rise} by 127.2\% on CLIP (ViT-L) and 89.4\% on ImageBind, and outperforms EAC~\cite{sun2023explain} by 101.9\% on LanguageBind. 

\begin{table}[!h]
    \caption{Evaluation on discovering the cause of incorrect predictions for different multimodal models on ImageNet dataset.}
    \label{discover_mistake_multimodal}
    \begin{center}
    \resizebox{0.48 \textwidth}{!}{
        \begin{tabular}{cl|ccccc}
        \toprule
        & \multicolumn{1}{c|}{} & \multicolumn{4}{c}{Average highest confidence ($\uparrow$)} &                             \\
        \multirow{-2}{*}{Backbones} & \multicolumn{1}{c|}{\multirow{-2}{*}{Methods}}       & (0-25\%)                  & (0-50\%)                  & (0-75\%)                  & (0-100\%)                 & \multirow{-2}{*}{Insertion ($\uparrow$)} \\ \midrule
        & RISE~\cite{petsiuk2018rise} & 0.2710 & 0.3985 & 0.4558 & 0.4879 & 0.1827 \\
        & HSIC-Attribution~\cite{novello2022making} & 0.2148 & 0.2723 & 0.3051 & 0.3462 & 0.1062 \\
        & ViT-CX~\cite{xie2023vit} & 0.1268 & 0.2367 & 0.3137 & 0.3902 & 0.1236 \\
        & EAC~\cite{sun2023explain} (w/ Segment Anything) & 0.1004 & 0.1861 & 0.2561 & 0.3397 & 0.1086 \\
        & Grad-ECLIP~\cite{zhao2024gradient} & 0.2202 & 0.3404 & 0.3767 & 0.4123 & 0.1397 \\
        & Previous work~\cite{chen2024less} & 0.2012 & 0.3073 & 0.3582 & 0.3989 & 0.2017 \\
        & \cellcolor[HTML]{EFEFEF}\textsc{LiMA} (w/ SLICO) & \cellcolor[HTML]{EFEFEF}0.5677 & \cellcolor[HTML]{EFEFEF}0.7030 & \cellcolor[HTML]{EFEFEF}0.7479 & \cellcolor[HTML]{EFEFEF}0.7583 & \cellcolor[HTML]{EFEFEF}\textbf{0.4789}    \\
        \multirow{-8}{*}{CLIP (ViT-L)~\cite{radford2021learning}} & \cellcolor[HTML]{EFEFEF}\textsc{LiMA} (w/ Segment Anything) & \cellcolor[HTML]{EFEFEF}\textbf{0.6440} & \cellcolor[HTML]{EFEFEF}\textbf{0.7307} & \cellcolor[HTML]{EFEFEF}\textbf{0.7676} & \cellcolor[HTML]{EFEFEF}\textbf{0.7878} & \cellcolor[HTML]{EFEFEF}0.4151 \\ \midrule
        & RISE~\cite{petsiuk2018rise} & 0.2427 & 0.4045 & 0.4757 & 0.5164 & 0.2237 \\
        & HSIC-Attribution~\cite{novello2022making} & 0.1711 & 0.2726 & 0.3260 & 0.3636 & 0.2180 \\
        & ViT-CX~\cite{xie2023vit} & 0.1299 & 0.2593 & 0.3532 & 0.4261 & 0.1129 \\
        & EAC~\cite{sun2023explain} (w/ Segment Anything) & 0.1064 & 0.1901 & 0.2658 & 0.3463 & 0.1124 \\
        & Previous work~\cite{chen2024less} & 0.4436 & 0.5865 & 0.6157 & 0.6348 & 0.3929 \\
        & \cellcolor[HTML]{EFEFEF}\textsc{LiMA} (w/ SLICO) & \cellcolor[HTML]{EFEFEF}0.5351 & \cellcolor[HTML]{EFEFEF}0.6816 & \cellcolor[HTML]{EFEFEF}0.7445 & \cellcolor[HTML]{EFEFEF}0.7557 & \cellcolor[HTML]{EFEFEF}\textbf{0.4681}    \\
        \multirow{-7}{*}{ImageBind~\cite{girdhar2023imagebind}} & \cellcolor[HTML]{EFEFEF}\textsc{LiMA} (w/ Segment Anything) & \cellcolor[HTML]{EFEFEF}\textbf{0.6639} & \cellcolor[HTML]{EFEFEF}\textbf{0.7487} & \cellcolor[HTML]{EFEFEF}\textbf{0.7736} & \cellcolor[HTML]{EFEFEF}\textbf{0.7872} & \cellcolor[HTML]{EFEFEF}0.4237    \\ \midrule
        & RISE~\cite{petsiuk2018rise} & 0.0967 & 0.1902 & 0.2530 & 0.2920 & 0.1500 \\
        & HSIC-Attribution~\cite{novello2022making} & 0.0787 & 0.1380 & 0.1779 & 0.2112 & 0.0861 \\
        & ViT-CX~\cite{xie2023vit} & 0.0414 & 0.1123 & 0.1628 & 0.2348 & 0.0816 \\
        & EAC~\cite{sun2023explain} (w/ Segment Anything) & 0.1404 & 0.2802 & 0.3755 & 0.4245 & 0.1721 \\
        & Previous work~\cite{chen2024less} & 0.2050 & 0.2933 & 0.3360 & 0.3506 & 0.1812 \\
        & \cellcolor[HTML]{EFEFEF}\textsc{LiMA} (w/ SLICO) & \cellcolor[HTML]{EFEFEF}0.3843 & \cellcolor[HTML]{EFEFEF}0.5311 & \cellcolor[HTML]{EFEFEF}0.5824 & \cellcolor[HTML]{EFEFEF}0.5978 & \cellcolor[HTML]{EFEFEF}\textbf{0.3483}    \\
        \multirow{-7}{*}{LanguageBind~\cite{zhu2024languagebind}} & \cellcolor[HTML]{EFEFEF}\textsc{LiMA} (w/ Segment Anything) & \cellcolor[HTML]{EFEFEF}\textbf{0.5401} & \cellcolor[HTML]{EFEFEF}\textbf{0.6399} & \cellcolor[HTML]{EFEFEF}\textbf{0.6753} & \cellcolor[HTML]{EFEFEF}\textbf{0.6955} & \cellcolor[HTML]{EFEFEF}0.3475  \\  \bottomrule
        \end{tabular}
    }
    \end{center}
\end{table}

This is primarily due to the presence of potential regions in the samples where the model predicts incorrectly, which exerts a strong influence on the model’s decision, leading to complex interactions between features. Traditional attribution algorithms struggle to effectively identify these regions. In contrast, our method leverages marginal contribution scores, enabling it to attribute regions that are more relevant to the target class and pinpoint the causes of the model’s incorrect predictions. Therefore, our method achieves more competitive results than the baseline method on such samples.

Fig.~\ref{vis:debug_multimodal} shows some visualization results, we compare our method with strong baseline methods on different foundation models. The Insertion curve represents the relationship between the region searched by the methods and the ground truth class prediction confidence. We find that our method can search for regions with higher confidence scores predicted by the model than the SOTA methods with a small percentage of the searched image region. The highlighted region
shown in the figure can be considered as the cause of the correct prediction of the model, while the dark region is the cause of the incorrect prediction of the model. This also demonstrates that our method can achieve higher interpretability with fewer fine-grained regions.

Additionally, we observed that for the average highest confidence metric, using SAM as the sub-region division algorithm yields better results, while the superpixel-based division method performs better for the Insertion metric. This is mainly because the sub-regions generated by SAM have stronger semantic coherence, giving it an advantage when identifying regions of high model interest. However, SAM sometimes produces sub-regions that are too large and uncontrollable, leading to mixed positive and negative regions in one sub-region. As a result, it performs slightly worse than the superpixel-based method for the Insertion metric.

\begin{table}[]
    \caption{Evaluation on discovering the cause of incorrect predictions for different multimodal models on the VGG-Sound dataset.}
    \label{discover_mistake_multimodal_vggsound}
    \begin{center}
    \resizebox{0.48 \textwidth}{!}{
        \begin{tabular}{cl|ccccc}
        \toprule
        & \multicolumn{1}{c|}{} & \multicolumn{4}{c}{Average highest confidence ($\uparrow$)} &                             \\
        \multirow{-2}{*}{Backbones} & \multicolumn{1}{c|}{\multirow{-2}{*}{Methods}}       & (0-25\%)                  & (0-50\%)                  & (0-75\%)                  & (0-100\%)                 & \multirow{-2}{*}{Insertion ($\uparrow$)} \\ \midrule
        & Integrated Gradients~\cite{sundararajan2017axiomatic} & 0.0081 & 0.0318 & 0.0776 & 0.3904 & 0.0460 \\
        & RISE~\cite{petsiuk2018rise} & 0.0064 & 0.0533 & 0.1547 & 0.2546 & 0.0385 \\
        & HSIC-Attribution~\cite{novello2022making} & 0.0107 & 0.0441 & 0.0609 & 0.1657 & 0.0201 \\
        \multirow{-4}{*}{ImageBind~\cite{girdhar2023imagebind}} & \cellcolor[HTML]{EFEFEF}\textsc{LiMA} ($10 \times 10$) & \cellcolor[HTML]{EFEFEF}\textbf{0.0955} & \cellcolor[HTML]{EFEFEF}\textbf{0.1978} & \cellcolor[HTML]{EFEFEF}\textbf{0.3992} & \cellcolor[HTML]{EFEFEF}\textbf{0.5028} & \cellcolor[HTML]{EFEFEF}\textbf{0.1813} \\ \midrule
        & Integrated Gradients~\cite{sundararajan2017axiomatic} & 0.0574 & 0.1092 & 0.1747 & 0.4254 & 0.1153 \\
        & RISE~\cite{petsiuk2018rise} & 0.0478 & 0.1415 & 0.2283 & 0.2799 & 0.1001 \\
        & HSIC-Attribution~\cite{novello2022making} & 0.0365 & 0.0711 & 0.1213 & 0.1821 & 0.0532 \\
        \multirow{-4}{*}{LanguageBind~\cite{zhu2024languagebind}} & \cellcolor[HTML]{EFEFEF}\textsc{LiMA} ($10 \times 10$) & \cellcolor[HTML]{EFEFEF}\textbf{0.2656} & \cellcolor[HTML]{EFEFEF}\textbf{0.4195} & \cellcolor[HTML]{EFEFEF}\textbf{0.5543} & \cellcolor[HTML]{EFEFEF}\textbf{0.5767} & \cellcolor[HTML]{EFEFEF}\textbf{0.3022} \\  \bottomrule
        \end{tabular}
    }
    \end{center}
    \vspace{-10pt}
\end{table}

\begin{figure}[]
    \centering  
    \begin{overpic}[width=0.48\textwidth,tics=8]{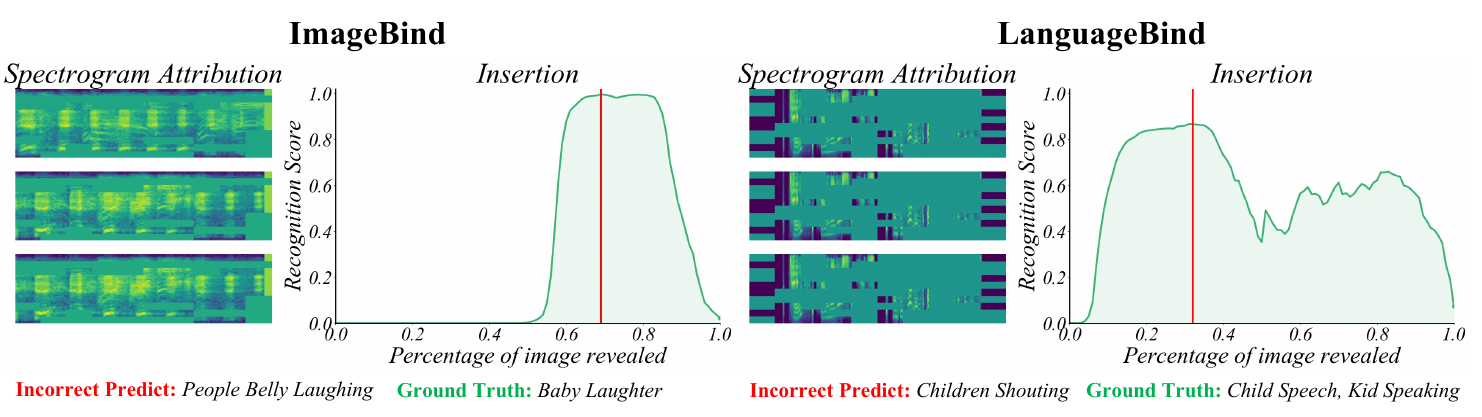} 
        \put (30.5,25.4) {\tiny \cite{girdhar2023imagebind}}
        \put (81.8,25.4) {\tiny \cite{zhu2024languagebind}}
    \end{overpic}
    \caption{Visualization of the method for discovering what causes audio recognition model prediction errors.}  
    \label{vis:debug_audio}  
    \vspace{-10pt}
\end{figure}

\subsubsection{Attributing Audio Foundation Model Errors}

Next, we validate our approach on the audio modality by attributing misclassified audio samples and analyzing the spectrogram regions responsible for prediction errors. Table~\ref{discover_mistake_multimodal_vggsound} presents the attribution results on ImageBind~\cite{girdhar2023imagebind} and LanguageBind~\cite{zhu2024languagebind}. Our method outperforms RISE by 28.8\% and 35.6\% in average highest confidence on the ImageBind and LanguageBind models, respectively. For the Insertion AUC score, it surpasses RISE by 294.1\% on ImageBind and 162.1\% on LanguageBind, demonstrating the strong generalizability of our method across different modalities. Fig.~\ref{vis:debug_audio} illustrates the attribution results of our method on misclassified audio samples. By correctly searching the relevant spectral region, our method enables the model to make accurate predictions and reveals the spectrogram region responsible for the corrected prediction.

\begin{table}[]
    \caption{Evaluation on discovering the cause of incorrect predictions for medical multimodal model QuiltNet on LC25000 (Lung) dataset.}
    \label{discover_mistake_multimodal_medical}
    \begin{center}
    \resizebox{0.48 \textwidth}{!}{
        \begin{tabular}{cl|ccccc}
        \toprule
        & \multicolumn{1}{c|}{} & \multicolumn{4}{c}{Average highest confidence ($\uparrow$)} &                             \\
        \multirow{-2}{*}{Backbone} & \multicolumn{1}{c|}{\multirow{-2}{*}{Methods}}       & (0-25\%)                  & (0-50\%)                  & (0-75\%)                  & (0-100\%)                 & \multirow{-2}{*}{Insertion ($\uparrow$)} \\ \midrule
        & Integrated Gradients~\cite{sundararajan2017axiomatic} & 0.2989 & 0.2989 & 0.2989 & 0.3012 & 0.1344 \\
        & Kernel SHAP~\cite{lundberg2017unified} & 0.1957 & 0.1962 & 0.1965 & 0.2024 & 0.1359 \\
        & RISE~\cite{petsiuk2018rise} & 0.4986 & 0.5109 & 0.5163 & 0.5171 & 0.1470 \\
        & HSIC-Attribution~\cite{novello2022making} & 0.4858 & 0.4927 & 0.5005 & 0.5013 & 0.1468 \\
        & ViT-CX~\cite{xie2023vit} & 0.3497 & 0.3595 & 0.3637 & 0.3658 & 0.1419 \\
        \multirow{-6}{*}{QuiltNet (ViT/B-32)~\cite{ikezogwo2024quilt}} & \cellcolor[HTML]{EFEFEF}\textsc{LiMA} (w/ SLICO) & \cellcolor[HTML]{EFEFEF}\textbf{0.6215} & \cellcolor[HTML]{EFEFEF}\textbf{0.6764} & \cellcolor[HTML]{EFEFEF}\textbf{0.6901} & \cellcolor[HTML]{EFEFEF}\textbf{0.6915} & \cellcolor[HTML]{EFEFEF}\textbf{0.3401}    \\ 
        \bottomrule
        \end{tabular}
    }
    \end{center}
\end{table}

\begin{figure}[]
    \centering  
    \begin{overpic}[width=0.48\textwidth,tics=8]{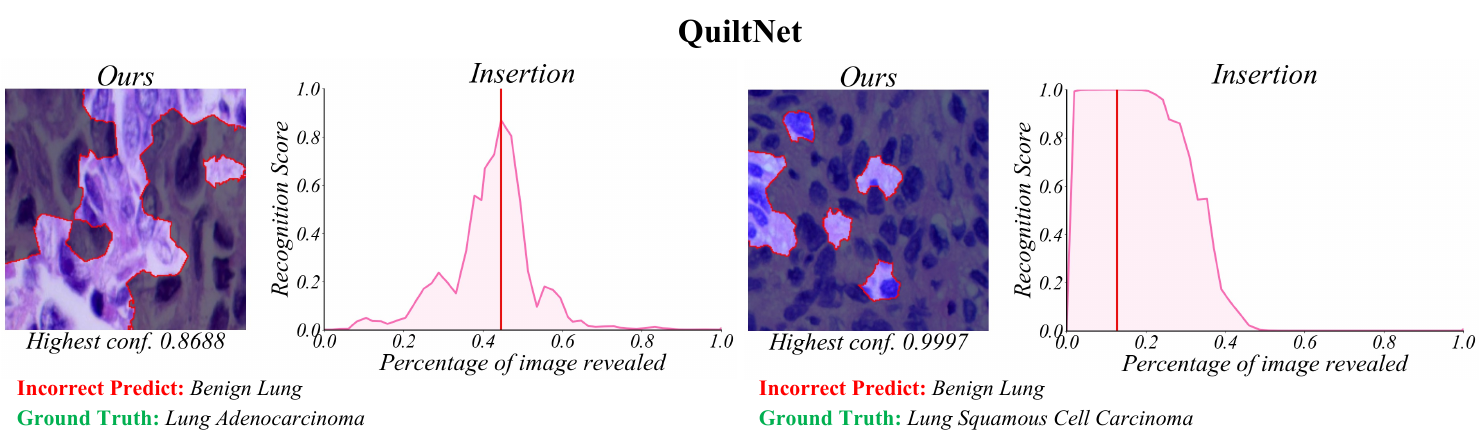} 
        \put (54.4,27.5) {\tiny \cite{ikezogwo2024quilt}}
    \end{overpic}
    \caption{Visualization of the method for discovering what causes medical foundation model prediction errors.}  
    \label{vis:debug_medical}  
    \vspace{-10pt}
\end{figure}

\subsubsection{Attributing Medical Foundation Model Errors}

We also evaluated our method’s ability to attribute errors on the medical multimodal foundation model, QuiltNet~\cite{ikezogwo2024quilt}. Table~\ref{discover_mistake_multimodal_medical} presents the attribution results. Compared to the HSIC-Attribution method, our approach improves the average highest confidence by 39.2\% and the Insertion AUC score by 137.5\%, demonstrating strong attribution capabilities in the medical imaging domain. Fig.~\ref{vis:debug_medical} shows images misclassified as Benign Lung by QuiltNet. Our method identifies key regions that trigger a strong response from the model for the correct category, thus interpreting the sources of the prediction errors.

\subsubsection{Attributing Fine-graininess Recognition Model Errors}

Finally, we evaluated our method’s ability to detect errors on fine-grained datasets across three CNN models. Table~\ref{discover_mistake_cub} presents the results for misclassified samples from the CUB-200-2011~\cite{welindercaltech} validation set, where our method demonstrates significant improvements. Compared to the state-of-the-art HSIC-Attribution method, in the global search interval (0-100\%), the average highest confidence identified by our method increased by 217.5\% on ResNet-101, 152.2\% on MobileNetV2, and 123.9\% on EfficientNetV2-M, when using SAM as the sub-region division method. Similarly, the Insertion AUC score saw substantial improvements, increasing by 178.7\% on ResNet-101, 101.7\% on MobileNetV2, and 139.4\% on EfficientNetV2-M. In addition, compared to the previous version of our method, we achieved improvements of 75.4\%, 36.9\%, and 81.5\% in the average highest confidence on the three CNN models, and 127.4\%, 71.5\%, and 120.7\% in the Insertion AUC score, respectively. See the supplementary material~\ref{cub_debug_vis} for the visualization results.

\begin{table}[!h]
    \vspace{-15pt}
    \caption{Evaluation on discovering the cause of incorrect predictions for different convolutional neural network backbones on CUB-200-2011 dataset.}
    \label{discover_mistake_cub}
    \begin{center}
    \resizebox{0.48 \textwidth}{!}{
        \begin{tabular}{cl|ccccc}
        \toprule
        & \multicolumn{1}{c|}{} & \multicolumn{4}{c}{Average highest confidence ($\uparrow$)} &                             \\
        \multirow{-2}{*}{Backbones} & \multicolumn{1}{c|}{\multirow{-2}{*}{Methods}}       & (0-25\%)                  & (0-50\%)                  & (0-75\%)                  & (0-100\%)                 & \multirow{-2}{*}{Insertion ($\uparrow$)} \\ \midrule
        & Grad-CAM++~\cite{chattopadhay2018grad}  & 0.1988 & 0.2447 & 0.2544 & 0.2647 & 0.1094 \\
        & Score-CAM~\cite{wang2020score} & 0.1896 & 0.2323 & 0.2449 & 0.2510 & 0.1073 \\
        & Kernel SHAP~\cite{lundberg2017unified} & 0.0083 & 0.0247 & 0.0574 & 0.2642 & 0.1008\\
        & RISE~\cite{petsiuk2018rise} & 0.2406 & 0.3032 & 0.3316 & 0.3807 & 0.2238 \\
        & HSIC-Attribution~\cite{novello2022making} & 0.1709 & 0.2091 & 0.2250 & 0.2493 & 0.1446 \\
        & Previous work~\cite{chen2024less} & 0.2430 & 0.3519 & 0.3984 & 0.4513 & 0.1772  \\
        & \cellcolor[HTML]{EFEFEF}\textsc{LiMA} (w/ SLICO) & \cellcolor[HTML]{EFEFEF}0.3683 & \cellcolor[HTML]{EFEFEF}\textbf{0.5016} & \cellcolor[HTML]{EFEFEF}0.5501 & \cellcolor[HTML]{EFEFEF}0.6003 & \cellcolor[HTML]{EFEFEF}0.3694    \\
        \multirow{-8}{*}{ResNet-101~\cite{he2016deep}} & \cellcolor[HTML]{EFEFEF}\textsc{LiMA} (w/ Segment Anything) & \cellcolor[HTML]{EFEFEF}\textbf{0.3874} & \cellcolor[HTML]{EFEFEF}0.4957 & \cellcolor[HTML]{EFEFEF}\textbf{0.5975} & \cellcolor[HTML]{EFEFEF}\textbf{0.7916} & \cellcolor[HTML]{EFEFEF}\textbf{0.4030}    \\ \midrule
        & Grad-CAM++~\cite{chattopadhay2018grad}  & 0.1584 & 0.2820 & 0.3223 & 0.3462 & 0.1284 \\
        & Score-CAM~\cite{wang2020score} & 0.1574 & 0.2456 & 0.2948 & 0.3141 & 0.1195 \\
        & Kernel SHAP~\cite{lundberg2017unified} & 0.0525 & 0.1367 & 0.2321 & 0.4142 & 0.2511 \\
        & RISE~\cite{petsiuk2018rise} & 0.2340 & 0.3101 & 0.3355 & 0.4280 & 0.2573 \\
        & HSIC-Attribution~\cite{novello2022making} & 0.1648 & 0.2190 & 0.2415 & 0.2914 & 0.1635 \\
        & Previous work~\cite{chen2024less} & 0.2460 & 0.4142 & 0.4913 & 0.5367 & 0.1922    \\
        & \cellcolor[HTML]{EFEFEF}\textsc{LiMA} (w/ SLICO) & \cellcolor[HTML]{EFEFEF}0.3466 & \cellcolor[HTML]{EFEFEF}0.4686 & \cellcolor[HTML]{EFEFEF}0.5462 & \cellcolor[HTML]{EFEFEF}0.6278 & \cellcolor[HTML]{EFEFEF}\textbf{0.3448}    \\
        \multirow{-8}{*}{MobileNetV2~\cite{sandler2018mobilenetv2}} & \cellcolor[HTML]{EFEFEF}\textsc{LiMA} (w/ Segment Anything) & \cellcolor[HTML]{EFEFEF}\textbf{0.5339} & \cellcolor[HTML]{EFEFEF}\textbf{0.6098} & \cellcolor[HTML]{EFEFEF}\textbf{0.6611} & \cellcolor[HTML]{EFEFEF}\textbf{0.7349} & \cellcolor[HTML]{EFEFEF}0.3297  \\ \midrule
        & Grad-CAM++~\cite{chattopadhay2018grad} & 0.2338 & 0.2549 & 0.2598 & 0.2659 & 0.1605 \\
        & Score-CAM~\cite{wang2020score} & 0.2126 & 0.2327 & 0.2375 & 0.2403 & 0.1572 \\
        & Kernel SHAP~\cite{lundberg2017unified} & 0.0123 & 0.0670 & 0.1489 & 0.3357 & 0.0787 \\
        & RISE~\cite{petsiuk2018rise} & 0.2958 & 0.3339 & 0.3473 & 0.3697 & 0.2274 \\
        & HSIC-Attribution~\cite{novello2022making} & 0.2418 & 0.2561 & 0.2615 & 0.2679 & 0.1611 \\
        & Previous work~\cite{chen2024less} & 0.2616 & 0.3117 & 0.3235 & 0.3306 & 0.1748 \\
        & \cellcolor[HTML]{EFEFEF}\textsc{LiMA} (w/ SLICO) & \cellcolor[HTML]{EFEFEF}0.3670 & \cellcolor[HTML]{EFEFEF}0.4575 & \cellcolor[HTML]{EFEFEF}0.4799 & \cellcolor[HTML]{EFEFEF}0.4859 & \cellcolor[HTML]{EFEFEF}0.3518    \\
        \multirow{-8}{*}{EfficientNetV2-M~\cite{tan2021efficientnetv2}}   & \cellcolor[HTML]{EFEFEF}\textsc{LiMA} (w/ Segment Anything) & \cellcolor[HTML]{EFEFEF}\textbf{0.4138} & \cellcolor[HTML]{EFEFEF}\textbf{0.5035} & \cellcolor[HTML]{EFEFEF}\textbf{0.5543} & \cellcolor[HTML]{EFEFEF}\textbf{0.5999} & \cellcolor[HTML]{EFEFEF}\textbf{0.3857}    \\ \bottomrule
        \end{tabular}
    }
    \end{center}
    \vspace{-15pt}
\end{table}

We found that our method’s ability to attribute misclassified samples on the CUB-200-2011 dataset is significantly higher than its performance on correctly classified samples. This is primarily because correctly predicted samples in this dataset typically lack complex interaction effects, whereas misclassified samples often involve potential causal confusion and exhibit more intricate feature interactions. As our method is well-suited to handling complex feature interactions, it shows a distinct advantage in attributing misclassified samples compared to the baseline method.

\subsection{Ablation Study}\label{ablation_study}

In this section, we provide a comprehensive ablation study of the various components of our approach, conducted on the ImageBind model and the ImageNet dataset, and the sub-region division algorithm is SLICO.

\subsubsection{Ablation of the Bidirectional Greedy Search Algorithm}

In Section~\ref{alg_fast}, we analyzed the optimal bounds and inference number optimization of the proposed bidirectional greedy search algorithm from a theoretical perspective. We conduct a series of ablation studies, as shown in Table~\ref{ablation_search}. Although the naive greedy algorithm delivers the best performance, it still requires a considerable amount of time to run. By implementing the bidirectional greedy search strategy, we significantly reduce the number of model inferences, leading to a substantial decrease in attribution time. The time complexity of this approach is influenced by the hyperparameter $n_p$. While a larger $n_p$ increases the search duration, it also enhances attribution performance, highlighting a trade-off between speed and accuracy. Our experiments indicate that setting $n_p$ to 8 strikes an effective balance, providing strong attribution performance while reducing the runtime to 62.5\% of the original.

\begin{table}[h]
    \vspace{-20pt}
    \caption{Ablation study on the bidirectional greedy search algorithm for ImageBind on the ImageNet dataset.}
    \label{ablation_search}
    \begin{center}
    \resizebox{0.48 \textwidth}{!}{
        \begin{tabular}{cc|cccc}
        \toprule
        \multirow{2}{*}{\begin{tabular}[c]{c}Bi-directional\\ subset\end{tabular}} & \multirow{2}{*}{\begin{tabular}[c]{l}Num. of neg.\\ samples $n_p$\end{tabular}} & \multicolumn{3}{c}{Faithfulness metrics} & \multirow{2}{*}{Exec. time (s)} \\ 
         & & Deletion ($\downarrow$) & Insertion ($\uparrow$) & $\mu$Fidelity ($\uparrow$) &  \\ \midrule
        \XSolidBrush & 0 & \textbf{0.1260} & \textbf{0.8099} & \textbf{0.3669} & 32 \\
        \Checkmark & 1 & 0.2048 & 0.6445 & 0.3344 & 16 \\
        \Checkmark & 4 & 0.1309 & 0.7303 & 0.3508 & 18 \\
        \Checkmark & 8 & 0.1280 & 0.7463 & 0.3526 & 20 \\
        \Checkmark & 12 & 0.1271 & 0.7517 & 0.3551 & 22 \\
        \Checkmark & 16 & 0.1269 & 0.7542 & 0.3471 & 25 \\
        \bottomrule
        \end{tabular}
    }
    \end{center}
\end{table}

\subsubsection{Ablation of the Submodular Function}

We analyzed the impact of various submodular function-based score functions. We conducted ablation studies on both correctly predicted and incorrectly predicted samples.

\textbf{Impact on correctly predicting samples:} As shown in Table~\ref{ablation_score_function}, using a single score function within the submodular framework limits the faithfulness of attribution. However, combining score functions in pairs leads to improved faithfulness. Our results show that removing any of the four score functions results in degraded Deletion and Insertion scores, confirming the importance of each function. Notably, the consistency and collaboration scores have a greater impact on attribution, while the confidence and effectiveness scores serve a more constraining role. This is reflected in why larger weights are assigned to the first two functions during hyperparameter tuning.

\begin{table}[h]\vspace{-10pt}
    \caption{Ablation study on components of different score functions of submodular function for ImageBind on the ImageNet dataset.}
    \label{ablation_score_function}
    \begin{center}
        \resizebox{0.48\textwidth}{!}{
            \begin{tabular}{cccc|cc}
            \toprule
            \multicolumn{4}{c|}{Submodular   Function}                                        & \multirow{3}{*}{Deletion ($\downarrow$)} & \multirow{3}{*}{Insertion ($\uparrow$)} \\ 
            Cons. Score & Colla. Score & Conf. Score & Eff. Score &   &   \\
            (Equation~\ref{consistency_function}) & (Equation~\ref{collaboration_function}) & (Equation~\ref{confidence_function}) & (Equation~\ref{r_function}) &   & \\ \midrule
            \Checkmark   & \XSolidBrush & \XSolidBrush & \XSolidBrush & 0.2567 &   0.6324 \\
            \XSolidBrush & \Checkmark & \XSolidBrush & \XSolidBrush & 0.1031 & 0.5598 \\
            \XSolidBrush & \XSolidBrush & \Checkmark & \XSolidBrush & 0.3416 & 0.3609 \\
            \XSolidBrush & \XSolidBrush & \XSolidBrush & \Checkmark & 0.3267 & 0.3909 \\
            \midrule
            \Checkmark & \Checkmark & \XSolidBrush & \XSolidBrush & 0.1290 & 0.7403 \\
            \XSolidBrush & \Checkmark & \Checkmark & \XSolidBrush & \textbf{0.1022} & 0.5520 \\
            \XSolidBrush & \XSolidBrush & \Checkmark & \Checkmark & 0.2861 & 0.4465 \\
            \midrule
            \XSolidBrush & \Checkmark & \Checkmark & \Checkmark & 0.1115 & 0.5595 \\
            \Checkmark & \XSolidBrush & \Checkmark & \Checkmark & 0.2570 & 0.6432 \\
            \Checkmark & \Checkmark & \XSolidBrush & \Checkmark &  0.1293 & 0.7436 \\
            \Checkmark & \Checkmark & \Checkmark & \XSolidBrush & 0.1333 & 0.7442 \\
            \Checkmark & \Checkmark & \Checkmark & \Checkmark &  0.1280  & \textbf{0.7463} \\
            \bottomrule
            \end{tabular}
        }
    \end{center}\vspace{-10pt}
\end{table}

\textbf{Impact on incorrectly predicting samples:} We primarily focus on the impact of the consistency score and collaboration score on the attribution of misclassified samples. As shown in Table~\ref{discover_mistake_ablation}, removing any score function leads to a decrease in both the average highest confidence and the Insertion AUC score. This indicates that even for attribution on misclassified samples, both the consistency and collaboration scores are essential. Maximum performance is only achieved when both scores are used together.

\begin{table}[h]
    \caption{Ablation study on submodular function score components for incorrectly predicted samples for ImageBind on the ImageNet dataset.}
    \label{discover_mistake_ablation}
    \begin{center}
    \resizebox{0.48\textwidth}{!}{
        \begin{tabular}{cc|ccccc}
        \toprule
        Cons. Score & Colla. Score & \multicolumn{4}{c}{Average highest confidence ($\uparrow$)} & \multirow{2}{*}{Insertion ($\uparrow$)} \\ 
        (Equation~\ref{consistency_function}) & (Equation~\ref{collaboration_function}) & (0-25\%)  & (0-50\%) & (0-75\%)  & (0-100\%) \\ \midrule
        \XSolidBrush & \Checkmark  & 0.0821 & 0.3075 & 0.5869 & 0.6113 & 0.2691 \\
        \Checkmark & \XSolidBrush  & 0.5033 & 0.6087 & 0.6183 & 0.6392 & 0.2949 \\
        \Checkmark & \Checkmark    & \textbf{0.5351} & \textbf{0.6816} & \textbf{0.7445} & \textbf{0.7557} & \textbf{0.4681} \\  \bottomrule
        \end{tabular}
    }
    \end{center}\vspace{-10pt}
\end{table}

\subsubsection{Ablation on Importance Assessment}

In this section, we perform an ablation study on the importance assessment method. The baseline for comparison assumes that the importance difference between adjacent sequence elements in the subset is consistent and equal to 1. We use $\mu$Fidelity as the evaluation metric to assess the rationality of the importance score assignment. As shown in Table~\ref{ablation_importance_assessment}, we found that when evaluating the importance of each element in the subset, the $\mu$Fidelity score improves relative to the baseline. This improvement demonstrates, to some extent, the validity of the importance scores assigned by this assessment strategy. Fig.~\ref{ablation_importance_assessment}A shows an attribution result, highlighting the search region at the inflection point of the Insertion curve, considered the most important. Fig.~\ref{ablation_importance_assessment}B and~\ref{ablation_importance_assessment}C illustrate different importance assessment strategies, showing that the proposed method helps users intuitively identify key regions.

\begin{table}[h]
    \caption{Ablation study on submodular function score components for incorrectly predicted samples for ImageBind on the ImageNet dataset.}
    \label{ablation_importance_assessment}
    \begin{center}
    \resizebox{0.48\textwidth}{!}{
        \begin{tabular}{l|ccc}
        \toprule
        \multicolumn{1}{c|}{\multirow{2}{*}{Strategies}} & \multicolumn{3}{c}{$\mu$Fidelity ($\uparrow$)}                         \\ 
        \multicolumn{1}{c|}{}                            & CLIP (ViT-L) & ImageBind (Huge) & LanguageBind (Large) \\ \midrule
        Consistent subset difference & 0.3208 & 0.3378   & 0.3889 \\
        \rowcolor{gray!20}
        Importance Assessment & \textbf{0.3334} & \textbf{0.3551} & \textbf{0.4096} \\ \bottomrule
        \end{tabular}
    }
    \end{center}
    \vspace{-20pt}
\end{table}

\begin{figure}[h]
    \centering  
    \includegraphics[width=0.48\textwidth]{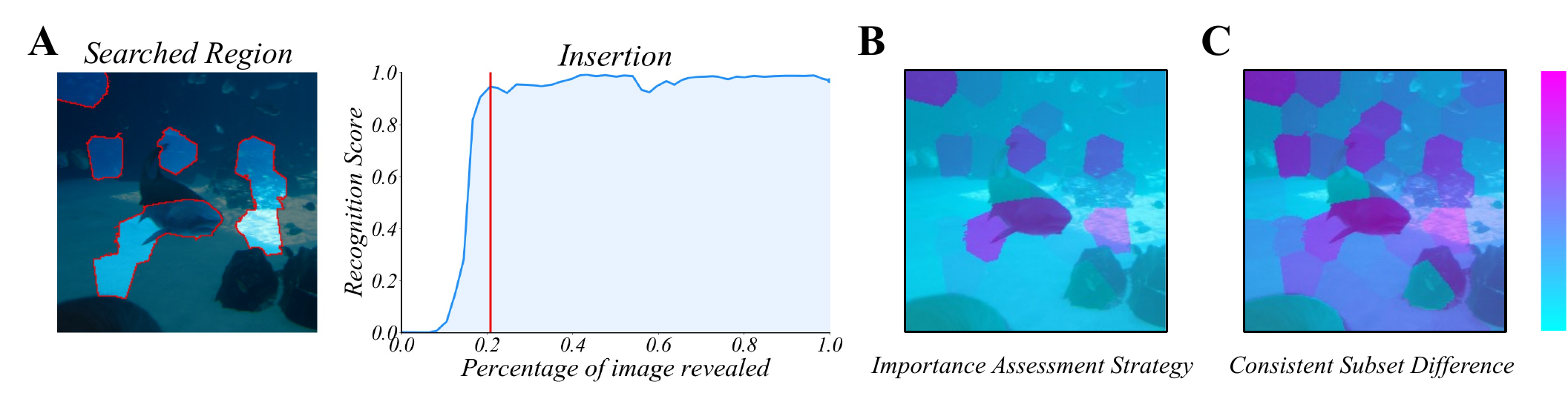} 
    \caption{An example of attribution results. \textbf{A.} Search region at the Insertion curve inflection point. \textbf{B.} Visualization using our strategy. \textbf{C.} Visualization using the baseline.}  
    \label{ablation:value}  
    \vspace{-16pt}
\end{figure}

\subsubsection{Ablation on Division Sub-region Number}

The sub-region division algorithm plays a key role in determining the quality of the search space elements. In addition to the choice of algorithm, the number of sub-regions, denoted as $|V|$, is also an important factor. A higher number of sub-regions results in a more fine-graininess division, but it also increases the attribution time. Therefore, this section examines the impact of the number of sub-regions on both performance and computation time. Similarly, we separately analyze the samples correctly and incorrectly predicted by the model.

\textbf{Impact on correctly predicting samples:} For samples correctly predicted by the model, the relationship between attribution performance and the number of sub-region divisions $|V|$ is shown in Table~\ref{ablation_on_size_imagenet}. We observed that as $|V|$ increases, attribution performance improves progressively, but the computation time rises significantly. Our method outperforms others when the number of sub-regions is set to 49. Future work should focus on improving the efficiency and accuracy of attribution when dealing with larger numbers of sub-regions.

\begin{table}[h]
    \caption{Ablation study on the effect of sub-region division number $|V|$ for ImageBind on the ImageNet dataset.}
    \label{ablation_on_size_imagenet}
    \begin{center}
        \resizebox{0.48\textwidth}{!}{
        \begin{tabular}{c|cccc}
        \toprule
        Sub-region Number $|V|$ & Deletion ($\downarrow$) & Insertion ($\uparrow$) & $\mu$Fidelity & Execution time (s) \\ \midrule
        49 & 0.1280 & 0.7463 & 0.3551 & 20 \\
        64 & 0.1230 & 0.7506 & 0.3473 & 39 \\
        77 & 0.1216 & \textbf{0.7532} & 0.3407 & 59 \\ 
        100 & \textbf{0.1199} & 0.7492 & 0.3291 & 158 \\
        \bottomrule
        \end{tabular}
        }
    \end{center}
    \vspace{-10pt}
\end{table}

\textbf{Impact on incorrectly predicting samples:} 
Next, for the samples the model predicted incorrectly, as shown in Table~\ref{ablation_on_size_error}, our results indicate that dividing the image into more fine-grained sub-regions leads to higher average confidence scores (0-100\%) and Insertion AUC scores. This further demonstrates that finer granularity in sub-region division results in better attribution performance.

\begin{table}[h]
    \vspace{-10pt}
    \caption{Ablation study on the effect of sub-region division number $|V|$ in incorrect sample attribution for ImageBind on the ImageNet dataset.}
    \label{ablation_on_size_error}
    \begin{center}
    \resizebox{0.48 \textwidth}{!}{
        \begin{tabular}{c|ccccc}
        \toprule
        \multirow{2}{*}{Sub-region Number $|V|$} & \multicolumn{4}{c}{Average highest confidence ($\uparrow$)} & \multirow{2}{*}{Insertion ($\uparrow$)} \\ 
        & (0-25\%)  & (0-50\%) & (0-75\%)  & (0-100\%) \\ \midrule 
        49  & 0.5351 & 0.6816 & 0.7445 & 0.7557 & 0.4681 \\
        64  & 0.5579 & 0.7100 & 0.7689 & 0.7803 & 0.4934 \\
        77  &  0.5731 & 0.7194 & 0.7765 & 0.7884 & 0.5004 \\
        100  & 0.5987 & 0.7418 & 0.7958 & 0.8100 & 0.5144 \\
        121  & \textbf{0.6167} & \textbf{0.7599} & \textbf{0.8094} & \textbf{0.8247} & \textbf{0.5285} \\
        \bottomrule
        \end{tabular}
    }
    \end{center}
    \vspace{-10pt}
\end{table}

    \section{Conclusion} \label{conclusion}

    In this paper, we introduce \textsc{LiMA}, a novel and efficient black-box attribution method designed to tackle key challenges in understanding the behavior of AI systems. By identifying a diminishing marginal effect between inputs and outputs, we reformulate attribution as an optimization problem using submodular subset selection. This approach enables more faithful and interpretable attribution results with fewer input regions. Our proposed bidirectional greedy search algorithm significantly enhances attribution efficiency, allowing for the optimal identification of important regions while minimizing errors. Additionally, we observe that input interactions become increasingly complex as model pre-training scales grow, with higher parameter counts, more complex datasets, or in the presence of prediction errors. Our method is particularly well-suited to handle these challenging scenarios. We validate the effectiveness of \textsc{LiMA} across multiple datasets and multimodal models, achieving state-of-the-art performance. Notably, our method not only provides faithful explanations for correctly predicted samples but also delivers clear insights into the causes of the model’s incorrect predictions.


    \appendices
    %
    %
    %
    %
    %

    \ifCLASSOPTIONcaptionsoff
    \newpage
    \fi



    %
    \bibliographystyle{IEEEtran}      
    \footnotesize
    \bibliography{egbib}

    %

    \begin{IEEEbiography}[{\includegraphics[width=0.8in]{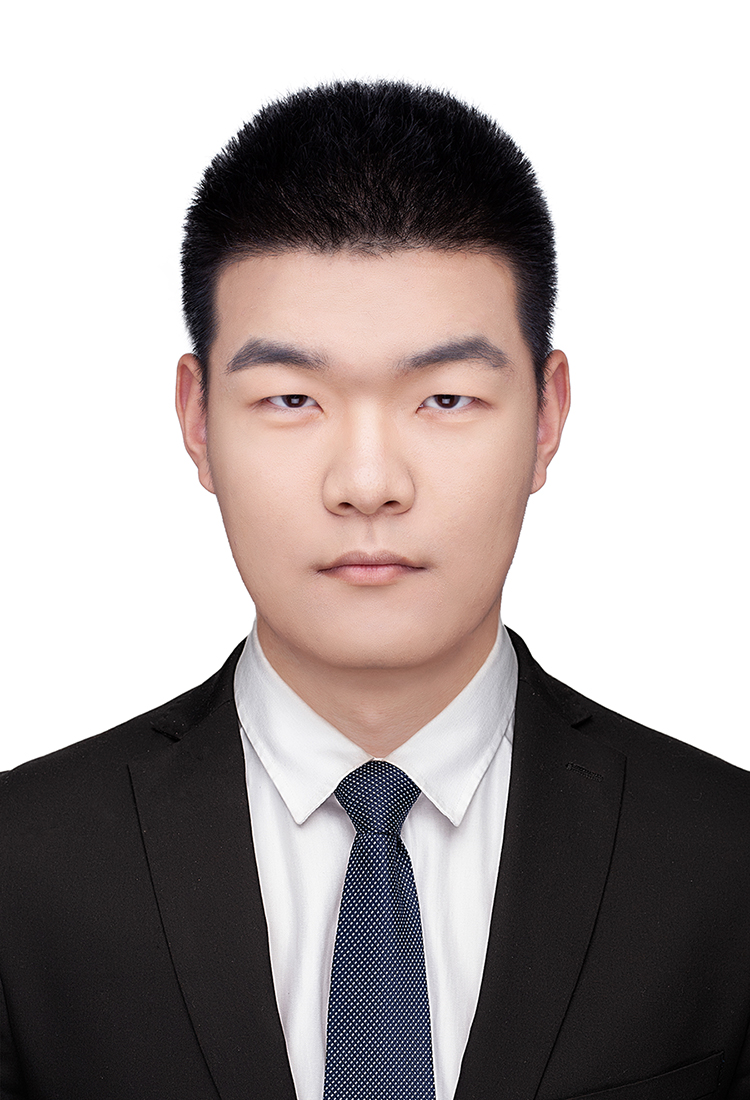}}]{Ruoyu Chen}
        is currently working toward the Ph.D. degree in the School of Cyber Security, University of Chinese Academy of Sciences, China. He received his B.E. degree in measurement \& control technology and instrument from Northeastern University, China in 2021. He has published multiple top journals and conference papers, such as ICLR. He has served as a reviewer for several top journals and conferences such as T-PAMI, ECCV, CVPR, ICML, ICCV, ICLR, and NeurIPS. His research interests mainly include computer vision and interpretable AI.
    \end{IEEEbiography}

    \vspace{-20pt}

    \begin{IEEEbiography}[{\includegraphics[width=0.8in]{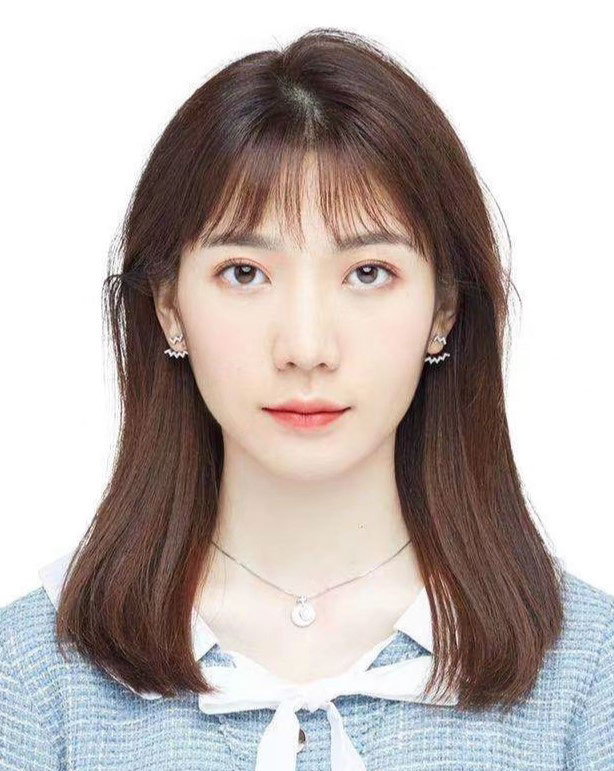}}]{Siyuan Liang} is a Research Fellow at the School of Computing, National University of Singapore, specializing in adversarial machine learning and computer vision. She obtained her Ph.D. in Cyberspace Security from the University of the Chinese Academy of Sciences and a Bachelor's degree in Software Engineering from Sichuan University. Her work primarily focuses on developing robust AI models to enhance computer vision security.
    \end{IEEEbiography}

    \vspace{-20pt}

    \begin{IEEEbiography}[{\includegraphics[width=0.8in]{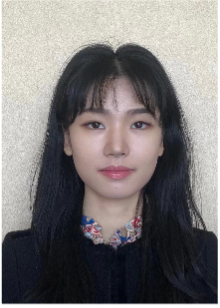}}]{Jingzhi Li} is currently an associate professor with the School of Artificial Intelligence, University of Science and Technology Beijing. From 2023 to 2025, she was an associate professor  with the Institute of Information Engineering, Chinese Academy of Sciences, Beijing, China. She received the Ph.D. degree in cyberspace security from the University of Chinese Academy of Sciences, Beijing, China. Her current research interests include image processing, face recognition security, and multimedia privacy.
    \end{IEEEbiography}

    \vspace{-20pt}

    \begin{IEEEbiography}[{\includegraphics[width=0.8in]{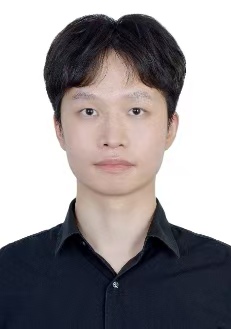}}]{Shiming Liu}
         received the Ph.D. degree in intelligent manufacturing from the Department of Mechanical Engineering, Imperial College London, UK, in 2022. His current research interests include advanced safety-guard for autonomous driving system (ADS), SOTIF, AI interpretability and AI failure detection.
    \end{IEEEbiography}

    \vspace{-20pt}

    \begin{IEEEbiography}[{\includegraphics[width=0.8in]{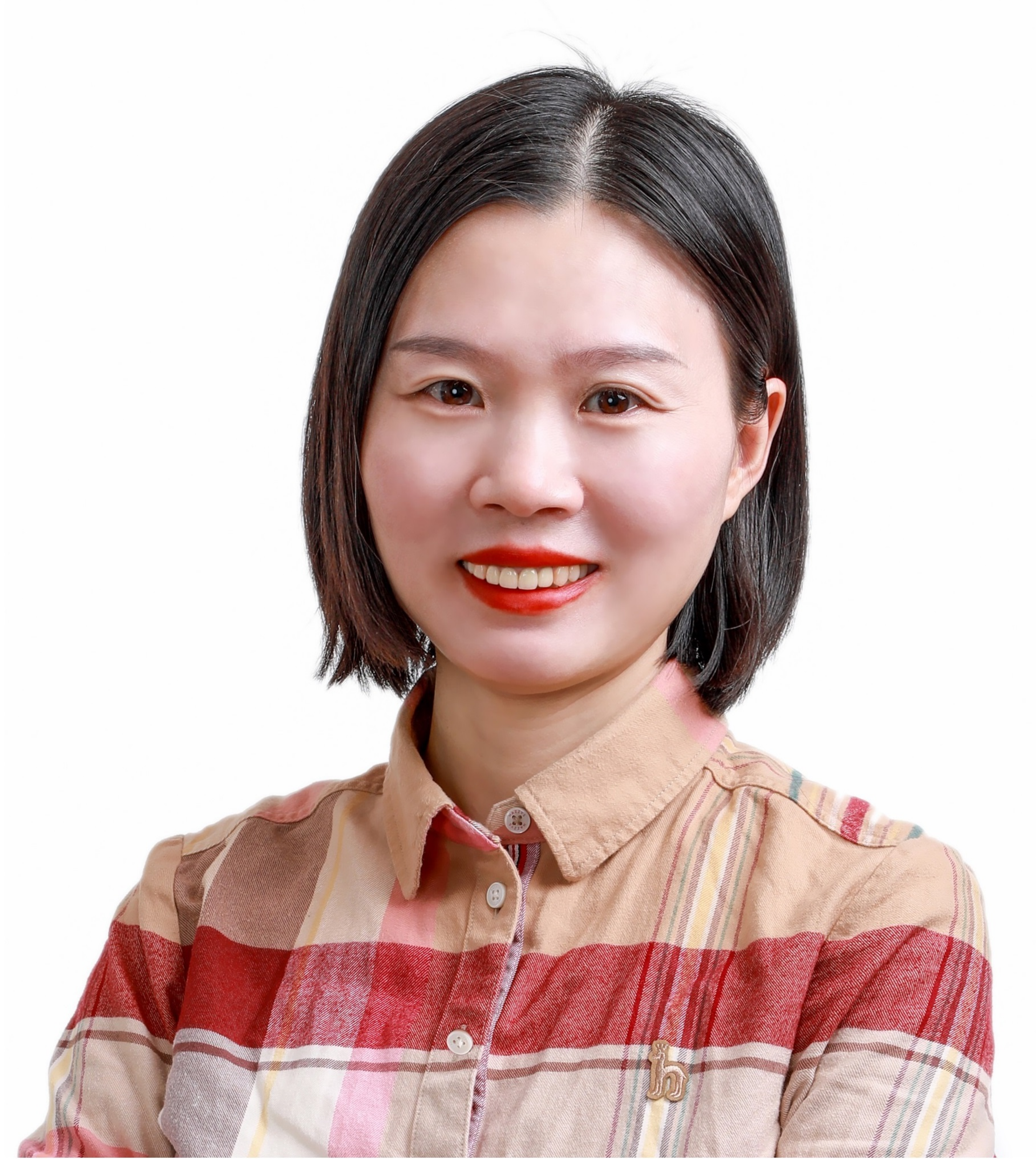}}]{Li Liu}(Senior Member, IEEE) received the Ph.D. degree in information and communication engineering from NUDT, China, in 2012.
        During her Ph.D. study, she spent more than two years as a Visiting Student at the University of Waterloo, Canada, from 2008 to 2010. From 2015 to 2016, she spent ten months visiting the Multimedia Laboratory at the Chinese University of Hong Kong. From 2016.12 to 2018.11, she worked as a senior researcher at the Machine Vision Group at the University of Oulu, Finland. Her current research interests include Computer Vision, Machine Learning, Artificial Intelligence, Trustworthy AI, and Synthetic Aperture Radar. Her papers have currently over 25,000+ citations in Google Scholar.
    \end{IEEEbiography}

    \vspace{-20pt}

    \begin{IEEEbiography}[{\includegraphics[width=0.8in]{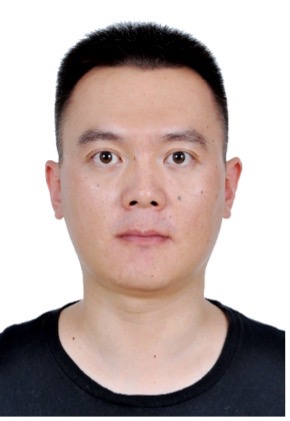}}]{Hua Zhang} is a Professor with the Institute of Information Engineering, Chinese Academy of Sciences. He received the Ph.D. degree in computer science from the School of Computer Science and Technology, Tianjin University, Tianjin, China in 2015. His research interests include computer vision, multimedia, and machine learning.
    \end{IEEEbiography}

    \vspace{-20pt}

    \begin{IEEEbiography}[{\includegraphics[width=0.8in]{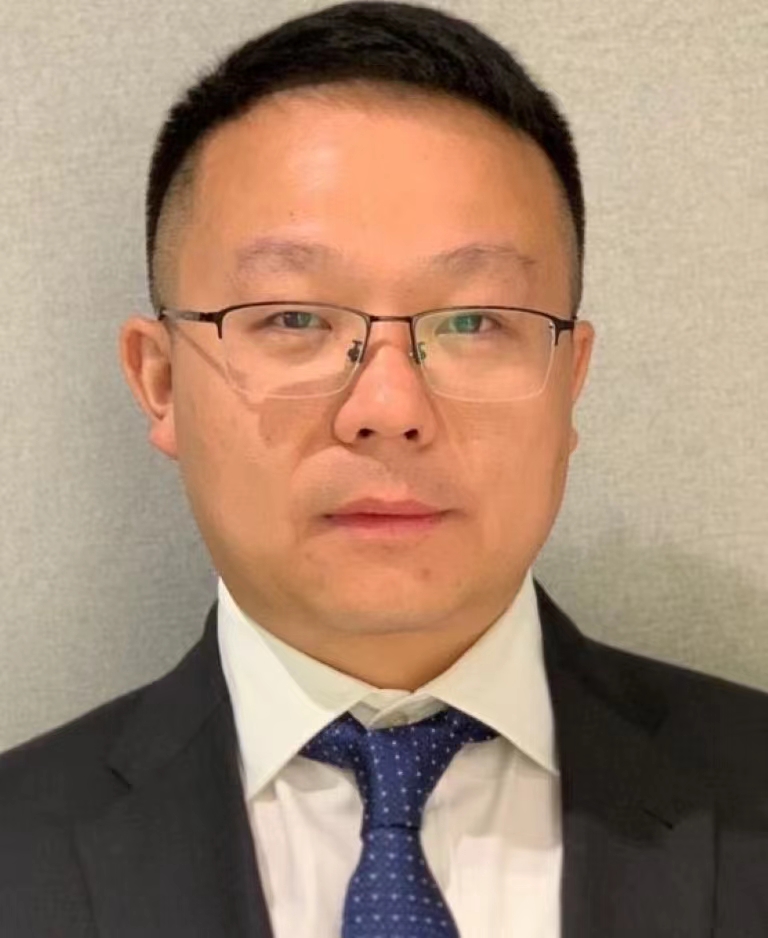}}]{Xiaochun Cao}(Senior Member, IEEE)
        is a Professor and Dean of the School of Cyber Science and Technology, Shenzhen Campus of Sun Yat-sen University. He received the B.E. and M.E. degrees both in computer science from Beihang University (BUAA), China, and the Ph.D. degree in computer science from the University of Central Florida, USA, with his dissertation nominated for the university level Outstanding Dissertation Award. After graduation, he spent about three years at ObjectVideo Inc. as a Research Scientist. From 2008 to 2012, he was a professor at Tianjin University. Before joining SYSU, he was a professor at the Institute of Information Engineering, Chinese Academy of Sciences. He has authored and co-authored over 200 journal and conference papers. In 2004 and 2010, he was the recipients of the Piero Zamperoni best student paper award at the International Conference on Pattern Recognition. He is on the editorial boards of IEEE \textsc{Transactions on Pattern Analysis and Machine Intelligence} and IEEE \textsc{Transactions on Image Processing}, and was on the editorial boards of IEEE \textsc{Transactions on Circuits and Systems for Video Technology} and IEEE \textsc{Transactions on Multimedia}.
    \end{IEEEbiography}


    \newpage

    \begin{appendices}

        \section{Theory proof}

        \subsection{Proof of Lemma~\ref{lemma_submodular} (Diminishing returns)}\label{proof_of_submodular_lemma}

        \begin{proof}
            Consider two sub-sets $S_{a}$ and $S_{b}$ in set $V$, where $S_{a} \subseteq S_{b} \subseteq V$. Given an element $\alpha$, where $\alpha = V \setminus S_{b}$. The necessary and sufficient conditions for the function $\mathcal{F}(\cdot)$ to satisfy the submodular property are:
            \begin{equation}
                \mathcal{F}\left(S_{a} \cup \{ \alpha \}\right) - \mathcal{F}\left(S_{a} \right) \ge \mathcal{F}\left(S_{b} \cup \{ \alpha \}\right) - \mathcal{F}\left(S_{b} \right).
            \end{equation}

            For Eq.~\ref{consistency_function}, let $G \left(S_{a}\right) = F\left(S_{a}\right) \circ \boldsymbol{f}_{s}$. Assuming that the individual element $\alpha$ of the collection division is relatively small, according to the Taylor decomposition~\cite{montavon2017explaining}, we can locally approximate $G \left(S_{a} + \alpha\right) = G \left( S_{a} \right) + \nabla{G\left( S_{a} \right)} \cdot \alpha$. Assuming that the searched $\alpha$ is valid, i.e., $\nabla{G\left( S_{a} \right)} > 0$. Thus:
            \begin{equation}
                \begin{aligned}
                    &s_{\mathrm{cons.}} \left ( S_{a} + \alpha, \boldsymbol{f}_{s} \right ) - s_{\mathrm{cons.}} \left ( S_{a}, \boldsymbol{f}_{s} \right ) \\
                    =& \frac{G(S_{a})  + \nabla{G\left( S_{a} \right)} \cdot \alpha}{\| F(S_{a}) + \nabla{F\left( S_{a} \right)} \cdot \alpha\|} - \frac{G(S_{a})}{\| F(S_{a}) \| } \\
                    \simeq& \frac{\nabla{G\left( S_{a} \right)} \cdot \alpha}{\| F(S_{a})\|},
                \end{aligned}
            \end{equation}
            since $\nabla{G\left( S_{a} \right)} > 0$ and $\alpha$ is valid, mean that $\nabla{G\left( S_{a} \right)} \cdot \alpha$ has a very high probability of having a positive impact on the explainability. Furthermore, $S_{a} \cap \alpha = \varnothing$, $S_a$ and $\alpha$ do not overlap in the image space, and $\alpha$ is small. Therefore, even when the $\nabla{G\left( S_{a} \right)}$ and $\alpha$ directions are not consistent, $\nabla{G\left( S_{a} \right)} \cdot \alpha$ is small. If both $S_a$ and $S_b$ contain positive subsets, then $\nabla{G\left( S_{b} \right)}$ will become less severe or even disappear~\cite{sundararajan2017axiomatic}. Otherwise, it means that the remaining $\alpha$ can no longer produce interpretable results, \textit{i.e.}, $\nabla{G\left( S_{a} \right)} \cdot \alpha \approx 0$. Then, we have:
            \begin{equation}\label{proof_eq_consistency_score}
                \begin{aligned}
                    &s_{\mathrm{cons.}} \left ( S_{a} + \alpha, \boldsymbol{f}_{s} \right ) - s_{\mathrm{cons.}} \left ( S_{a}, \boldsymbol{f}_{s} \right ) \\
                    &- \left( s_{\mathrm{cons.}} \left ( S_{b} + \alpha, \boldsymbol{f}_{s} \right ) - s_{\mathrm{cons.}} \left ( S_{b}, \boldsymbol{f}_{s} \right ) \right) > 0^{-}.
                \end{aligned}
            \end{equation}

            For Eq.~\ref{collaboration_function}, let $G \left(\mathbf{I} - S_{a}\right) = F\left(\mathbf{I} - S_{a}\right) \circ \boldsymbol{f}_{s}$. Assuming that the individual element $\alpha$ of the collection division is relatively small, according to the Taylor decomposition, we can locally approximate $G \left(\mathbf{I} - S_{a} - \alpha\right) = G \left(\mathbf{I} - S_{a}\right) - \nabla G\left(\mathbf{I} - S_{a}\right) \cdot \alpha$. Assuming that the searched alpha is valid, i.e., $\nabla{G\left(\mathbf{I} - S_{a} \right)} > 0$. Thus:
            \begin{equation}
                \begin{aligned}
                    & s_{\mathrm{colla.}} \left ( S_{a} + \alpha, \mathbf{I}, \boldsymbol{f}_{s} \right ) - s_{\mathrm{colla.}} \left ( S_{a}, \mathbf{I}, \boldsymbol{f}_{s} \right ) \\
                    =& \frac{G \left(\mathbf{I} - S_{a}\right)}{\|F\left(\mathbf{I} - S_{a} \right)\|} - \frac{G \left(\mathbf{I} - S_{a}\right) - \nabla G\left(\mathbf{I} - S_{a}\right) \cdot \alpha}{\|F\left(\mathbf{I} - S_{a} -\alpha \right)\| } \\
                    \simeq& \frac{\nabla G\left(\mathbf{I} - S_{a}\right) \cdot \alpha}{\|F\left(\mathbf{I} - S_{a} \right)\|},
                \end{aligned}
            \end{equation}
            likewise, since $\nabla{G\left(\mathbf{I} - S_{a} \right)} > 0$ and $\alpha$ is valid, means that $\nabla{G\left( \mathbf{I} - S_{a} \right)} \cdot \alpha$ has a very high probability of having a positive impact on the explainability.  If both $S_a$ and $S_b$ contain positive subsets, then $\nabla{G\left( \mathbf{I} - S_{b} \right)}$ will become less severe or even disappear. Otherwise, it means that the remaining $\alpha$ can no longer produce interpretable results, \textit{i.e.}, $\nabla{G\left( \mathbf{I} - S_{a} \right)} \cdot \alpha \approx 0$. Then, we have:
            \begin{equation}\label{proof_eq_collaboration_score}
                \begin{aligned}
                    &s_{\mathrm{colla.}} \left ( S_{a} + \alpha, \mathbf{I}, \boldsymbol{f}_{s} \right ) - s_{\mathrm{colla.}} \left ( S_{a}, \mathbf{I}, \boldsymbol{f}_{s} \right ) \\
                    &- \left( s_{\mathrm{colla.}} \left ( S_{b} + \alpha, \mathbf{I}, \boldsymbol{f}_{s} \right ) - s_{\mathrm{colla.}} \left ( S_{b}, \mathbf{I}, \boldsymbol{f}_{s} \right ) \right) > 0^{-}.
                \end{aligned}
            \end{equation}

            For Eq.~\ref{confidence_function}, assuming that the individual element $\alpha$ of the collection division is relatively small, according to the Taylor decomposition, we can locally approximate $G_{c}(S_a) = P(y_c \mid S_a) = F(S_a) \circ \boldsymbol{f}_{s,c}$. Thus:
            \begin{equation}
                \begin{aligned}
                    & s_{\mathrm{conf.}} \left ( S_a + \alpha \right ) - s_{\mathrm{conf.}} \left ( S_a \right )\\
                    =& \frac{\sum_{i=1}^{C}\left(G_i(S_a)+\nabla G_i(S_a)\cdot \alpha \right)\log{\left (G_i(S_a)+\nabla G_i(S_a)\cdot \alpha \right )}}{\log{(C)}}  \\
                    & - \frac{\sum_{i=1}^{C} G_i(S_a)\log{\left (G_i(S_a) \right )}}{\log{(C)}} \\
                    \simeq& \frac{\sum_{i=1}^{C} G_i(S_a)\log{\frac{G_i(S_a)+\nabla G_i(S_a)\cdot \alpha}{G_i(S_a)} }}{\log{(C)}},
                \end{aligned}
            \end{equation}
            let $c$ denote the ground truth class, since $S_{a} \cap \alpha = \varnothing$, $S_a$ and $\alpha$ do not overlap in the image space, and $\alpha$ is small, $\nabla{G\left( S_{a} \right)} \cdot \alpha$ is small, that we have:
            \begin{equation}
                \begin{aligned}
                    & G_c(S_a) \log{\left(1+\frac{\nabla G_c(S_a)\cdot \alpha}{G_c(S_a)} \right)} \\
                    & - G_c(S_b) \log{\left(1+\frac{\nabla G_c(S_b)\cdot \alpha}{G_c(S_b)} \right)} \approx 0,
                \end{aligned}
            \end{equation}
            where for terms not of belong to class $c$, $G_c(S_a)$ and $G_i(S_a)+\nabla G_i(S_a)\cdot \alpha$ tend to 0, so we have:
            \begin{equation}\label{proof_eq_confidence_score}
                \begin{split}
                    s_{\mathrm{conf.}} \left ( S_a + \alpha \right ) &- s_{\mathrm{conf.}} \left ( S_a \right )  \\
                    &- \left( s_{\mathrm{conf.}} \left ( S_b + \alpha \right ) - s_{\mathrm{conf.}} \left ( S_b \right ) \right) \approx 0.
                \end{split}
            \end{equation}

            For Eq.~\ref{r_function}, when a new element $\alpha$ is added to the set $S_{a}$, the minimum distance between elements in $S_{a}$ and other elements may be further reduced, i.e., for any element $s_{i} \in S_{a}$, we have $\min_{s_j \in S_{a} \cup \{\alpha\}, s_j \ne s_i}{\mathrm{dist} \left(
                F\left( s_i \right),   F\left(s_j\right)\right)} \le \min_{s_j \in S_{a}, s_j \ne s_i}{\mathrm{dist} \left(
                F\left( s_i \right),   F\left(s_j\right)\right)}$. Thus:
            \begin{equation}
                \begin{aligned}
                    s_{\mathrm{eff.}} \left ( S_{a} \cup \{\alpha\} \right ) =& \min_{s_i \in S_{a}}{\mathrm{dist} \left(
                        F\left(\alpha\right),   F\left(s_i\right)\right)}  \\
                    &+ \sum_{s_i \in S_{a}}{\min_{s_j \in S_{a} \cup \{\alpha\}, s_{j}\ne s_{i}}}{\mathrm{dist} \left(
                        F\left(s_{i}\right),   F\left(s_{j}\right)\right)} \\
                    =& \min_{s_i \in S_{a}}{\mathrm{dist} \left(
                        F\left(\alpha\right),   F\left(s_i\right)\right)} \\
                    &+ \sum_{s_i \in S_{a}}{\min_{s_j \in S_{a}, s_{j}\ne s_{i}}}{\mathrm{dist} \left(
                        F\left(s_{i}\right),   F\left(s_{j}\right)\right)} \\
                    &- \varepsilon_a,
                \end{aligned}
            \end{equation}
            where $\varepsilon_a$ is a constant, which is the sum of the minimum distance reductions of the elements in the original $S_{a}$ after $\alpha$ is added. Then, we have:
            \begin{equation}
                s_{\mathrm{eff.}} \left ( S_{a} \cup \{\alpha\} \right ) - s_{\mathrm{eff.}} \left ( S_{a} \right ) = \min_{s_i \in S_{a}}{\mathrm{dist} \left(
                    F\left(\alpha\right),   F\left(s_i\right)\right)} - \varepsilon_a,
            \end{equation}
            and in the same way,
            \begin{equation}
                s_{\mathrm{eff.}} \left ( S_{b} \cup \{\alpha\} \right ) - s_{\mathrm{eff.}} \left ( S_{b} \right ) = \min_{s_i \in S_{b}}{\mathrm{dist} \left(
                    F\left(\alpha\right),   F\left(s_i\right)\right)} - \varepsilon_b,
            \end{equation}
            since $S_{a} \subseteq S_{b}$, the minimum distance between alpha and elements in $S_{b} \setminus S_{a}$ may be smaller than the minimum distance between alpha and elements in $S_{a}$, thus,
            \begin{equation*}
                \min_{s_i \in S_{a}}{\mathrm{dist} \left(
                    F\left(\alpha\right),   F\left(s_i\right)\right)} \ge \min_{s_i \in S_{b}}{\mathrm{dist} \left(
                    F\left(\alpha\right),   F\left(s_i\right)\right)},
            \end{equation*}
            since there are more elements in $S_{b}$ than in $S_{a}$, more elements in $S_b$ have the shortest distance from $\alpha$, that, $\varepsilon_b \ge \varepsilon_a$. Therefore, we have:
            \begin{equation}\label{proof_eq_r_score}
                s_{\mathrm{eff.}} \left ( S_{a} \cup \{\alpha\} \right ) - s_{\mathrm{eff.}} \left ( S_{a} \right ) \ge s_{\mathrm{eff.}} \left ( S_{b} \cup \{\alpha\} \right ) - s_{\mathrm{eff.}} \left ( S_{b} \right ).
            \end{equation}

            Combining Eq.\ref{proof_eq_consistency_score}, \ref{proof_eq_collaboration_score}, ~\ref{proof_eq_confidence_score}, and \ref{proof_eq_r_score} we can get:
            \begin{equation}
                \mathcal{F}\left(S_{a} \cup \{ \alpha \}\right) - \mathcal{F}\left(S_{a} \right) \ge \mathcal{F}\left(S_{b} \cup \{ \alpha \}\right) - \mathcal{F}\left(S_{b} \right),
            \end{equation}
            hence, we can prove that Eq.~\ref{submodular_function} is a submodular function.
        \end{proof}

        \subsection{Proof of Lemma~\ref{lemma_monotonically} (Monotonically non-decreasing)}\label{proof_of_monotonically_lemma}

        \begin{proof}
            Consider a subset $S$, given an element $\alpha$, assuming that $\alpha$ is contributing to interpretation. The necessary and sufficient conditions for the function $\mathcal{F}(\cdot)$ to satisfy the property of monotonically non-decreasing is:
            \begin{equation}
                \mathcal{F}\left(S \cup \{ \alpha \}\right) - \mathcal{F}\left(S \right) > 0,
            \end{equation}
            where, for Eq.~\ref{consistency_function}, assuming that the searched $\alpha$ is valid,
            \begin{equation}\label{monotonically_consistency}
                s_{\mathrm{cons.}} \left ( S + \alpha, \boldsymbol{f}_{s} \right ) - s_{\mathrm{cons.}} \left ( S, \boldsymbol{f}_{s} \right )
                \simeq \frac{\nabla{G\left( S \right)} \cdot \alpha}{\| F(S)\|} > 0,
            \end{equation}
            likewise, for Eq.~\ref{collaboration_function},
            \begin{equation}\label{monotonically_collaboration}
                \begin{aligned}
                    &s_{\mathrm{colla.}} \left ( S + \alpha, \mathbf{I}, \boldsymbol{f}_{s} \right ) - s_{\mathrm{colla.}} \left ( S, \mathbf{I}, \boldsymbol{f}_{s} \right ) \\
                    \simeq & \frac{\nabla G\left(\mathbf{I} - S\right) \cdot \alpha}{\|F\left(\mathbf{I} - S \right)\|} > 0.
                \end{aligned}
            \end{equation}

            For Eq.~\ref{confidence_function}:
            \begin{equation}
                \begin{aligned}
                    &s_{\mathrm{conf.}} \left ( S + \alpha \right ) - s_{\mathrm{conf.}} \left ( S \right ) \\
                    \simeq& \frac{\sum_{i=1}^{C} G_i(S)\log{\frac{G_i(S)+\nabla G_i(S)\cdot \alpha}{G_i(S)} }}{\log{(C)}},
                \end{aligned}
            \end{equation}
            since $\alpha$ is contributing to interpretation, for the ground truth class $c$, $\nabla{G_c(S)} > 0$, and $\nabla G_c(S) > 0$; where for the term not belong to $c$, $G_i(S) \approx 0$, thus:
            \begin{equation}\label{monotonically_confidence}
                s_{\mathrm{conf.}} \left ( S + \alpha \right ) - s_{\mathrm{conf.}} \left ( S \right ) > 0.
            \end{equation}

            For Eq.~\ref{r_function},
            \begin{equation*}
                s_{\mathrm{eff.}} \left ( S \cup \{\alpha\} \right ) - s_{\mathrm{eff.}} \left ( S \right ) = \min_{s_i \in S}{\mathrm{dist} \left(
                    F\left(\alpha\right),   F\left(s_i\right)\right)} - \varepsilon,
            \end{equation*}
            since effective element $\alpha$ are selected as much as possible, the value $\varepsilon$ will be small,
            \begin{equation}\label{monotonically_r}
                s_{\mathrm{eff.}} \left ( S \cup \{\alpha\} \right ) - s_{\mathrm{eff.}} \left ( S \right ) \simeq \min_{s_i \in S}{\mathrm{dist} \left(
                    F\left(\alpha\right),   F\left(s_i\right)\right)} > 0.
            \end{equation}

            \begin{figure*}[!htbp]
                \centering
                \includegraphics[width = 0.95\textwidth]{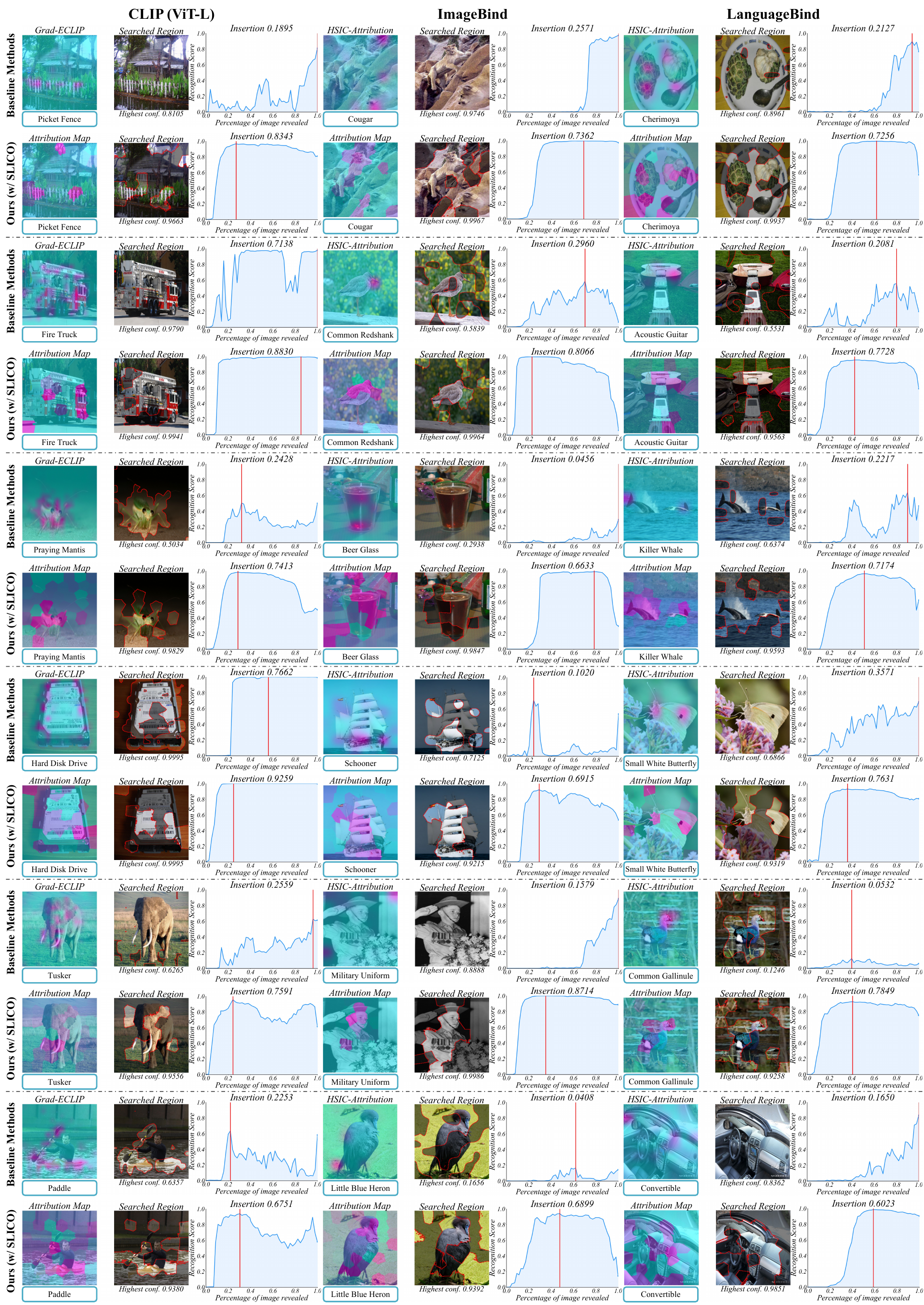}
                \caption{
                    Additional interpretation visualization for different multimodal foundation models on the ImageNet dataset.
                }
                \label{vis:additional-multimodal}
            \end{figure*}

            Combining Eq.~\ref{monotonically_consistency}, \ref{monotonically_collaboration}, \ref{monotonically_confidence}, and \ref{monotonically_r} we can get:
            \begin{equation}
                \mathcal{F}\left(S \cup \{ \alpha \}\right) - \mathcal{F}\left(S \right) > 0,
            \end{equation}
            hence, we can prove that Eq.~\ref{submodular_function} is monotonically non-decreasing.
        \end{proof}

        \subsection{Proof of Theorem~\ref{theorem:bi} (Bidirectional greedy search optimality bound)}\label{theorem:bi-proof}

        \begin{proof}

            Suppose the optimal solution is $S^{\ast}$, and $1-\epsilon$ is the probability that $S_d$ overlaps with $S^{\ast}$. Therefore, based on the reasoning of Mirzasoleiman \textit{et al.}~\cite{mirzasoleiman2015lazier}, solution $S$ can be obtained, which satisfies the following approximate guarantee:
            \begin{equation}
                \mathcal{F}(S) \ge \left(1 - \frac{1}{e} - \epsilon \right) \mathcal{F}(S^{\ast}),
            \end{equation}
            where, $\epsilon$ is affected by $n_{p}$, because the larger $n_{p}$ is, the more likely $S_{p}$ will contain the least important samples, thus not affecting the missing of important samples in $S_{d} = V \backslash (S_{\mathrm{forward}} \cup S_{\mathrm{reverse}})$. By selecting an appropriate $n_p$ value, we can accelerate the attribution process while maintaining the faithfulness of the attribution graph.

        \end{proof}

        \section{Additional visualizations}

        In this section, we present more interpretable visualizations of our results and compare them with those from other methods.

        \subsection{Additional Visualization on Foundation Model}\label{supp:vis-multimodal}

        Fig.~\ref{vis:additional-multimodal} presents more interpretable attribution results for three multimodal foundation models (CLIP~\cite{radford2021learning}, ImageBind~\cite{girdhar2023imagebind}, and LanguageBind~\cite{zhu2024languagebind}) with ViT backbone on the ImageNet dataset.

        Fig.~\ref{vis:clip-resnet101} presents interpretable attribution results for CLIP (ResNet-101)~\cite{radford2021learning} on the ImageNet dataset. We compare with RISE~\cite{petsiuk2018rise} and HSIC-Attribution~\cite{novello2022making} methods, showing the advantages of our method.

        \begin{figure*}[!htbp]
            \centering
            \begin{overpic}[width=\textwidth,tics=8]{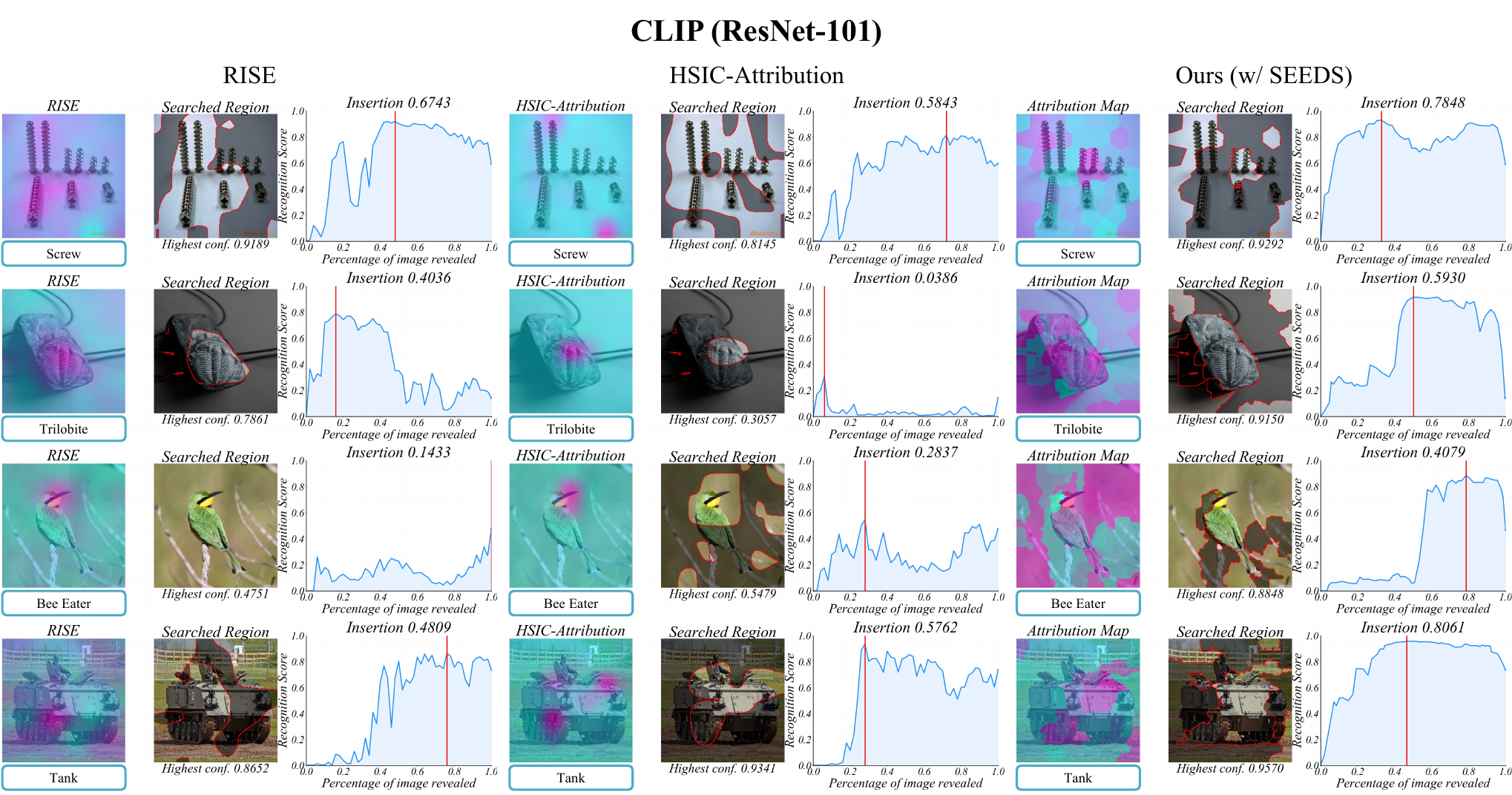}
                \put (18.4,47.1) {\footnotesize \cite{petsiuk2018rise}}
                \put (56,47.1) {\footnotesize \cite{novello2022making}}
            \end{overpic}
            \caption{Additional interpretation visualization for CLIP (ResNet-101) on the ImageNet dataset.}
            \label{vis:clip-resnet101}
        \end{figure*}

        \begin{figure*}[!htbp]
            \centering
            \begin{overpic}[width=\textwidth,tics=8]{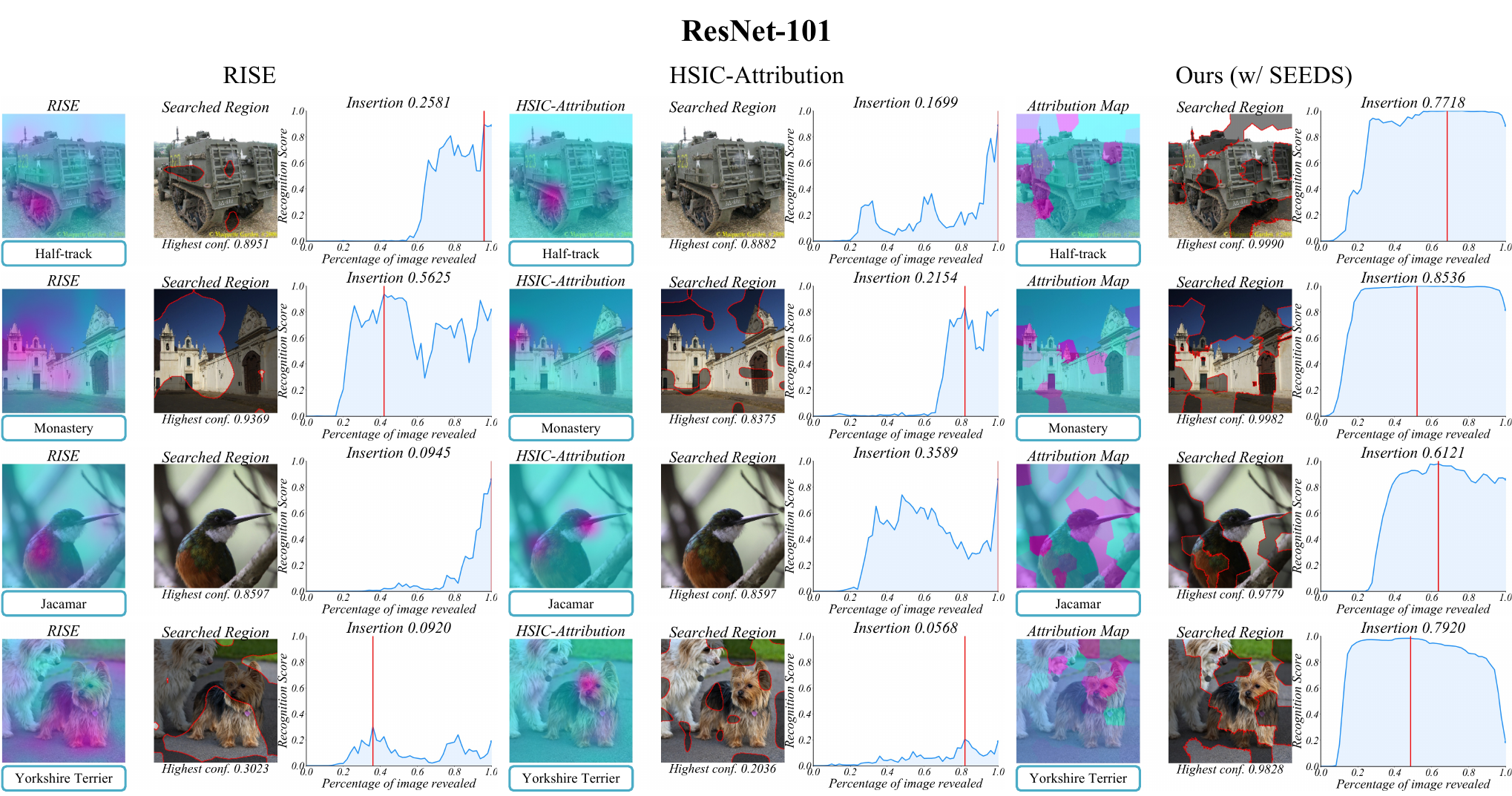}
                \put (18.4,47.28) {\footnotesize \cite{petsiuk2018rise}}
                \put (56,47.2) {\footnotesize \cite{novello2022making}}
            \end{overpic}
            \caption{Additional interpretation visualization for single-modal ResNet-101 on the ImageNet dataset.}
            \label{vis:resnet101}
        \end{figure*}

        \begin{figure*}[!htbp]
            \centering
            \begin{overpic}[width=\textwidth,tics=8]{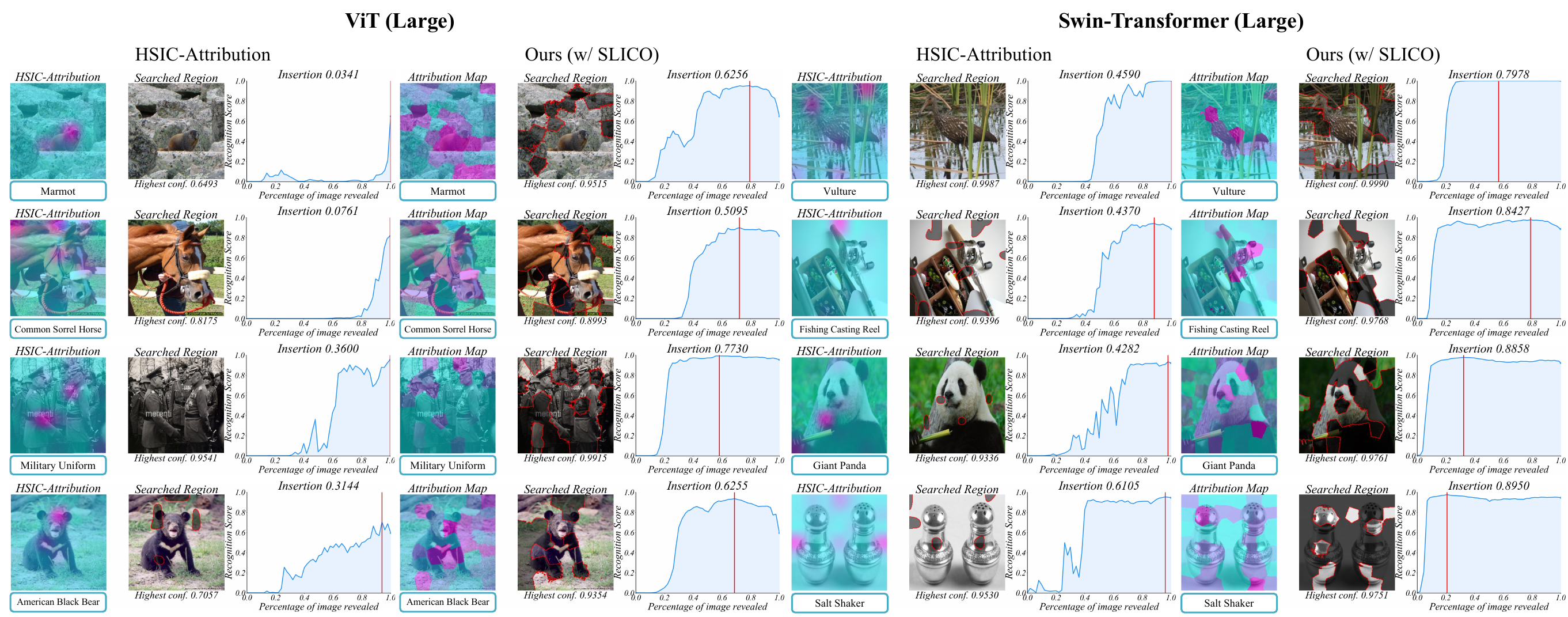}
                \put (17.2,35.5) {\scriptsize \cite{novello2022making}}
                \put (67,35.5) {\scriptsize \cite{novello2022making}}
            \end{overpic}
            \caption{Additional interpretation visualization for single-modal vision transformer (Large) and swin-transformer (Large) on the ImageNet dataset.}
            \label{vis:vit}
        \end{figure*}

        \begin{figure*}[!htbp]
            \centering
            \begin{overpic}[width=\textwidth,tics=8]{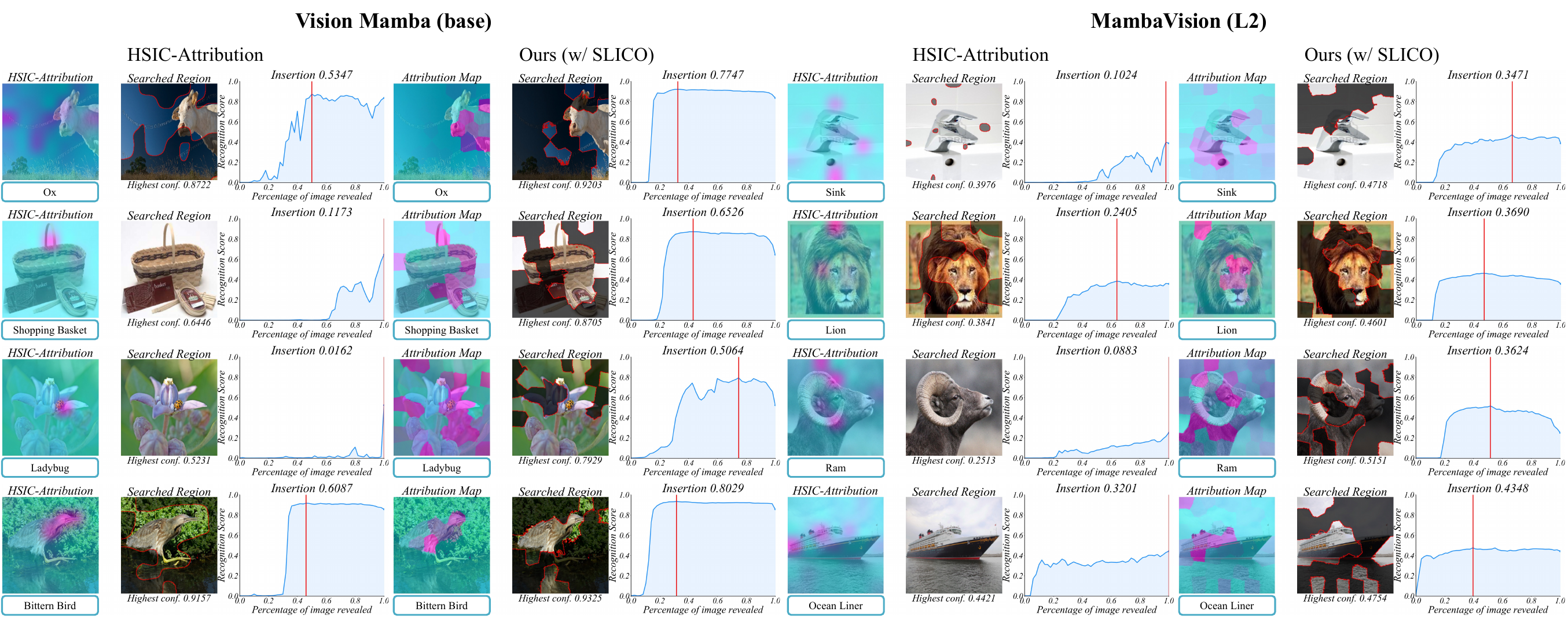}
                \put (16.8,35.6) {\scriptsize \cite{novello2022making}}
                \put (67,35.6) {\scriptsize \cite{novello2022making}}
            \end{overpic}
            \caption{Additional interpretation visualization for single-modal vision mamba (base) and mambavision (L2) on the ImageNet dataset.}
            \label{vis:mamba}
        \end{figure*}

        \subsection{Visualization on ResNet-101}\label{supp:vis-resnet}

        Fig.~\ref{vis:resnet101} presents interpretable attribution results for single-modal ResNet-101~\cite{he2016deep} on the ImageNet dataset. We compare with RISE~\cite{petsiuk2018rise} and HSIC-Attribution~\cite{novello2022making} methods, showing the advantages of our method.

        \subsection{Visualization on Vision Transformer}\label{supp:vis-vit}

        Fig.~\ref{vis:vit} presents more interpretable attribution results for single-modal Vision Transformer and Swin Transformer~\cite{liu2021swin} models on the ImageNet dataset. We compare with HSIC-Attribution~\cite{novello2022making} methods, showing the advantages of our method.

        \subsection{Visualization on Vision Mamba}\label{supp:vis-vim}

        Fig.~\ref{vis:mamba} presents more interpretable attribution results for single-modal Vision Mamba (base)~\cite{zhu2024vision} and MambaVision (L2)~\cite{hatamizadeh2024mambavision} models on the ImageNet dataset. We compare with HSIC-Attribution~\cite{novello2022making} methods, showing the advantages of our method.

        \section{Visualization of Model Mistake on CUB-200-2011}\label{cub_debug_vis}

        We demonstrate the ability of our method to attribution images that are incorrectly predicted by the model on the CUB-200-2011 dataset. Fig.~\ref{vis:cub_vs_previous} shows the comparison between our method and HSIC-Attribution~\cite{novello2022making} and the previous version~\cite{chen2024less} on ResNet-101 model. It can be found that the current version of the method can have a flatter Insertion curve, which means that the attribution effect is better. In addition, we also show the attribution effects of different models on HSIC-Attribution and our method in Fig.~\ref{vis:cub_debug}. We include the sub-region division strategy based on superpixel segmentation and SAM.

        \begin{figure*}[!htbp]
            \centering
            \begin{overpic}[width=\textwidth,tics=8]{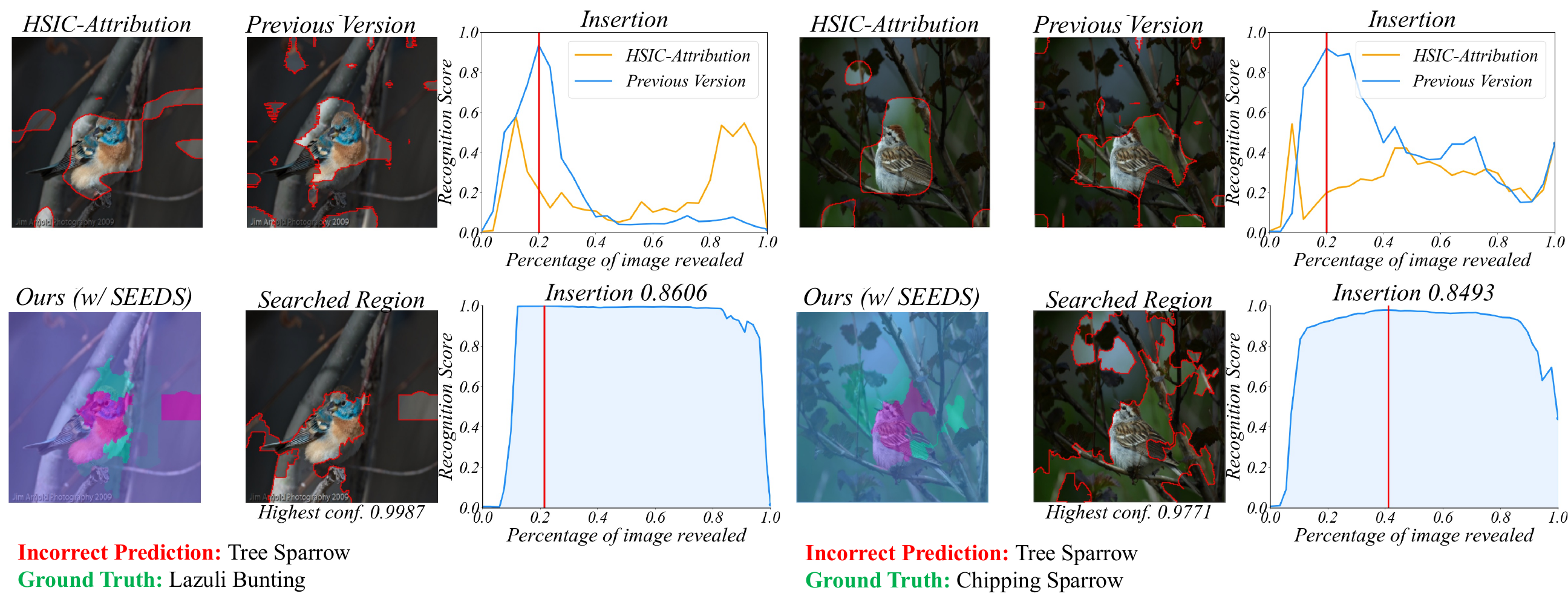}
                \put (12,36.7) {\footnotesize \cite{novello2022making}}
                \put (26.4,36.7) {\footnotesize \cite{chen2024less}}
                \put (62.2,36.7) {\footnotesize \cite{novello2022making}}
                \put (76.6,36.7) {\footnotesize \cite{chen2024less}}
            \end{overpic}
            \caption{Visualization of the method for discovering what causes model prediction errors on the CUB-200-2011 dataset. The first row shows the results of the HSIC-Attribution method and the previous version method, and the second row shows the results of the current version method. The Insertion curve shows the correlation between the searched region and ground truth class prediction confidence. The highlighted region matches the searched region indicated by the red line in the curve, and the dark region is the error cause identified by the method.}
            \label{vis:cub_vs_previous}
        \end{figure*}

        \begin{figure*}[!t]
            \centering
            \begin{overpic}[width=\textwidth,tics=8]{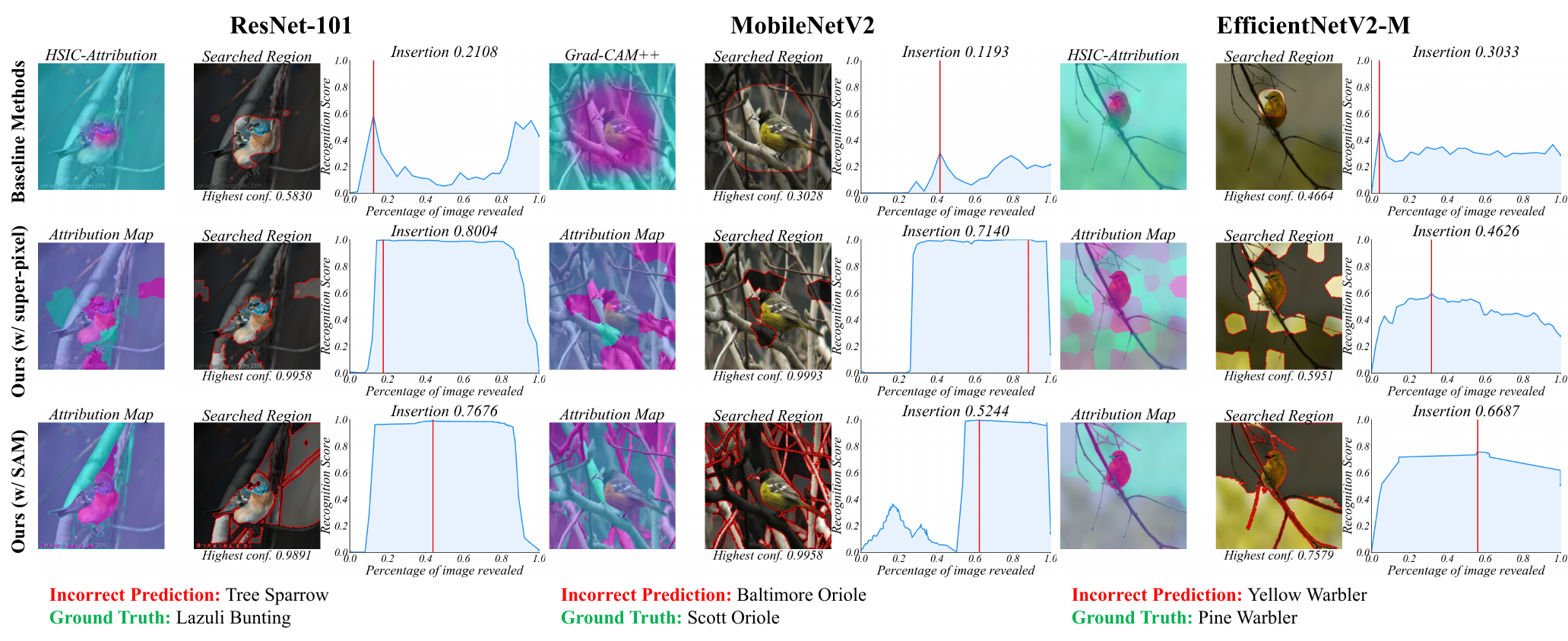}
                \put (10,37.1) {\tiny \cite{novello2022making}}
                \put (42,37.1) {\tiny \cite{chattopadhay2018grad}}
                \put (75.2,37.1) {\tiny \cite{novello2022making}}
            \end{overpic}
            \caption{Visualization of the method for discovering what causes model prediction errors on the CUB-200-2011 dataset. The first row shows the results of the HSIC-Attribution method and the previous version method, and the second row shows the results of the current version method. The Insertion curve shows the correlation between the searched region and ground truth class prediction confidence. The highlighted region matches the searched region indicated by the red line in the curve, and the dark region is the error cause identified by the method.}
            \label{vis:cub_debug}
        \end{figure*}

        \section{Sub-Region Division Algorithm based on Segment Anything Model}~\label{SAM_based_division}

        Algorithm~\ref{alg:sam_division} demonstrates how to use Segment Anything (SAM)~\cite{kirillov2023segment} for sub-region division. As SAM produces overlapping sub-regions, we further split these overlapping regions to create more fine-grained sub-regions.

        \begin{algorithm}[]
            \caption{Sub-region division algorithm based on Segment Anything Model~\cite{kirillov2023segment}}\label{alg:sam_division}
            \KwIn{Image $\mathbf{I} \in \mathbb{R}^{w \times h \times 3}$, Segment Anything Model $\texttt{SAM}(\cdot)$, delete threshold $\delta$.}
            \KwOut{An set $V$.}
            $V \gets \varnothing$ \Comment*[r]{Initiate the operation of sub-region division}
            $V_{M} = \texttt{SAM}(\mathbf{I})$\;
            \For{$i=1$ \KwTo $| V_{M} | - 1$}{
                \For{$j=i+1$ \KwTo $| V_{M} |$}{
                    $\mathbf{M}_{\mathrm{inters.}} = V_{M}[i] \circ V_{M}[j]$\;
                    \uIf {$\mathrm{sum}\left(\mathbf{M}_{\mathrm{inters.}}\right) \ne 0$}{
                        \uIf {$\mathrm{sum}\left(V_{M}[i]\right) > \mathrm{sum}\left(V_{M}[j]\right)$}{
                            $V_{M}[i] = V_{M}[i] - \mathbf{M}_{\mathrm{inters.}}$\;
                        }
                        \uElse{
                            $V_{M}[j] = V_{M}[j] - \mathbf{M}_{\mathrm{inters.}}$\;
                        }
                    }
                }
            }
            \For{$i=1$ \KwTo $| V_{M} |$}{
                \uIf {$\mathrm{mean}\left( V_{M}[i] \right) > \delta$}{
                    $V \gets V \cup \{V_{M}[i] \circ \mathbf{I} \}$\;
                }
            }
            $\mathbf{I}_{r} = \mathbf{I}_{r}$\Comment*[r]{A region that is not divided by $\texttt{SAM}(\cdot)$.}
            \For{$i=1$ \KwTo $| V_{M} |$}{
                $\mathbf{I}_{r} = \mathbf{I}_{r} - V_{M}[i]$\;
            }
            $V \gets V \cup \{\mathbf{I}_{r}\}$\;
            \Return $V$
        \end{algorithm}
    \end{appendices}

\end{document}